\documentclass{article}
\usepackage[preprint]{icml2026}
\usepackage[textsize=tiny]{todonotes}

\usepackage{amsthm}

\newtheorem{lemma}{Lemma}
\usepackage{amssymb}
\usepackage{graphicx}
\usepackage[font=small,labelfont=bf]{caption}
\usepackage{subcaption}
\usepackage{float}
\usepackage{tcolorbox}
\usepackage{subcaption}

\usepackage{pifont,booktabs}
\usepackage{enumitem}

\usepackage{pifont}
\newcommand{\cmark}{\ding{51}}
\newcommand{\xmark}{\ding{55}}

\usepackage{graphicx}
\usepackage[font=small,labelfont=bf]{caption}

\usepackage{float}    
\usepackage{tcolorbox}
\usepackage{makecell}

\usepackage[utf8]{inputenc} 
\usepackage[T1]{fontenc}    
\usepackage{hyperref}       
\usepackage{url}            
\usepackage{booktabs}       
\usepackage{amsfonts}       
\usepackage{nicefrac}       
\usepackage{microtype}      
\usepackage{xcolor}         
\usepackage{multirow}
\usepackage{amsmath, amsthm}
\usepackage{todonotes}

\usepackage{listings}

\definecolor{codegray}{gray}{0.95}
\definecolor{codegreen}{rgb}{0,0.5,0}
\definecolor{codeblue}{rgb}{0,0,0.6}

\usepackage{listings}
\usepackage{xcolor}

\definecolor{LightGray}{gray}{0.95}
\definecolor{CodeBlue}{RGB}{0,0,180}
\definecolor{CodeGreen}{RGB}{0,120,0}
\definecolor{CodeRed}{RGB}{163,21,21}
\definecolor{CodeGray}{gray}{0.4}

\lstdefinestyle{arxivpython}{
  language=Python,
  backgroundcolor=\color{LightGray},
  basicstyle=\ttfamily\footnotesize,
  keywordstyle=\color{CodeBlue}\bfseries,
  commentstyle=\color{CodeGreen}\itshape,
  stringstyle=\color{CodeRed},
  numberstyle=\tiny\color{CodeGray},
  numbers=left,
  numbersep=8pt,
  frame=lines,
  rulecolor=\color{black},
  breaklines=true,
  breakatwhitespace=true,
  showstringspaces=false,
  tabsize=2,
  captionpos=b
}

\usepackage{amssymb}
\usepackage{graphicx}
\usepackage[font=small,labelfont=bf]{caption}
\usepackage{subcaption}
\usepackage{float}
\usepackage{tcolorbox}
\usepackage{pifont,booktabs}

\usepackage{adjustbox}   

\usepackage{siunitx}

\begin{document}
\setcounter{tocdepth}{-1}
\addtocontents{toc}{\protect\setcounter{tocdepth}{-1}}

\twocolumn[ 
  \icmltitle{Activation-Space Uncertainty Quantification for Pretrained Networks}

  \icmlsetsymbol{equal}{*}

  \begin{icmlauthorlist}
    \icmlauthor{Richard Bergna}{cam}
    \icmlauthor{Stefan Depeweg}{siemens}
    \icmlauthor{Sergio Calvo-Ordoñez}{oxford}
    \icmlauthor{Jonathan Plenk}{oxford}
    \icmlauthor{Alvaro Cartea}{oxford}
    \icmlauthor{Jose Miguel Hernández-Lobato}{cam}
  \end{icmlauthorlist}

  \icmlaffiliation{cam}{Department of Engineering, University of Cambridge, Cambridge, UK}
  \icmlaffiliation{siemens}{Siemens AG, Munich, Germany}
  \icmlaffiliation{oxford}{Mathematical Institute and Oxford-Man Institute, University of Oxford, Oxford, UK}

  \icmlcorrespondingauthor{Richard Bergna}{rsb63@cam.ac.uk}

  \icmlkeywords{Uncertainty Quantification, Gaussian Processes, Post-Hoc Methods, Deep Learning}

  \vskip 0.2in
]

\printAffiliationsAndNotice{}

\begin{abstract}

Reliable uncertainty estimates are crucial for deploying pretrained models; yet, many strong methods for quantifying uncertainty require retraining, Monte Carlo sampling, or expensive second-order computations and may alter a frozen backbone’s predictions. To address this, we introduce \textbf{Gaussian Process Activations (GAPA)}, a post-hoc method that shifts Bayesian modeling from weights to activations. GAPA replaces standard nonlinearities with Gaussian-process activations whose posterior mean \emph{exactly} matches the original activation, preserving the backbone's point predictions by construction while providing closed-form epistemic variances in activation space. To scale to modern architectures, we use a sparse variational inducing-point approximation over cached training activations, combined with \emph{local} $k$-nearest-neighbor subset conditioning, enabling deterministic single-pass uncertainty propagation without sampling, backpropagation, or second-order information. Across regression, classification, image segmentation, and language modeling, GAPA matches or outperforms strong post-hoc baselines in calibration and out-of-distribution detection while remaining efficient at test time.

\end{abstract}
\section{Introduction}

Reliable uncertainty quantification (UQ) is crucial in risk-sensitive deployments, yet many effective research methods remain impractical in modern settings \cite{abdar2021review}. Weight-space Bayesian approaches (e.g., variational BNNs) often require retraining, labeled data or multi-sample evaluation, ensembles multiply compute, and Laplace-style methods rely on curvature estimates that scale poorly as models and output spaces grow \cite{blundell2015weight, mackay1992practical,bergna2024uncertainty, gal2016dropout, lakshminarayanan2017simple, ritter2018scalable, ortega2023variational}. The gap is most pronounced for pretrained backbones, where weights are not expected to be modified and test-time budgets favor \emph{single-pass} inference. In this regime, a practical post-hoc method should be single-pass, prediction-preserving, epistemic, and scalable to foundation models (Table~\ref{tab:method_comparison}).

\begin{table}[ht]
\centering
\caption{The gap in uncertainty quantification methods.}

\setlength{\tabcolsep}{3pt}
\renewcommand{\arraystretch}{1.05}

\resizebox{\columnwidth}{!}{%
\begin{tabular}{@{}lccccc@{}}
\toprule

\textbf{Method}
& \makecell{\textbf{Post-hoc}} 
& \makecell{\textbf{Single}\\\textbf{Pass}} 
& \makecell{\textbf{Preserves}\\\textbf{Mean}} 
& \makecell{\textbf{Epistemic}\\\textbf{UQ}} 
& \makecell{\textbf{Foundation}\\\textbf{Ready}} \\

\midrule
BNNs & \xmark & \xmark & \xmark & \cmark & \xmark \\
Ensembles & \xmark & \xmark & --- & \cmark & \xmark \\
MC Dropout & \xmark & \xmark & \xmark & \cmark & \xmark \\
Laplace & \cmark & \xmark & \xmark & \cmark & \xmark \\
LL-Laplace & \cmark & \cmark & \xmark & \cmark & \xmark \\
Temp. Scaling & \cmark & \cmark & \xmark & \xmark & \cmark \\
\midrule
\textbf{GAPA (Ours)} & \cmark & \cmark & \cmark & \cmark & \cmark \\
\bottomrule
\end{tabular}%
}

\vspace{2pt}
\label{tab:method_comparison}
\end{table}

We address this by shifting uncertainty modeling from weights to \emph{activations}. We introduce \textbf{Gaussian Process Activations (GAPA)}, a drop-in module that replaces deterministic activations with Gaussian-process activations whose \emph{posterior mean matches the original nonlinearity}, thereby preserving the frozen backbone’s point predictions by construction while producing activation-space epistemic variances (Figure~\ref{fig:baselines_classification}). For scalability, GAPA conditions on cached training activations using a sparse approximation (compression + local $k$NN conditioning), and propagates the resulting uncertainty through the network via deterministic variance-propagation rules, enabling \emph{single-pass} predictive uncertainty. Our contributions are

\begin{enumerate}[leftmargin=*, topsep=2pt, itemsep=1pt, parsep=0pt, partopsep=0pt]
\item \textbf{Mean-preserving post-hoc UQ:} GAPA provides epistemic uncertainty for pretrained networks while preserving point predictions.
\item \textbf{Scalable conditioning:} we combine induction points and local $k$-NN conditioning for practical inference at modern scales.
\item \textbf{Deterministic propagation:} we derive single-pass variance propagation from activation space to output space.
\item \textbf{Empirical validation:} across regression, classification, segmentation, and language modelling, GAPA improves calibration and OOD detection with fast inference time.
\end{enumerate}

\begin{figure*}[t]
    \centering
    \includegraphics[width=0.95\textwidth]{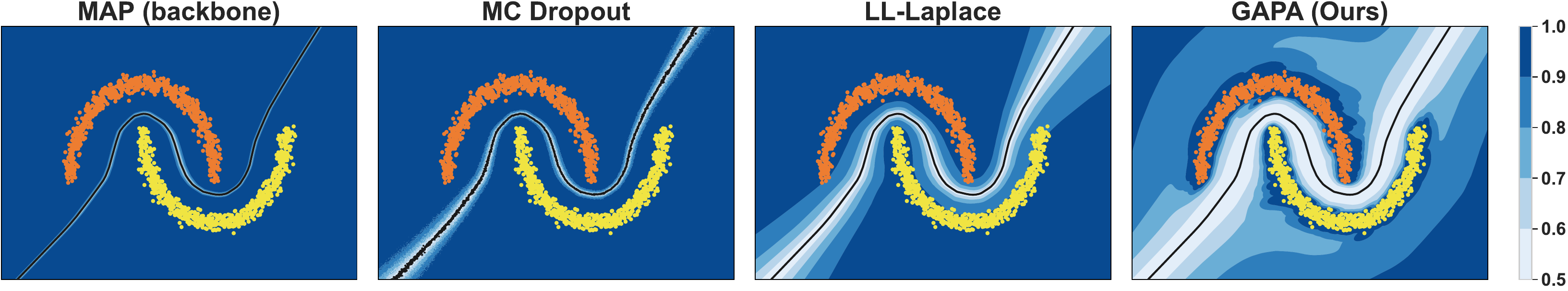}
    \caption{Comparison of uncertainty quantification methods on a toy binary classification 
task. \textbf{Left to right}: MAP (deterministic backbone), MC Dropout, Last-Layer Laplace, 
and GAPA (ours). Background shading indicates predictive confidence (darker = more confident); 
orange/yellow points show the two classes. \textbf{Key observation}: GAPA preserves the 
backbone's decision boundary (black line) exactly while adding  epistemic 
uncertainty that grows smoothly away from training data.}
    \label{fig:baselines_classification}
\end{figure*}

\section{Model Proposition}

\begin{figure}[t]
  \centering
  \includegraphics[width=\linewidth]{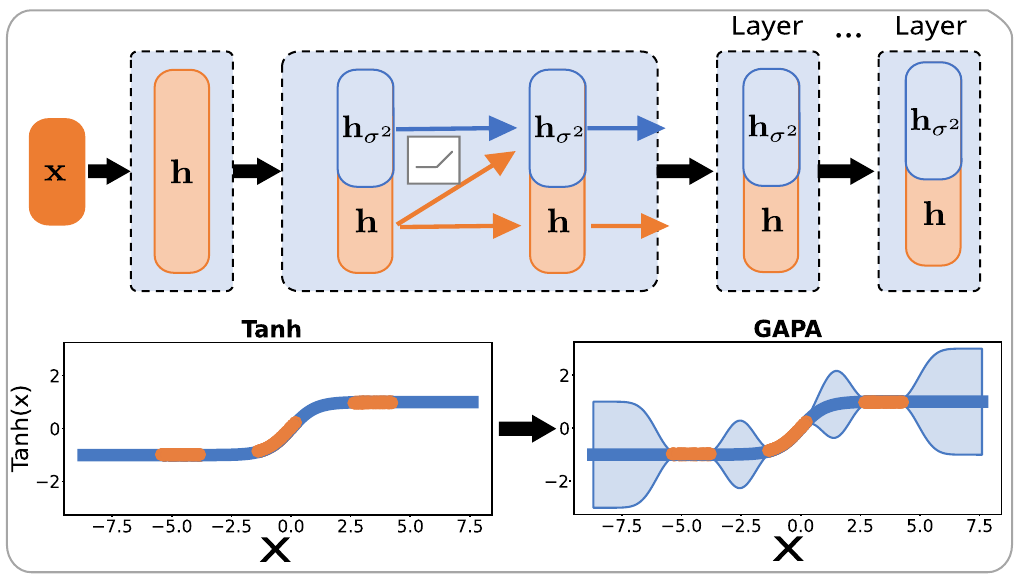}
\caption{GAPA overview. \textbf{Top:} GAPA leaves the network’s point predictions unchanged (mean-preserving activations) while propagating an additional epistemic variance signal to the output. \textbf{Bottom left:} deterministic $\tanh$ activation; orange points denote cached training activations. \textbf{Bottom right:} GAPA-$\tanh$, whose posterior mean matches $\tanh$ exactly; the shaded region shows $\pm 2$ standard deviations.}

  \label{fig:arch_gapa_layer}
\end{figure}

At a high level, GAPA augments a frozen neural network with activation-space uncertainty while strictly preserving its original deterministic predictions. Figure~\ref{fig:arch_gapa_layer} provides a structural overview. In the following sections, we formalize our uncertainty perspective (Sec.~\ref{sec:uq_perspective}), define the GP activation layer (Sec.~\ref{sec:gapa1}), introduce a scalable inference mechanism (Sec.~\ref{subsec:scalable}), and derive rules for single-pass variance propagation through deep architectures (Sec.~\ref{subsec:propagating_variance}).

\paragraph{Method pipeline.}
GAPA operates in two phases.
\textbf{(i) Offline collection:} we run a single forward pass of a reference input training set through the pre-trained backbone and cachepre-activations at selected layers.
Optionally, we compress the cache into a smaller inducing set via $k$-means, yielding inducing inputs that admit a variational inducing-point interpretation in the sense of~\citet{titsias2009variational}.
\textbf{(ii) Test-time inference:} we replace deterministic activations with Gaussian-process (GP) activations that return activation-space epistemic variances.
These variances are then propagated forward through the remaining frozen network using closed-form variance propagation rules, enabling deterministic single-pass predictive uncertainty without sampling, backpropagation, or retraining, while preserving the backbone's point predictions.

\subsection{Uncertainty Modeling Perspective}
\label{sec:uq_perspective}

We position GAPA by stating \emph{what is random} in the predictive distribution.
Let $\mathcal D=\{(x_n,y_n)\}_{n=1}^N$ denote the training data, and let $x$ be a test input with corresponding output $y$. We denote by $f(x)$ the latent predictor (e.g., a neural network or GP) evaluated at $x$. Predictive uncertainty is obtained by marginalizing the latent predictor:
\begin{equation}
p(y \mid x, \mathcal{D})
=
\int p\!\bigl(y \mid f(x)\bigr)\, p\!\bigl(f(x) \mid x,\mathcal{D}\bigr)\, d f(x).
\label{eq:Bayesian_modeling}
\end{equation}


\paragraph{Weight-space uncertainty.}
Weight-space methods (BNNs, Laplace) parameterize $f(x)=f(x;\mathbf w)$ and infer a posterior
$\mathbf w \sim p(\mathbf w \mid \mathcal D)$, inducing epistemic uncertainty via variability across plausible weights.

\paragraph{Activation-space uncertainty (GAPA).}
GAPA keeps the frozen weights deterministic and instead places epistemic uncertainty on the hidden-layer
\emph{activation} (e.g., ReLU outputs). For a chosen layer $\ell$, write
\[
f(x)=h_\ell\!\big(\boldsymbol\phi(\mathbf z_\ell(x))\big),
\]
where $\mathbf z_\ell(x)$ are layer-$\ell$ pre-activations, $\boldsymbol\phi$ is an element-wise nonlinearity, and
$h_\ell(\cdot)$ denotes the remaining frozen network.
We replace $\boldsymbol\phi$ with a GP activation $\mathbf g_\ell$ such that its posterior mean matches the original
activation, $\boldsymbol\mu_\ell(\mathbf z)=\boldsymbol\phi(\mathbf z)$ (Sec.~\ref{sec:gapa1}), thereby preserving the
backbone point prediction exactly.
Let $\mathbf a_\ell := \mathbf g_\ell(\mathbf z_\ell(x))$ be the resulting random activation. Then
\begin{equation}
p(y \mid x, \mathcal{D})
=
\int p\!\bigl(y \mid h_\ell(\mathbf a_\ell)\bigr)\,
p(\mathbf a_\ell \mid \mathbf z_\ell(x), \mathcal{D})\, d\mathbf a_\ell .
\label{eq:gapa_marginal}
\end{equation}
Uncertainty grows as test-time pre-activations move away from regions supported by the training data.

\subsection{Gaussian Process Activation Function}\label{sec:gapa1}
We define the core \textsc{GAPA} module: a drop-in replacement for a deterministic activation that (i) preserves the frozen backbone's point predictions exactly and (ii) returns a distance-aware epistemic variance in activation space.
\paragraph{Setup.}
Consider a frozen network and layer $\ell$ of width $d_\ell$.
Let
\[
\mathbf z_\ell = \mathbf W_\ell \mathbf h_{\ell-1} + \mathbf b_\ell \in \mathbb R^{d_\ell},
\qquad
\mathbf h_\ell = \boldsymbol\phi(\mathbf z_\ell),
\]
where $\boldsymbol\phi(\mathbf z)=(\phi(z_1),\dots,\phi(z_{d_\ell}))^\top$ is an element-wise nonlinearity (e.g.\ ReLU).

\paragraph{GP activation.}
GAPA replaces the deterministic activation $\boldsymbol\phi(\cdot)$ with a
vector-valued Gaussian process (GP)
\[
\mathbf g_\ell(\cdot):\mathbb R^{d_\ell}\to\mathbb R^{d_\ell},
\qquad
\mathbf g_\ell(\cdot)\sim\mathcal{GP}\!\big(\boldsymbol m_\ell(\cdot),\,\mathbf K_\ell(\cdot,\cdot)\big).
\]
For any input $\mathbf z$, this GP induces a \emph{Gaussian marginal distribution}
over the activation vector
\[
\mathbf a_\ell := \mathbf g_\ell(\mathbf z),
\]
with posterior mean $\boldsymbol\mu_\ell(\mathbf z)$ and covariance
$\mathbf K_\ell(\mathbf z,\mathbf z)$.

For scalability, we use a diagonal output kernel,
\[
\mathbf K_\ell(\mathbf z,\mathbf z')
=
\mathrm{diag}\!\big(k_{\ell,1}(\mathbf z,\mathbf z'),\dots,k_{\ell,d_\ell}(\mathbf z,\mathbf z')\big),
\]
which is equivalent to modeling each activation dimension as an independent scalar GP.
Importantly, GAPA never samples from these distributions: all uncertainty is propagated
analytically via closed-form moment propagation.


\paragraph{Data collation for the GP.} 
We run a single forward pass of the backbone's training data and cache the resulting pre-activations
\[
\tilde{\mathbf Z}_\ell=\{\tilde{\mathbf z}^{(m)}_\ell\}_{m=1}^M,\qquad \tilde{\mathbf z}^{(m)}_\ell\in\mathbb R^{d_\ell}.
\]
At each cached input we form \emph{noiseless pseudo-observations} by evaluating the original activation,
\[
\tilde{\mathbf y}^{(m)}_\ell \;=\; \boldsymbol\phi\!\big(\tilde{\mathbf z}^{(m)}_\ell\big)\in\mathbb R^{d_\ell},
\qquad m=1,\dots,M,
\]
and collect them as $\tilde{\mathbf Y}_\ell\in\mathbb R^{M\times d_\ell}$. We condition the GP on the dataset
\[
\mathcal D_\ell=\big\{(\tilde{\mathbf z}^{(m)}_\ell,\tilde{\mathbf y}^{(m)}_\ell)\big\}_{m=1}^M,
\]
using a small jitter/noise term $\sigma_n^2$ for numerical stability.

\paragraph{Mean preservation.}
We choose the GP prior mean to match the original activation,
\[
\boldsymbol m_\ell(\mathbf z)=\boldsymbol\phi(\mathbf z).
\]
Under standard GP regression, the posterior mean at $\mathbf z_\ell^\ast$ can be written as
\[
\boldsymbol\mu_\ell(\mathbf z_\ell^\ast)
=
\boldsymbol m_\ell(\mathbf z_\ell^\ast)
+
\mathbf A_\ell(\mathbf z_\ell^\ast)\,
\underbrace{\big(\tilde{\mathbf Y}_\ell-\boldsymbol m_\ell(\tilde{\mathbf Z}_\ell)\big)}_{=\,\mathbf 0},
\]
where $\mathbf A_\ell(\mathbf z_\ell^\ast)=\mathbf K_\ell(\mathbf z_\ell^\ast,\tilde{\mathbf Z}_\ell)
\big(\mathbf K_\ell(\tilde{\mathbf Z}_\ell,\tilde{\mathbf Z}_\ell)+\sigma_n^2 \mathbf I\big)^{-1}$.
Since $\tilde{\mathbf Y}_\ell=\boldsymbol\phi(\tilde{\mathbf Z}_\ell)=\boldsymbol m_\ell(\tilde{\mathbf Z}_\ell)$ by construction, the residual term is identically zero and hence
\[
\boldsymbol\mu_\ell(\mathbf z_\ell^\ast)=\boldsymbol m_\ell(\mathbf z_\ell^\ast)=\boldsymbol\phi(\mathbf z_\ell^\ast)
\qquad\text{for all }\mathbf z_\ell^\ast.
\]

Therefore, the \emph{GAPA activation} has posterior mean equal to the original activation function.
Substituting $\mathbf h_\ell=\mathbf g_\ell(\mathbf z_\ell)$ into a frozen network preserves the backbone's point predictions exactly.
The remaining posterior covariance of the GP activation, which quantifies epistemic uncertainty in activation space,
is specified next.

\paragraph{Posterior covariance.}
While the posterior mean is unchanged, the posterior covariance is non-zero and diagonal. For each neuron $i$,
\begin{equation*}
\begin{aligned}
[\boldsymbol\Sigma_\ell(\mathbf z_\ell^\ast)]_{ii}
&=
k_{\ell,i}(\mathbf z_\ell^\ast,\mathbf z_\ell^\ast)
-
\Big(
\mathbf k_{\ell,i}(\mathbf z_\ell^\ast)^\top \\
&\qquad
\big(\mathbf K_{\ell,i} + \sigma_n^2 \mathbf I\big)^{-1}
\mathbf k_{\ell,i}(\mathbf z_\ell^\ast)
\Big),
\end{aligned}
\end{equation*}
where $[\mathbf K_{\ell,i}]_{mn}=k_{\ell,i}(\tilde{\mathbf z}^{(m)}_\ell,\tilde{\mathbf z}^{(n)}_\ell)$ and
$[\mathbf k_{\ell,i}(\mathbf z_\ell^\ast)]_{m}=k_{\ell,i}(\mathbf z_\ell^\ast,\tilde{\mathbf z}^{(m)}_\ell)$.
Collecting all neuron-wise variances yields the layer covariance
\begin{equation}\label{eq:gapa_cov}
\boldsymbol\Sigma_\ell(\mathbf z_\ell^\ast)
=
\mathrm{diag}\!\big(
[\boldsymbol\Sigma_\ell(\mathbf z_\ell^\ast)]_{11},
\ldots,
[\boldsymbol\Sigma_\ell(\mathbf z_\ell^\ast)]_{d_\ell d_\ell}
\big).
\end{equation}

The resulting layer covariance satisfies $\boldsymbol\Sigma_\ell(\mathbf z_\ell^\ast) \in \mathbb R^{d_\ell \times d_\ell}$ and captures neuron-wise uncertainty that increases as $\mathbf z_\ell^\ast$ departs from the cached pre-activations.


\paragraph{Why diagonal covariance?}
We model neurons as conditionally independent (i.e., a diagonal output covariance) for tractability. A full multi-output covariance would require storing and propagating dense $d_\ell \times d_\ell$ matrices, which is prohibitive in memory and compute for modern wide networks.
The diagonal approximation is standard in scalable Bayesian models and is sufficient in our setting to capture activation-level epistemic uncertainty, as evidenced by our strong empirical results (Sec.~\ref{sec:results}).

\subsection{Local Inducing-Point Approximation}
\label{subsec:scalable}

At layer $\ell$, GAPA conditions $d_\ell$ independent scalar GPs on a cache of $N_\ell$ pre-activations
$\tilde{\mathbf Z}_\ell\in\mathbb R^{N_\ell\times d_\ell}$ obtained from a single offline forward pass of the
backbone’s training data (or a subset thereof) through the frozen network.
Even with a diagonal output kernel, exact GP conditioning in Eq.~\eqref{eq:gapa_cov} requires solving an
$N_\ell\times N_\ell$ linear system per neuron, yielding $\mathcal O(d_\ell N_\ell^3)$ time and
$\mathcal O(d_\ell N_\ell^2)$ memory, which is prohibitive for modern networks.
We therefore use a two-stage approximation:
(i) an \emph{offline} global inducing set that compresses the cache, and
(ii) \emph{test-time} \emph{local} conditioning on the $K$ nearest inducing points to each query pre-activation.

\paragraph{Stage 1: Inducing-point construction (offline).}
We construct inducing inputs $\mathbf Z_\ell\in\mathbb R^{M_\ell\times d_\ell}$ with $M_\ell\ll N_\ell$
by running $k$-means on the cached pre-activations $\tilde{\mathbf Z}_\ell$ and taking the $M_\ell$ centroids.
These inducing points provide a compressed representation of the training-data activation cache
used for scalable GP inference. A formal connection between this procedure and variational inducing-point GPs
is provided in Appendix~\ref{app:variational_inducing}, and Appendix~\ref{Appedix:n_inducing_points} studies sensitivity to the number of inducing points.

\paragraph{Stage 2: Local $K$-nearest-neighbour conditioning (test time).}
At test time, using all $M_\ell$ inducing points would require an $M_\ell\times M_\ell$ solve per query, which is still
expensive. We therefore adopt a local GP approximation \citep{gramacy2015local}: for each query pre-activation
$\mathbf z^\ast_\ell\in\mathbb R^{d_\ell}$, we form a small local subset of inducing inputs by $K$-nearest neighbours in
activation space.

Concretely, let $\mathcal N_K(\mathbf z^\ast_\ell)\subseteq\{1,\dots,M_\ell\}$ denote the indices of the $K$ nearest inducing points
to $\mathbf z^\ast_\ell$ under Euclidean distance, and define
\[
\mathbf Z_{\ell,K}(\mathbf z^\ast_\ell)
\;=\;
\mathbf Z_\ell^{\mathcal N_K(\mathbf z^\ast_\ell)}
\;\in\; \mathbb R^{K\times d_\ell},
\]
where $\mathbf Z_\ell^{\mathcal N_K(\cdot)}$ denotes the submatrix formed by selecting the corresponding rows of $\mathbf Z_\ell$. Neighbour retrieval is performed using FAISS \citep{douze2025faiss} with approximate nearest-neighbour search,
yielding sublinear query time  in practice. Appendix~\ref{app:knn_sweep} further studies sensitivity to $K$.


\paragraph{Approximate posterior covariance.}
Given the local inducing inputs $\mathbf Z_{\ell,K}(\mathbf z^\ast_\ell)\in\mathbb R^{K\times d_\ell}$,
we approximate the posterior variance of neuron $i\in\{1,\dots,d_\ell\}$ by applying the standard GP
conditional-variance formula restricted to this local subset:
\begin{equation}
\begin{aligned}
\sigma^2_{\ell,i}(\mathbf z^\ast_\ell)
\;&\approx\;
k_{\ell,i}(\mathbf z^\ast_\ell,\mathbf z^\ast_\ell) \\
&\quad-
\mathbf k_{\ell,i}(\mathbf z^\ast_\ell)^\top
\big(\mathbf K_{\ell,i} + \sigma_n^2 \mathbf I\big)^{-1}
\mathbf k_{\ell,i}(\mathbf z^\ast_\ell),
\end{aligned}
\label{eq:local_knn_var}
\end{equation}
where $\sigma_n^2$ is a small jitter term for numerical stability.
Let $\{\mathbf z^{(m)}_{\ell,K}\}_{m=1}^K$ denote the rows of $\mathbf Z_{\ell,K}(\mathbf z^\ast_\ell)$.
Then the $K\times K$ kernel matrix and $K$-vector are defined as
\[
[\mathbf K_{\ell,i}]_{mn}
=
k_{\ell,i}(\mathbf z^{(m)}_{\ell,K},\mathbf z^{(n)}_{\ell,K}),
\text{}
[\mathbf k_{\ell,i}(\mathbf z^\ast_\ell)]_{m}
=
k_{\ell,i}(\mathbf z^\ast_\ell,\mathbf z^{(m)}_{\ell,K}).
\]
Collecting neuron-wise variances yields the diagonal layer covariance
\[
\boldsymbol\Sigma_\ell(\mathbf z^\ast_\ell)
=
\mathrm{diag}\!\big(
\sigma^2_{\ell,1}(\mathbf z^\ast_\ell),\ldots,\sigma^2_{\ell,d_\ell}(\mathbf z^\ast_\ell)
\big)\in \mathbb R^{d_\ell\times d_\ell}.
\]

\paragraph{Conservative uncertainty (monotonicity).}
Intuitively, conditioning a GP on fewer points cannot reduce posterior uncertainty.
Fix kernel hyperparameters and observation noise $\sigma_n^2>0$, and let $A\subseteq C$ be two conditioning sets.
Then for any test input $\mathbf z^\ast$,
\[
\operatorname{Var}\!\big(f(\mathbf z^\ast)\mid \mathcal D_A\big)
\;\ge\;
\operatorname{Var}\!\big(f(\mathbf z^\ast)\mid \mathcal D_C\big).
\]
Consequently, conditioning on the local subset $\mathbf Z_{\ell,K}(\mathbf z^\ast_\ell)\subseteq \mathbf Z_\ell$
cannot underestimate epistemic uncertainty relative to using the full inducing set.
A formal statement and proof are provided in Appendix~\ref{app:conservative};
we additionally ablate sensitivity to $K$ in Appendix~\ref{app:knn_sweep}.

\paragraph{Computational complexity.}
Offline preprocessing comprises caching $\tilde{\mathbf Z}_\ell$ (one forward pass), constructing $\mathbf Z_\ell$ via
$k$-means, and building a FAISS index over $\mathbf Z_\ell$, where typically $K \ll M_\ell \ll N_\ell$.
At test time, each query requires (i) neighbour search in $\mathcal O(\log M_\ell)$ and
(ii) solving a $K\times K$ system in Eq.~\eqref{eq:local_knn_var}.
With fixed $K$ (we use $K=50$), the per-query linear algebra is constant-size and the dominant dependence on the
inducing set size is $\mathcal O(\log M_\ell)$; memory is linear in $M_\ell$. See Table~\ref{tab:complexity} for a summary.

\begin{table}[t]
\centering
\small
\setlength{\tabcolsep}{4pt}
\caption{Computational complexity per layer $\ell$.
Exact GP refers to conditioning on all $N_\ell$ cached activations.
$I_{\text{kmeans}}$ denotes the number of $k$-means iterations.}
\label{tab:complexity}

\resizebox{\columnwidth}{!}{%
\begin{tabular}{lcc}
\toprule
\textbf{Operation} & \textbf{Exact GP} & \textbf{GAPA (Ours)} \\
\midrule
Preprocessing
& $\mathcal O(N_\ell^3)$
& $\mathcal O(N_\ell I_{\text{kmeans}}) + \mathcal O(M_\ell \log M_\ell)$ \\

Inference (per query)
& $\mathcal O(N_\ell^2)$
& $\mathcal O(\log(M_\ell)) + \mathcal O(K^3)$ \\

Memory
& $\mathcal O(N_\ell^2)$
& $\mathcal O(M_\ell)$ \\
\bottomrule
\end{tabular}
}
\end{table}

\subsection{Variance Propagation Through the Network}
\label{subsec:propagating_variance}

\paragraph{Gaussian flow intuition.}
Once we replace deterministic activations with GAPA modules, the forward pass no longer carries only a point value.
Instead, at any layer we track a \emph{Gaussian summary} of the hidden state: a mean vector and a diagonal covariance
matrix $\boldsymbol\Sigma_{\mathbf h}$.
Concretely, each GAPA layer maps a (possibly uncertain) pre-activation input to a Gaussian output,
so uncertainty can be \emph{propagated forward} through the remaining frozen layers.
Specialized rules for architectures such as self-attention and RMSNorm are provided in
Appendix~\ref{app:propagation}.

\paragraph{Notation.}
We denote by $\boldsymbol\mu_{\mathbf h}$ the mean of a vector-valued random variable $\mathbf h$ and by
$\boldsymbol\Sigma_{\mathbf h}$ its covariance matrix. We write the \emph{variance vector}
$\mathbf v_{\mathbf h}:=\mathrm{diag}(\boldsymbol\Sigma_{\mathbf h})$, whose entries are
$[\mathbf v_{\mathbf h}]_i=\mathrm{Var}(h_i)$.
For vectors $\mathbf a,\mathbf b$ of the same size, $\mathbf a\odot \mathbf b$ denotes the \emph{Hadamard}
(element-wise) product, and $\mathbf a^{\odot 2}:=\mathbf a\odot \mathbf a$.

\paragraph{(i) Linear layers.}
Consider a linear transformation $\mathbf z=\mathbf W\mathbf h+\mathbf b$ where $\mathbf h$ has diagonal covariance.
Under the diagonal approximation, the output variance remains diagonal and the variance vector propagates as
\begin{equation}
\mathbf v_{\mathbf z}
\;=\;
(\mathbf W\odot \mathbf W)\,\mathbf v_{\mathbf h}.
\label{eq:var_linear}
\end{equation}
Intuitively, each output coordinate $z_i=\sum_j W_{ij}h_j$ is a weighted sum of independent components, so its variance
is the sum of squared weights times input variances.

\paragraph{(ii) Element-wise nonlinearities (delta method).}
We use first-order moment propagation (delta method): means follow the deterministic forward pass,
while variances are updated by local linearization. Let $y=g(z)$ where $z\sim\mathcal N(\mu,\sigma^2)$ and $g$ is a scalar nonlinearity (e.g., ReLU, $\tanh$).
We linearize around $\mu$:
\[
g(z)\approx g(\mu)+g'(\mu)(z-\mu).
\]
This gives the standard delta-method moments:
\begin{equation}
\mathbb E[y]\approx g(\mu),
\qquad
\mathrm{Var}(y)\approx (g'(\mu))^2\sigma^2.
\label{eq:delta_scalar}
\end{equation}
Applied element-wise to vectors $\mathbf y=g(\mathbf z)$ with diagonal variances, yields
\begin{equation}
\boldsymbol\mu_{\mathbf y}=g(\boldsymbol\mu_{\mathbf z}),
\qquad
\mathbf v_{\mathbf y}\approx \big(g'(\boldsymbol\mu_{\mathbf z})\big)^{\odot 2}\odot \mathbf v_{\mathbf z}.
\label{eq:delta_vector}
\end{equation}
Intuitively, a nonlinearity either amplifies or attenuates uncertainty depending on its local slope.

\paragraph{(iii) Stacking GAPA layers (noisy-input correction).}
When we place multiple GAPA layers in a network, the input to a downstream GAPA layer is no longer a point
$\mathbf z^\ell$ but a \emph{distribution} (summarized as a Gaussian with mean $\boldsymbol\mu_{\mathbf z}$ and variance
vector $\mathbf v_{\mathbf z}$). Standard GP conditioning assumes a deterministic test input; here the test-time input
is itself uncertain. Intuitively, even if the GP were evaluated at the same mean location, uncertainty in the input
location induces additional variability in the output whenever the GP mean changes with $\mathbf z$.
Following the noisy-input GP (NIGP) approximation \citep{mchutchon2011Gaussian}, we capture this effect by adding a
first-order correction term.

Concretely, for neuron $i$ at layer $\ell$, we evaluate the local inducing-point posterior variance at the mean input
and add the NIGP correction
\begin{equation*}
\sigma^2_{\ell,i}(\mathbf z^\ell)
\;\approx\;
\underbrace{\sigma^2_{\text{epi},\ell,i}\!\big(\boldsymbol\mu_{\mathbf z}\big)}_{\text{epistemic}}
\;+\;
\underbrace{\lambda_{\ell,i}\!\big(\boldsymbol\mu_{\mathbf z}\big)}_{\text{input uncertainty}}
\;+\;
\underbrace{\sigma^2_{y,i}}_{\text{aleatoric}},
\end{equation*}

where $\sigma^2_{\text{epi},\ell,i}(\boldsymbol\mu_{\mathbf z})$ is the local GP variance from
Eq.~\eqref{eq:local_knn_var} computed using the $K$-NN subset selected at $\boldsymbol\mu_{\mathbf z}$, and
\begin{equation*}
\lambda_{\ell,i}\!\big(\boldsymbol\mu_{\mathbf z}\big)
=
\big(\nabla_{\mathbf z^\ell}\mu_{\ell,i}(\boldsymbol\mu_{\mathbf z})\big)^\top
\mathrm{diag}(\mathbf v_{\mathbf z})\,
\big(\nabla_{\mathbf z^\ell}\mu_{\ell,i}(\boldsymbol\mu_{\mathbf z})\big)
\end{equation*}
is the NIGP correction induced by input uncertainty.
In our mean-preserving element-wise setting $\mu_{\ell,i}(\mathbf z)=\phi_\ell(z_i)$, this simplifies to
$\lambda_{\ell,i}(\boldsymbol\mu_{\mathbf z})=(\phi'_\ell(\mu_{\mathbf z,i}))^2\,v_{\mathbf z,i}$.
We set $\sigma^2_{y,i}=0$ for classification; for regression we add a learned heteroscedastic noise head
(Sec.~\ref{sec:hyperparameters}).


\subsection{Hyperparameter Strategy}
\label{sec:hyperparameters}

A key design choice in GAPA is to \emph{fix} GP hyperparameters rather than optimize them.

\paragraph{Empirical (post-hoc) hyperparameters.} GAPA is designed as a \emph{post-hoc} module for pretrained neural networks: after standard training, we attach GAPA using only one forward pass on the backbone’s input training data (or a subset thereof) to cache activations, without any additional backpropagation, fine-tuning or labels. Accordingly, we set GP hyperparameters \emph{once} from simple empirical statistics of the cached pre-activations. Because hyperparameters are estimated from activation statistics rather than a task-specific objective, the method is task-agnostic: the same procedure applies across settings (e.g., regression, classification, token prediction, and segmentation). Caching is a one-off offline step, and test-time inference remains unchanged thereafter. Details of the empirical hyperparameter construction are given in Appendix~\ref{Appendix:empirical_kernel}.

We use an RBF kernel $k_{\ell,i}(\mathbf z,\mathbf z')=c_i^2\exp\!\big(-\|\mathbf z-\mathbf z'\|^2/(2\ell_i^2)\big)$,
with $(c_i^2,\ell_i)$ set from cached activation statistics and a small jitter.
Details are given in Appendix~\ref{Appendix:empirical_kernel}, with downstream likelihoods for classification and
regression described in Appendix~\ref{app:laplace_bridge} and Appendix~\ref{Appendix:gapa_train}.

\section{Results}
\label{sec:results}

We evaluate GAPA as a \emph{post-hoc} uncertainty module for \emph{frozen} pretrained backbones across
regression (Sec.~\ref{sec:results_regression}),
classification (Sec.~\ref{sec:classification_results}),
image segmentation (Appendix Sec.~\ref{sec:segmentation}),
and language modeling (Sec.~\ref{app:llama}). Ablations on layer placement, inducing-set size \(M\), local subset size \(K\), and uncertainty-score choices
are provided in Appendix~\ref{sec:ablation}.


\subsection{Regression}
\label{sec:results_regression}

\paragraph{Setup and baselines.}
We evaluate on three regression benchmarks using the original train/test splits:
YEAR Prediction MSD, Airline \citep{hensman2013gaussian}, and Taxi \citep{salimbeni2017doubly}.
We compare against: \textbf{MAP} (backbone model), \textbf{Dropout} (MC Dropout with multiple stochastic forward passes; \citep{gal2016dropout}),
\textbf{Ensemble} (independently trained models; \citep{lakshminarayanan2017simple}), and post-hoc last-layer Bayesian baselines
(\textbf{LLA} variants, \textbf{ELLA}, \textbf{VaLLA}; \citep{ortega2023variational}).

\paragraph{Metrics.}
Performance is evaluated using Negative Log-Likelihood (NLL), Continuous Ranked Probability Score (CRPS),
and the Centered Quantile Metric (CQM), with definitions in Appendix~\ref{app:regression_metrics}.

\begin{table}[t]
    \centering
    \caption{Results on regression datasets.
    Best values are in \textcolor{purple}{purple}, and
    second-best in \textcolor{teal}{teal}.
    An asterisk (*) indicates a last-layer LLA variant.
    Results are averages over 5 random seeds;
    standard deviations ($<10^{-3}$ in all cases) are omitted for brevity.
    The full table with stds can be found in Table~\ref{tab:results_std} in the Appendix.}
    \label{tab:results} 
    \vspace{0.3em}

    \scriptsize
    \setlength{\tabcolsep}{2.4pt}
    \renewcommand{\arraystretch}{1.02}

    \begin{tabular}{l|ccc|ccc|ccc}
      \toprule
      \multirow{2}{*}{\textbf{Model}}
        & \multicolumn{3}{c|}{\textbf{Airline}}
        & \multicolumn{3}{c|}{\textbf{Year}}
        & \multicolumn{3}{c}{\textbf{Taxi}} \\
      \cmidrule(lr){2-4} \cmidrule(lr){5-7} \cmidrule(l){8-10}
        & \textbf{NLL} & \textbf{CRPS} & \textbf{CQM}
        & \textbf{NLL} & \textbf{CRPS} & \textbf{CQM}
        & \textbf{NLL} & \textbf{CRPS} & \textbf{CQM} \\
      \midrule
      MAP
        & 5.121 & 18.695 & 0.148
        & 3.673 & 5.023 & 0.134
        & 3.775 & \textcolor{teal}{3.755} & 0.211 \\
      LLA Diag
        & 5.125 & 18.648 & 0.143
        & 3.647 & 4.917 & 0.088
        & 3.722 & 3.990 & 0.257 \\
      LLA KFAC
        & 5.127 & 18.631 & 0.142
        & 3.648 & 4.915 & 0.086
        & 3.706 & 3.986 & 0.256 \\
      LLA*
        & 5.127 & \textcolor{teal}{18.631} & 0.141
        & 3.648 & 4.915 & 0.086
        & 3.726 & 3.985 & 0.256 \\
      LLA*KFAC
        & 5.127 & 18.631 & 0.141
        & 3.648 & \textcolor{teal}{4.914} & 0.086
        & 3.726 & 3.985 & 0.256 \\
      ELLA
        & 5.388 & 21.671 & 0.413
        & 4.020 & 6.049 & 0.424
        & 3.885 & \textcolor{purple}{3.680} & 0.219 \\
      VaLLA100
        & \textcolor{teal}{4.963} & 18.814 & \textcolor{teal}{0.099}
        & 3.515 & 5.004 & 0.047
        & 3.235 & 3.999 & 0.149 \\
      VaLLA200
        & 4.965 & 18.788 & \textcolor{purple}{0.098}
        & \textcolor{teal}{3.485} & 4.970 & \textcolor{teal}{0.041}
        & \textcolor{teal}{3.232} & 3.979 & \textcolor{teal}{0.142} \\
      Dropout
        & 5.102 & 19.066 & 0.938
        & 3.689 & 5.128 & 0.939
        & 3.849 & 4.592 & 0.951 \\
      Ensemble
        & 5.053 & 18.205 & 0.933
        & 3.639 & 4.833 & 0.938
        & 3.631 & 3.384 & 0.961 \\
      \textbf{GAPA}
        & \textcolor{purple}{4.946} & \textcolor{purple}{18.068} & 0.103
        & \textcolor{purple}{3.470} & \textcolor{purple}{4.663} & \textcolor{purple}{0.014}
        & \textcolor{purple}{3.112} & 4.035 & \textcolor{purple}{0.104} \\
      \bottomrule
    \end{tabular}

    \vspace{0.3em}
\end{table}

Table~\ref{tab:results} shows a consistent pattern across all regression benchmarks.
GAPA achieves the best NLL on all three datasets, indicating the strongest overall fit of the predictive distribution.
Moreover, GAPA attains the best CQM overall, demonstrating consistently well-calibrated predictive quantiles.
CRPS follows the same trend on Airline and Year, with a small degradation on Taxi, where ELLA perform best.

\subsection{Classification}
\label{sec:classification_results}

We now evaluate classification performance in terms of accuracy and calibration, and assess OOD detection via AUC using
predictive entropy and BALD. Metric definitions are provided in Appendix~\ref{app:classification_metrics}.

\paragraph{Baselines.}
We compare against (i) the deterministic backbone (\textbf{MAP});
(ii) sampling-based baselines requiring multiple forward passes or multiple trainings
(\textbf{MC Dropout} \citep{gal2016dropout}, \textbf{Deep Ensembles} \citep{lakshminarayanan2017simple});
(iii) post-hoc last-layer Bayesian baselines that Bayesianize only the head
(\textbf{LLA}/\textbf{ELLA}/\textbf{VaLLA}; \citep{ortega2023variational});
and (iv) distance-/feature-based alternatives including \textbf{SNGP} \citep{liu2020simple}, \textbf{MF-VI} \citep{blundell2015weight},
and a \textbf{subset GP} baseline based on the GP interpretation of local linearization.
We also report \textbf{linear probing} \citep{alain2016understanding} as a lightweight head-only baseline, and \textbf{DDU} \citep{mukhoti2023deep}
as a post-hoc feature-density uncertainty estimator.

\begin{table*}[t]
  \centering
\caption{
Classification performance and runtime on MNIST and Fashion-MNIST.
Accuracy (ACC), negative log-likelihood (NLL), expected calibration error (ECE),
OOD detection, and BALD are reported.
Best results are shown in \textcolor{purple}{purple} and second-best in \textcolor{teal}{teal}.
All results are averaged over 5 random seeds; standard deviations ($<10^{-3}$) are omitted.
Training and test times are wall-clock seconds (K = $10^3$).
The full table with standard deviations is provided in Table~\ref{tab:results_std} in the Appendix.
}
  \label{tab:results_classif_runtime_internal}

  \vspace{-0.4em}
  \footnotesize
  \setlength{\tabcolsep}{3pt}
  \renewcommand{\arraystretch}{1.00}

  \begin{adjustbox}{width=0.90\textwidth}

   \begin{tabular}{l|ccccc|cc|ccccc|cc}
    
  \toprule
    \multirow{3}{*}{\textbf{Model}}
      & \multicolumn{7}{c|}{\textbf{MNIST}}
      & \multicolumn{7}{c}{\textbf{FMNIST}} \\
    \cmidrule(lr){2-8} \cmidrule(lr){9-15}
    
      & \multicolumn{5}{c|}{\textbf{Metrics}}
      & \multicolumn{2}{c|}{\textbf{Time (s)}}
      & \multicolumn{5}{c|}{\textbf{Metrics}}
      & \multicolumn{2}{c}{\textbf{Time (s)}} \\
    \cmidrule(lr){2-6} \cmidrule(lr){7-8}
    \cmidrule(lr){9-13} \cmidrule(lr){14-15}
    
      & \textbf{ACC} & \textbf{NLL} & \textbf{ECE} & \textbf{OOD} & \textbf{BALD}
      & \textbf{Train} & \textbf{Test}
      & \textbf{ACC} & \textbf{NLL} & \textbf{ECE} & \textbf{OOD} & \textbf{BALD}
      & \textbf{Train} & \textbf{Test} \\
    \midrule


    MAP (backbone)
      & \textcolor{teal}{0.978} & \textcolor{purple}{0.068} & \textcolor{purple}{0.005} & 0.919 & 0.919
      & --- & 1.24
      & 0.859 & 0.392 & \textcolor{purple}{0.007} & 0.846 & 0.821
      & --- & 1.20 \\

    LLA Diag
      & 0.976 & 0.177 & 0.105 & 0.932 & 0.941
      & 2.34K & \textcolor{teal}{2.39}
      & 0.856 & 0.421 & 0.057 & 0.872 & 0.873
      & 2.34K & \textcolor{teal}{2.22} \\

    LLA KFAC
      & \textcolor{teal}{0.978} & 0.102 & 0.042 & \textcolor{purple}{0.971} & \textcolor{teal}{0.971}
      & 130.0 & 2.85
      & 0.858 & 0.395 & 0.020 & 0.909 & \textcolor{teal}{0.970}
      & 129.9 & 2.84 \\

    LLA*
      & \textcolor{teal}{0.978} & \textcolor{teal}{0.070} & \textcolor{teal}{0.009} & 0.924 & 0.924
      & 42.0 & 4.6
      & 0.859 & 0.395 & 0.019 & 0.850 & 0.716
      & 42.0 & 4.7 \\

    LLA* KFAC
      & \textcolor{purple}{0.979} & \textcolor{teal}{0.070} & \textcolor{teal}{0.009} & 0.923 & 0.928
      & \textcolor{purple}{31.2} & 17.6
      & 0.859 & 0.394 & 0.017 & 0.849 & 0.717
      & \textcolor{purple}{31.2} & 17.4 \\

    ELLA
      & \textcolor{teal}{0.978} & \textcolor{purple}{0.068} & \textcolor{purple}{0.005} & 0.919 & 0.912
      & 821.8 & 148.7
      & 0.859 & 0.392 & \textcolor{purple}{0.007} & 0.846 & 0.765
      & 827.1 & 149.9 \\

    VaLLA 100
      & \textcolor{teal}{0.978} & \textcolor{purple}{0.068} & \textcolor{purple}{0.005} & 0.919 & 0.934
      & 2.19K & 16.4
      & \textcolor{teal}{0.865} & 0.382 & 0.019 & 0.925 & 0.963
      & 495.1 & 16.4 \\

    VaLLA 200
      & \textcolor{teal}{0.978} & \textcolor{purple}{0.068} & \textcolor{purple}{0.005} & 0.919 & 0.934
      & 3.43K & 18.3
      & \textcolor{purple}{0.867} & \textcolor{teal}{0.378} & 0.020 & \textcolor{teal}{0.937} & \textcolor{teal}{0.970}
      & 767.7 & 19.3 \\

    Linear Probing
      & 0.977 & 0.117 & 0.015 & 0.884 & 0.883
      & 2.78K & 3.6
      & 0.858 & 0.395 & 0.048 & 0.785 & 0.776
      & 2.64K & \textcolor{teal}{3.8} \\

    GPP
      & \textcolor{teal}{0.978} & 1.648 & 0.784 & 0.934 & 0.904
      & 5.79K & 23.5K
      & 0.857 & 1.716 & 0.692 & 0.867 & 0.962
      & 5.57K & 2.27K \\

    Dropout
      & \textcolor{teal}{0.978} & 0.072 & \textcolor{teal}{0.009} & 0.923 & 0.944
      & --- & 4.3
      & 0.858 & 0.393 & \textcolor{teal}{0.009} & 0.850 & 0.911
      & --- & 4.3 \\

    Ensemble
      & \textcolor{purple}{0.979} & \textcolor{teal}{0.069} & 0.038 & 0.936 & 0.962
      & --- & 11.9
      & 0.859 & \textcolor{purple}{0.373} & 0.041 & 0.863 & 0.938
      & --- & 11.9 \\

    DDU (est.)
      & \textcolor{teal}{0.978} & \textcolor{purple}{0.068} & \textcolor{purple}{0.005} & 0.921 & 0.919
      & 202.4 & 8.2
      & 0.859 & 0.392 & \textcolor{purple}{0.007} & 0.876 & \textcolor{teal}{0.983}
      & 202.3 & 8.2 \\

    \textbf{GAPA-Diag}
      & \textcolor{teal}{0.978} & 0.073 & 0.016 & \textcolor{teal}{0.963} & \textcolor{purple}{0.976}
      & \textcolor{teal}{91.9} &  \textcolor{purple}{2.05}
      & 0.859 & 0.390 & \textcolor{teal}{0.009} & \textcolor{purple}{0.941} & \textcolor{purple}{0.993}
      & \textcolor{teal}{92.1} & \textcolor{purple}{2.05} \\

    \textbf{GAPA-Full}
      & \textcolor{teal}{0.978} & 0.072 & 0.013 & \textcolor{teal}{0.969} & \textcolor{purple}{0.983}
      & \textcolor{teal}{96.1} &  \textcolor{purple}{8.92}
      
      & 0.859 & 0.388 & \textcolor{teal}{0.009} & \textcolor{purple}{0.990} & \textcolor{purple}{0.997}
      & 96.1 & 8.91 \\

    \bottomrule
  \end{tabular}
  \end{adjustbox}

  \vspace{-0.6em}
\end{table*}

\begin{figure}[t]
  \centering
  \vspace{-0.4em}
  \includegraphics[width=\columnwidth]{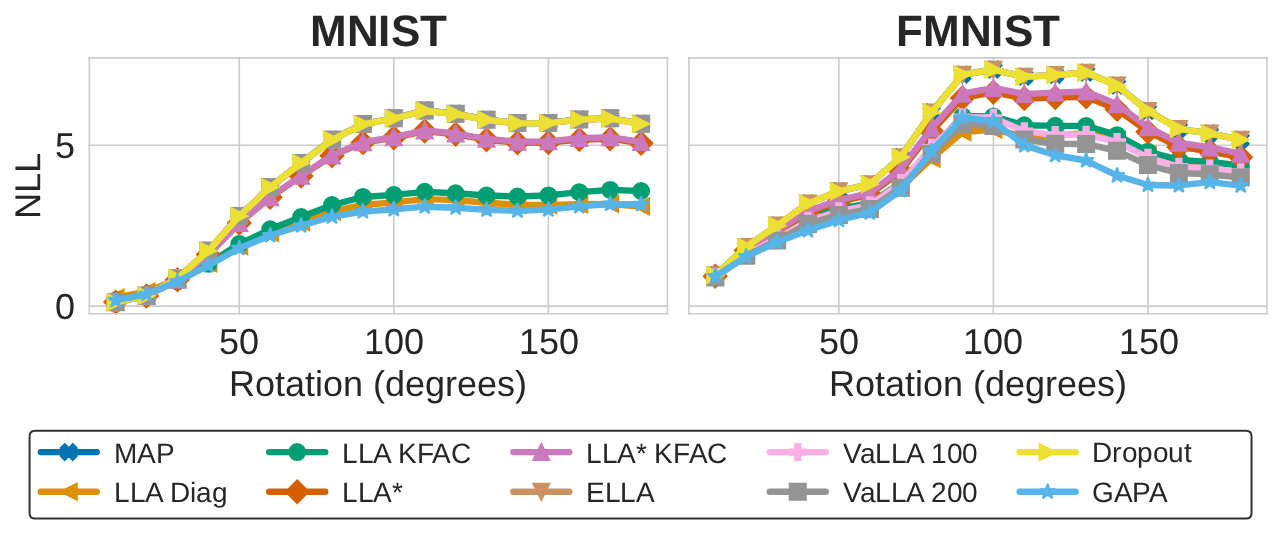}
  \vspace{-0.4em}
  \caption{Predictive NLL under rotation corruption for MNIST (left panel) and FMNIST (right panel); lower is better. Results are averaged over 5 random seeds.}
  \label{fig:combined_table_rotated_classification}
  \vspace{-0.7em}
\end{figure}

\begin{table}[t]
  \centering
    \caption{
    CIFAR-10 results with ResNet-44/56 backbones.
    Metrics include NLL ($\downarrow$), OOD detection ($\uparrow$), and train/test runtime.
    Best and second-best values are highlighted in \textcolor{purple}{purple} and \textcolor{teal}{teal}.
    Results are averaged over 5 seeds; stds ($<10^{-3}$) are omitted.
    Times are in seconds (K = $10^3$).
    See Table \ref{tab:resnet_full} for additional ResNet experiments.
    }
  \label{tab:resnet_44_56_main}
  \vspace{0.2em}

  \footnotesize
  \setlength{\tabcolsep}{2.2pt}
  \renewcommand{\arraystretch}{1.08}

  \begin{adjustbox}{width=\columnwidth}
  \begin{tabular}{l|cc|cc|cc|cc}
    \toprule
    \multirow{3}{*}{\textbf{Model}}
      & \multicolumn{4}{c|}{\textbf{ResNet-44}}
      & \multicolumn{4}{c}{\textbf{ResNet-56}} \\
    \cmidrule(lr){2-5}\cmidrule(lr){6-9}
    
      & \multicolumn{2}{c|}{\textbf{Metrics}}
      & \multicolumn{2}{c|}{\textbf{Time (s)}}
      & \multicolumn{2}{c|}{\textbf{Metrics}}
      & \multicolumn{2}{c}{\textbf{Time (s)}} \\
    \cmidrule(lr){2-3}\cmidrule(lr){4-5}
    \cmidrule(lr){6-7}\cmidrule(lr){8-9}
    
      & \textbf{NLL} & \textbf{OOD}
      & \textbf{Train} & \textbf{Test}
      & \textbf{NLL} & \textbf{OOD}
      & \textbf{Train} & \textbf{Test} \\
    \midrule

    MAP         & 0.275 & 0.885 & -- & 0.761 & 0.252 & 0.924 & -- & 0.949 \\
    MF-VI       & 0.206 & 0.890 & 1.63 & 1.03 & 0.188 & 0.929 & 1.97 & 1.18 \\
    SNGP        & 0.242 & 0.901 & 35.0 & 2.89 & 0.229 & 0.940 & 43.5 & 3.01 \\
    GP (subset) & 0.424 & 0.897 & 8.25K & 357 & 0.403 & 0.936 &8.42K & 382 \\
    LLA Diag    & 0.218 & 0.860 & 40.4 &  \textcolor{purple}{0.947} & 0.195 & 0.923 & 40.67 &  \textcolor{purple}{1.12} \\ 
    LLA KFAC    & 0.213 & 0.855 & 63.1 & 3.62 & 0.193 & 0.917 & 71.3 & 4.13  \\
    LLA*        & 0.237 & 0.895 & 4.98K & \textcolor{teal}{0.962}& 0.213 & 0.934 & 5.55K & \textcolor{teal}{1.16}\\
    LLA* KFAC   & 0.232 & 0.894 & 58.0 & 1.97 & 0.202 & 0.933 & 62.2 & 2.18 \\
    ELLA        & 0.204 & 0.885 & 1.12K & 78.3 & \textcolor{teal}{0.187} & 0.924 &  1.13K & 91.0   \\
    Sampled LLA & \textcolor{purple}{0.200} & 0.899 & 11.0K & 2.51K
                & \textcolor{purple}{0.185} & 0.944 & 14.6K & 2.84K \\
    VaLLA 200     & 0.201 & \textcolor{teal}{0.928} & 16.7K & 272.9 
                & 0.188 & \textcolor{purple}{0.960} & 26.3K & 363.8 \\
    \textbf{GAPA (ours)} 
                & 0.230 & \textcolor{purple}{0.931} & \textcolor{purple}{8.03} &   2.85
                & 0.230 & \textcolor{teal}{0.953} & \textcolor{purple}{10.29} &  3.30 \\
    \bottomrule 
  \end{tabular}
  \end{adjustbox}

  \vspace{0.1em}
\end{table}

\begin{figure*}[t]
  \centering
  \includegraphics[width=0.95\textwidth]{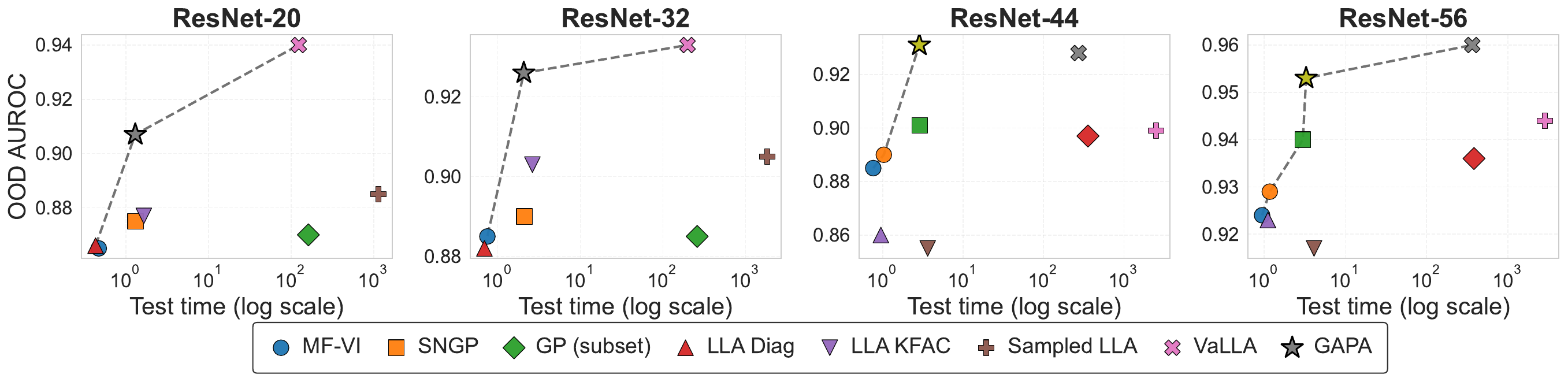}
  \caption{
  \textbf{OOD detection vs inference cost on CIFAR-10.}
  OOD AUROC is plotted against test-time inference cost (log scale) for ResNet backbones.
  Dashed lines indicate Pareto frontiers (higher OOD, lower cost).
  GAPA consistently lies on the frontier, achieving strong OOD performance
  at substantially lower inference cost than baselines.
  }
  \label{fig:ood_time_44_56}
\end{figure*}

\paragraph{MNIST / FMNIST (MLP).}
Following \citet{ortega2023variational}, we train a 2-layer MLP with 200 hidden units and $\tanh$ activations on MNIST \citep{lecun2002gradient}
and Fashion-MNIST \citep{xiao2017fashion}. We evaluate OOD detection by swapping datasets: Fashion-MNIST is treated as OOD for MNIST (MNIST$\rightarrow$FMNIST) and MNIST as OOD for Fashion-MNIST (FMNIST$\rightarrow$MNIST), reporting AUROC based on predictive entropy and BALD-OOD (Table~\ref{tab:results_classif_runtime_internal}).
Since GAPA is mean-preserving, it leaves the backbone's point predictions unchanged, and therefore matches the MAP classifier in accuracy by construction.
GAPA achieves the best BALD AUROC in both directions and the best predictive-entropy AUROC on FMNIST$\rightarrow$MNIST, while remaining competitive on MNIST$\rightarrow$FMNIST.
Moreover, GAPA enables deterministic single-pass inference with near-MAP runtime (2.05s test time), yielding substantial speedups over Monte Carlo and curvature-based baselines.

\paragraph{Robustness under distribution shift.}
We evaluate robustness by rotating the \emph{test} images by increasing angles and plotting predictive NLL
for MNIST and FMNIST as shown in Figure~\ref{fig:combined_table_rotated_classification}.
As rotation increases and inputs move further from the training distribution, GAPA maintains competitive NLL under shifts
and achieves lower NLL than most baselines for large rotations, while appropriately increasing uncertainty. This indicates robust and well-calibrated behaviour, with the model correctly identifying heavily rotated inputs as out-of-distribution.

\paragraph{CIFAR-10 (pretrained ResNets).}
We evaluate CIFAR-10 with pretrained ResNet-20/32/44/56 backbones \citep{he2016deep}
and use SVHN as the OOD dataset.
Here the central question is the \emph{robustness--efficiency trade-off}: many strong OOD baselines are prohibitively expensive at inference, while fast methods often sacrifice OOD detection.
On representative ResNet-44/56 backbones (Table~\ref{tab:resnet_44_56_main}), GAPA attains strong OOD AUROC (0.931/0.953)
with low test-time cost (2.85s/3.30s), compared to the most accurate-but-slow baselines such as VaLLA and GP-subset
(test costs in the $10^2$--$10^3$s range). Figure~\ref{fig:ood_time_44_56} makes this explicit, GAPA lies on, or very close to, the Pareto frontier, offering one of the best OOD--inference cost trade-offs for all ResNet-20/32/44/56.

\subsection{Language models}\label{sec:llama}
\newcommand{\Ent}[1]{-\sum_{v=1}^V #1_v \log #1_v}
\newcommand{\softmax}{\operatorname{softmax}}    
We attach GAPA post hoc to LLaMA-3.2-3B (hidden size $3072$). For each chosen transformer block (we report layer indices), we log $\sim\!12$M pre-activations on WikiText-103 (training split) at sequence length $L=96$ and build a nearest-neighbor cache for uncertainty propagation. We use \texttt{KMeans} as preprocessing step, in Appendix \ref{app:llama} we also provide additional ablation experiments on the inducing point selection. Uncertainty propagation rules for key LLM model components, such as self attention or \texttt{RMSNorm} we detail in Appendix \ref{app:propagation}.

As in classification, after variance propagation the output layer yields mean logits 
$\mu_{1:k}$ and diagonal logit variances $v_{1:k}$ from a single forward pass. 
Predictive uncertainty is estimated via a vectorized reparameterization step over the top-$k$ tokens per position 
($k=512$). Specifically, we draw $S=512$ samples $
\ell^{(s)} = \mu_{1:k} + \sqrt{v_{1:k}} \odot \epsilon^{(s)}$ with  $\epsilon^{(s)} \sim \mathcal{N}(0,I)$ and compute 
 $p^{(s)}=\mathrm{softmax}(\ell^{(s)})$, to form 
 $\bar p = \frac{1}{S}\sum_{s=1}^S p^{(s)}$.
This procedure introduces no additional network evaluations; sampling occurs only in logit space.

Uncertainty is then decomposed using entropy as
\begin{align*}
\mathrm{TU} = H(\bar p), 
\;
\mathrm{AU} = \frac{1}{S}\sum_{s=1}^S H(p^{(s)}), 
\;
\mathrm{EU} = \mathrm{TU} - \mathrm{AU},
\end{align*}
where $H(p)=-\sum_v p_v\log p_v$. 
This corresponds to the mutual-information decomposition of  uncertainty.


We define two datasets: ID (WikiText-103, validation data) and OOD (OpenWebText); each sequence is labeled \(y\in\{0,1\}\). We filter BOS/EOS and extra whitespaces around punctuation marks (WikiText-103) to avoid trivial cues; OpenWebText is prepared analogously.  Note that OpenWebText is most likely not OOD for the pretrained LLaMA model itself; however, it is OOD relative to the employed method. The task is, given a sequence to distinguish these two classes based on the predictive distribution. 
For scoring, we compute EU,AU at every position and average over the sequence (the GAPA-based method is bold); AUROC is then computed against the sequence label. We include last-layer Laplace (trained on WikiText-103, training data) as additional baseline. Additionally, we include a temperature-scaling oracle that searches $\tau$ to maximize test AUROC using $\Ent{\softmax(\boldsymbol{\ell}/\tau)}$. This is \emph{not} a fair baseline (it tunes on the test metric) but provides an upper bound for a global rescaling of logits.

\begin{figure*}[t]
  \centering

  \begin{minipage}[b]{0.49\textwidth}
    \centering
    \includegraphics[width=0.8\textwidth]{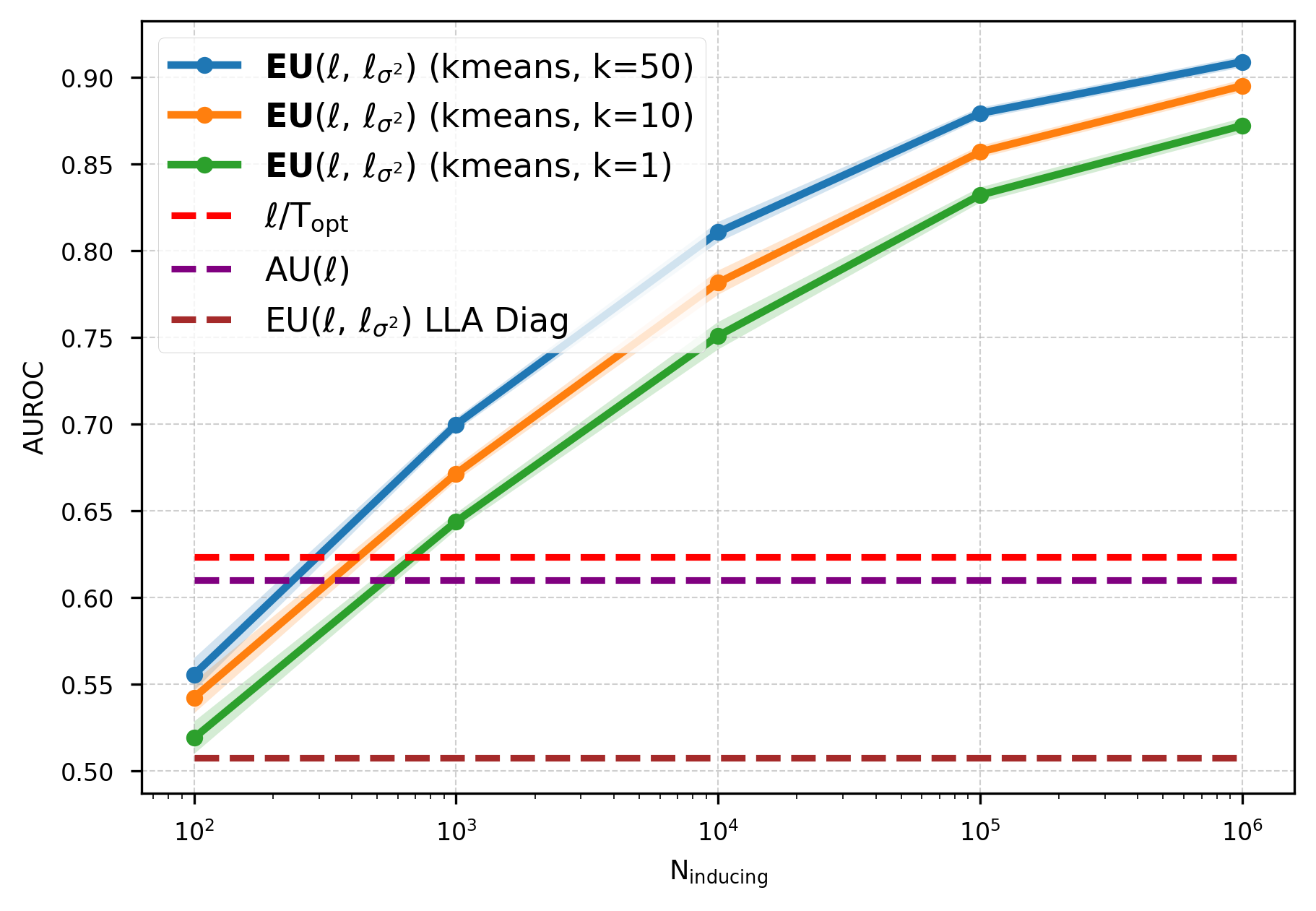}
  \end{minipage}
    \hfill
  \begin{minipage}[b]{0.49\textwidth}
    \centering
    \includegraphics[width=0.8\textwidth]{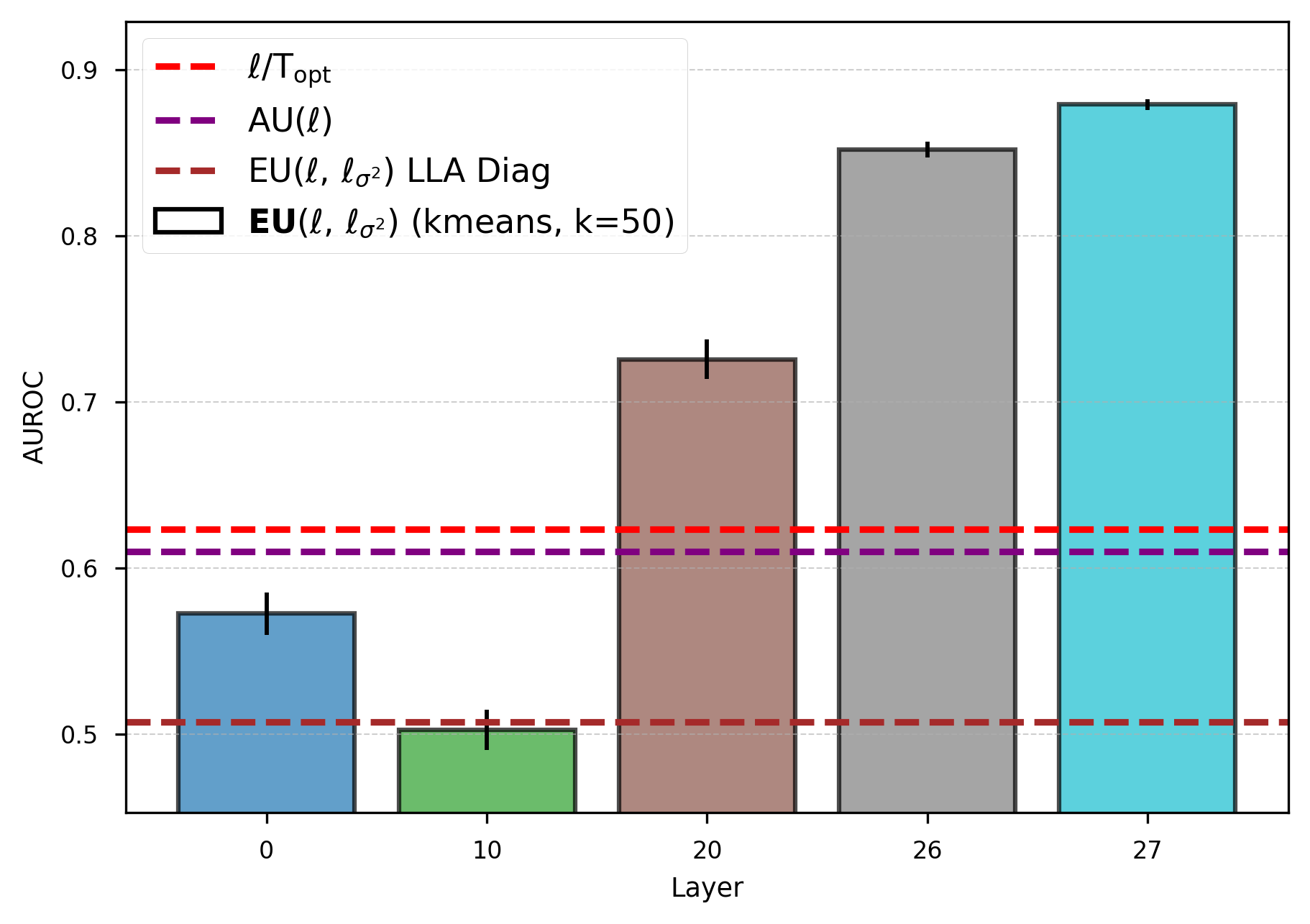}
  \end{minipage}
  \caption{Left: Effect of number of inducing points \(N_{\text{inducing}}\) and $k$ (for nearest neighbor inducing points) on OOD detection task with GAPA at layer [27]. Right: effect of layer placement of GAPA at \(N_{\text{inducing}}=10^5\). In both experiments results are averaged over $5$ runs with $512$ sequences each.  In both panels we also show the \(\ell/T_{\mathrm{opt}}\) bound (green) as an upper threshold of what can be achieved by global logits scaling.}
  \label{fig:inducing_vs_auc_llama}
\end{figure*}
 Fig. \ref{fig:inducing_vs_auc_llama} (left) shows that GAPA-based EU surpasses the oracle logit-temperature bound once \(N_{\text{inducing}}\!\gtrsim\!10^3\), indicating that activation-space epistemics capture distributional shift not recoverable by any global rescaling of logits.  Using higher number of local inducing points ($k=50$) improves performance. Last-layer linear Laplace does not perform better than chance, we found the Fisher-matrix estimation to become close to 0. The results in Figure \ref{fig:inducing_vs_auc_llama} (right) indicate that later layers improve performance with a noticeable drop in performance in the middle of the network.

\section{Related Work}
\label{sec:related}

Uncertainty quantification methods for deep networks differ primarily in \emph{where} uncertainty is placed and the resulting
test-time cost. Since we target \emph{frozen pretrained backbones} with \emph{single-pass} inference,
we focus on \emph{post-hoc Bayesian} baselines, and defer a broader discussion (sampling-based,
feature-based, calibration, and activation-space training methods) to Appendix~\ref{sec:related_extended}.
Laplace approximations place a Gaussian posterior over weights via a local quadratic approximation of the
log posterior~\citep{mackay1992practical, ritter2018scalable}; last-layer variants Bayesianize only the head
while keeping the feature extractor frozen, making them a standard post-hoc baseline.
In contrast, GAPA places uncertainty in \emph{activation space} and uses the original nonlinearity as the GP
prior mean, preserving the frozen network’s point predictions by construction while enabling deterministic
single-pass uncertainty estimation.

\section{Conclusion}

We introduced \emph{GAPA}, a post-hoc uncertainty
quantification method that places Bayesian modeling in \emph{activation space}
rather than weight space. By using the original nonlinearity as the GP prior mean,
GAPA preserves frozen model predictions exactly while providing principled
epistemic uncertainty. Designed for modern deployment constraints, GAPA employs
a scalable inducing-point approximation with local KNN conditioning, yielding
$O(\log M)$ inference and deterministic single-pass uncertainty estimation without
sampling, retraining, or backpropagation. Across regression, classification,
segmentation, and language modeling, GAPA matches or outperforms Laplace-family
methods in calibration and OOD detection while achieving favorable Pareto trade-offs
between uncertainty quality and inference cost, and scales to settings where
last-layer methods become prohibitive (e.g., large-vocabulary language models). GAPA’s primary limitation is the memory cost of storing inducing activations. Future work includes compressed or hierarchical indexing schemes and extending
beyond diagonal covariances to capture structured inter-neuron dependencies while
preserving scalability.

\section*{Impact Statement}

This work proposes a scalable, post-hoc method for uncertainty quantification in pretrained neural networks. The contribution is methodological in nature and is intended to improve the reliability and interpretability of model predictions. We do not foresee significant negative societal or ethical impacts beyond those commonly associated with the deployment of machine learning systems.

{
  \small
  \bibliographystyle{plainnat}
  \bibliography{references}
}

\clearpage
\onecolumn
\appendix

\setcounter{tocdepth}{2}
\addtocontents{toc}{\protect\setcounter{tocdepth}{2}}
\vspace{0.25em}
\section*{Appendix Table of Contents}
Appendix A–C: theory; D–G: additional experiments; H–L: implementation details; M–N: supplementary results.
\tableofcontents
\clearpage

\section{Conservativeness of Subset GP Conditioning}
\label{app:conservative}
These sections provide theoretical justification for the approximations used in GAPA. We formally justify the claim made in Section~4.3 that conditioning the GP posterior on a subset of inducing inputs yields a conservative estimate of epistemic uncertainty.

\begin{lemma}[Conservative uncertainty under subset conditioning]
Consider a Gaussian process prior with fixed kernel hyperparameters and Gaussian observation noise. Let $\tilde Z$ denote a set of inducing inputs and let $\tilde Z_k \subset \tilde Z$ be any subset. Then, for any test input $z^*$, the posterior variance satisfies
\[
\operatorname{Var}\!\left(f(z^*) \mid \tilde Z\right)
\;\le\;
\operatorname{Var}\!\left(f(z^*) \mid \tilde Z_k\right).
\]
That is, conditioning on a subset of inducing inputs cannot reduce posterior variance relative to conditioning on the full inducing set.
\end{lemma}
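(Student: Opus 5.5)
The plan is to reduce the claim to a Schur-complement (block matrix) identity. Write $C=\tilde Z$ and $A=\tilde Z_k$, and let $B=C\setminus A$ be the remaining inputs. Since posterior variances of a GP with Gaussian noise do not depend on the observed values, only on the input locations, we may work entirely with covariance matrices.

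Let $\mathbf y_A,\mathbf y_B$ be the noisy observations at $A$ and $B$. The triple $(f(z^*),\mathbf y_A,\mathbf y_B)$ is jointly Gaussian with covariance built from $k$ and $\sigma_n^2\mathbf I$. Order the conditioning variables as $(\mathbf y_A,\mathbf y_B)$, so that the joint noisy Gram matrix is
\[
\mathbf K_C+\sigma_n^2\mathbf I=\begin{pmatrix}\mathbf K_{AA}+\sigma_n^2\mathbf I & \mathbf K_{AB}\\ \mathbf K_{BA} & \mathbf K_{BB}+\sigma_n^2\mathbf I\end{pmatrix},
\]
which is positive definite because $\sigma_n^2>0$.

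The key step is sequential conditioning. Conditioning first on $\mathbf y_A$ gives a Gaussian posterior over $(f(z^*),\mathbf y_B)$ whose variance at $z^*$ is $\operatorname{Var}(f(z^*)\mid \mathbf y_A)$. Conditioning this Gaussian further on $\mathbf y_B$ yields, by the standard Gaussian conditioning formula applied to this intermediate posterior,
\[
\operatorname{Var}(f(z^*)\mid \mathbf y_A,\mathbf y_B)=\operatorname{Var}(f(z^*)\mid \mathbf y_A)-\mathbf c^\top \mathbf S^{-1}\mathbf c .
\]
Here $\mathbf c=\operatorname{Cov}(\mathbf y_B,f(z^*)\mid\mathbf y_A)$, and $\mathbf S=\operatorname{Cov}(\mathbf y_B\mid \mathbf y_A)$ is the Schur complement of the $A$-block in the matrix above. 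By positive definiteness of the full matrix, $\mathbf S$ is positive definite, so $\mathbf c^\top\mathbf S^{-1}\mathbf c\ge 0$, and the inequality of the lemma follows. As an equivalent algebraic route, we could expand $\mathbf k_C^\top(\mathbf K_C+\sigma_n^2\mathbf I)^{-1}\mathbf k_C$ with the block-inverse formula and verify that it equals $\mathbf k_A^\top(\mathbf K_{AA}+\sigma_n^2\mathbf I)^{-1}\mathbf k_A+\mathbf c^\top\mathbf S^{-1}\mathbf c$.

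The only real obstacle is justifying that the two-stage conditioning agrees with one-shot conditioning on $C$, and that the Schur complement is invertible. The first point is the tower/iterated-conditioning property of Gaussians: both computations give the same conditional law of $f(z^*)$ given all observations. The second point follows from $\sigma_n^2>0$, and the same argument covers the jitter term in Eq.~\eqref{eq:local_knn_var}. Finally, the result applies to each neuron's scalar GP separately, with $A=\mathbf Z_{\ell,K}(\mathbf z^*_\ell)$ and $C=\mathbf Z_\ell$. This gives the conservativeness claim of Sec.~\ref{subsec:scalable}.
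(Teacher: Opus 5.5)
Your proposal is correct and follows essentially the same route as the paper: the same split $A=\tilde Z_k$, $B=\tilde Z\setminus\tilde Z_k$ and the same sequential-conditioning identity. Your reduction term $\mathbf c^\top\mathbf S^{-1}\mathbf c$, with $\mathbf S=\operatorname{Cov}(\mathbf y_B\mid\mathbf y_A)$ the Schur complement of the noisy Gram matrix, is exactly the paper's $K_{*B\mid A}\left(K_{BB\mid A}+\sigma_n^2 I\right)^{-1}K_{B*\mid A}$. You also make explicit two steps the paper leaves implicit: that two-stage and one-shot Gaussian conditioning agree, and that the Schur complement is positive definite because $\sigma_n^2>0$.
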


\begin{proof}
Let $A = \tilde Z_k$ and $B = \tilde Z \setminus \tilde Z_k$. Under the GP prior with Gaussian observation noise, the random variables
\[
\bigl(f(z^*),\, y_A,\, y_B\bigr)
\]
are jointly Gaussian. The conditional covariance identity for jointly Gaussian variables gives
\[
\operatorname{Var}(f(z^*) \mid y_A)
-
\operatorname{Var}(f(z^*) \mid y_A, y_B)
= 
K_{*B \mid A}
\left(K_{BB \mid A} + \sigma_n^2 I\right)^{-1}
K_{B* \mid A},
\]
where $K_{BB \mid A}$ and $K_{*B \mid A}$ denote conditional covariance matrices obtained via the Schur complement.

Since $K_{BB \mid A} + \sigma_n^2 I$ is positive definite, the right-hand side is positive semidefinite. Therefore,
\[
\operatorname{Var}(f(z^*) \mid y_A, y_B)
\;\le\;
\operatorname{Var}(f(z^*) \mid y_A),
\]
which establishes the result.
\end{proof}

This result implies that the local $k$-nearest-neighbour conditioning strategy used in GAPA yields a conservative approximation of epistemic uncertainty: posterior variance may be inflated due to discarded conditi

\section{Variational Inducing-Point Interpretation and Zero-Noise Limit}
\label{app:variational_inducing}

In this appendix, we show that the inducing-point construction used in GAPA
admits a variational interpretation in the sense of
Titsias~\cite{titsias2009variational}, and that in the limit of vanishing
observation noise the resulting posterior covariance reduces to the standard
inducing-point conditional covariance.

\paragraph{Setup.}
Consider a scalar Gaussian process
\[
f(\cdot) \sim \mathcal{GP}\!\big(m(\cdot), k(\cdot,\cdot)\big),
\]
and a set of inducing inputs
\[
\mathbf Z = \{ z_m \}_{m=1}^M,
\quad
\mathbf u = f(\mathbf Z) \in \mathbb{R}^M.
\]
Let \(\mathbf z^\ast\) denote a test input.

In GAPA, we condition the GP on noiseless pseudo-observations
\[
\tilde{\mathbf Y} = m(\mathbf Z),
\]
corresponding to evaluating the prior mean function at the inducing inputs.
For numerical stability, we introduce an auxiliary Gaussian noise term
\(\sigma_n^2\), which will be taken to zero.

\paragraph{Exact GP conditional.}
Conditioning a GP on noisy observations
\(\tilde{\mathbf Y}\) at inputs \(\mathbf Z\) yields the posterior covariance
\begin{equation}
\label{eq:gp_post_cov}
\operatorname{Var}\!\bigl(f(\mathbf z^\ast) \mid \tilde{\mathbf Y}\bigr)
=
k(\mathbf z^\ast,\mathbf z^\ast)
-
k(\mathbf z^\ast,\mathbf Z)
\bigl(
K_{\mathbf Z\mathbf Z} + \sigma_n^2 I
\bigr)^{-1}
k(\mathbf Z,\mathbf z^\ast),
\end{equation}
where \(K_{\mathbf Z\mathbf Z}\) is the kernel matrix on the inducing inputs.

\paragraph{Variational inducing-point posterior.}
Following Titsias~\cite{titsias2009variational}, the variational posterior over
the inducing variables \(\mathbf u\) is Gaussian,
\[
q(\mathbf u) = \mathcal{N}(\boldsymbol\mu, \mathbf A),
\]
with optimal parameters
\[
\boldsymbol\mu = \mathbf K_{\mathbf Z\mathbf Z}
\bigl(
\mathbf K_{\mathbf Z\mathbf Z} + \sigma_n^{-2}
\mathbf K_{\mathbf Z\mathbf Z}
\bigr)^{-1}
\tilde{\mathbf Y},
\qquad
\mathbf A =
\mathbf K_{\mathbf Z\mathbf Z}
\bigl(
\mathbf K_{\mathbf Z\mathbf Z} + \sigma_n^{-2}
\mathbf K_{\mathbf Z\mathbf Z}
\bigr)^{-1}
\mathbf K_{\mathbf Z\mathbf Z}.
\]

The corresponding variational predictive covariance at \(\mathbf z^\ast\) is
\begin{equation}
\label{eq:var_cov_general}
\operatorname{Var}_q\!\bigl(f(\mathbf z^\ast)\bigr)
=
k(\mathbf z^\ast,\mathbf z^\ast)
-
k(\mathbf z^\ast,\mathbf Z)
K_{\mathbf Z\mathbf Z}^{-1}
k(\mathbf Z,\mathbf z^\ast)
+
k(\mathbf z^\ast,\mathbf Z)
K_{\mathbf Z\mathbf Z}^{-1}
\mathbf A
K_{\mathbf Z\mathbf Z}^{-1}
k(\mathbf Z,\mathbf z^\ast).
\end{equation}

The final term is the \emph{variational correction} that distinguishes the
variational posterior from the exact GP conditional.

\paragraph{Zero-noise limit.}
We now consider the limit \(\sigma_n^2 \to 0\).
In this regime,
\[
\mathbf A
=
\mathbf K_{\mathbf Z\mathbf Z}
\bigl(
\mathbf K_{\mathbf Z\mathbf Z} + \sigma_n^2 I
\bigr)^{-1}
\mathbf K_{\mathbf Z\mathbf Z}
\;\xrightarrow[\sigma_n^2 \to 0]{}\;
\mathbf K_{\mathbf Z\mathbf Z}.
\]

Substituting into Eq.~\eqref{eq:var_cov_general}, the variational correction term
becomes
\[
k(\mathbf z^\ast,\mathbf Z)
K_{\mathbf Z\mathbf Z}^{-1}
\mathbf K_{\mathbf Z\mathbf Z}
K_{\mathbf Z\mathbf Z}^{-1}
k(\mathbf Z,\mathbf z^\ast)
=
k(\mathbf z^\ast,\mathbf Z)
K_{\mathbf Z\mathbf Z}^{-1}
k(\mathbf Z,\mathbf z^\ast),
\]
which exactly cancels the negative term in
Eq.~\eqref{eq:var_cov_general}.

Therefore, the variational predictive covariance reduces to
\begin{equation}
\operatorname{Var}\!\bigl(f(\mathbf z^\ast)\bigr)
=
k(\mathbf z^\ast,\mathbf z^\ast)
-
k(\mathbf z^\ast,\mathbf Z)
K_{\mathbf Z\mathbf Z}^{-1}
k(\mathbf Z,\mathbf z^\ast),
\end{equation}
which is precisely the standard inducing-point conditional covariance.

\paragraph{Implication for GAPA.}
Since GAPA conditions on noiseless pseudo-observations
\(\tilde{\mathbf Y} = \boldsymbol\phi(\mathbf Z)\) with a prior mean
\(m(\cdot) = \boldsymbol\phi(\cdot)\), the variational correction vanishes in the
zero-noise limit. Consequently, the posterior covariance used by GAPA coincides
with the standard inducing-point GP conditional covariance, while preserving
the deterministic backbone predictions exactly.

\section{Derivation for Stacking GAPA Layers}
\label{app:stacked_gapa_derivation}

When GAPA layers are stacked, the output of a preceding GAPA layer becomes the
input to a subsequent GAPA layer. Since each GAPA module produces a Gaussian
output, the input to the current layer is itself a random variable.
Let
\[
\mathbf z \sim \mathcal N(\boldsymbol\mu_z, \boldsymbol\Sigma_z)
\]
denote the pre-activation input to the current GAPA layer, where
\(\boldsymbol\Sigma_z\) is diagonal under our approximation.
We write
\[
\mathbf z = \boldsymbol\mu_z + \boldsymbol\varepsilon,
\qquad
\boldsymbol\varepsilon \sim \mathcal N(\mathbf 0, \boldsymbol\Sigma_z).
\]

We consider neuron-wise propagation. For neuron \(i\), the scalar input
\(z_i\) satisfies
\[
z_i = \mu_{z,i} + \varepsilon_i,
\qquad
\varepsilon_i \sim \mathcal N(0, \sigma_{z,i}^2).
\]

\paragraph{Base epistemic variance (deterministic input).}
Ignoring input uncertainty, the posterior variance of neuron \(i\) under the
KNN GAPA approximation is given by the standard inducing-point conditional
variance restricted to the local neighborhood
\(\mathbf Z_{\ell,K}(\mu_{z,i})\):
\begin{equation}
\label{eq:knn_base_var}
\sigma^2_{\mathrm{epi},i}(\mu_{z,i})
=
k_i(\mu_{z,i}, \mu_{z,i})
-
\mathbf k_i^\top
\bigl(
\mathbf K_i + \sigma_n^2 I
\bigr)^{-1}
\mathbf k_i,
\end{equation}
where \(\mathbf K_i \in \mathbb R^{K \times K}\) is the kernel matrix over the
\(K\) nearest inducing inputs, and
\([\mathbf k_i]_m = k_i(\mu_{z,i}, z_m)\).

\paragraph{Input uncertainty correction (NIGP).}
Because the test input is uncertain, we apply the noisy-input GP (NIGP)
approximation. Linearizing the posterior mean
\(\mu_i(z)\) around \(\mu_{z,i}\) yields an additional variance term
\begin{equation}
\label{eq:nigp_term}
\lambda_i(\mu_{z,i})
=
\sigma_{z,i}^2
\left(
\frac{\partial \mu_i(z)}{\partial z}
\Big|_{z = \mu_{z,i}}
\right)^2 .
\end{equation}

By construction of GAPA, the posterior mean equals the original deterministic
activation,
\[
\mu_i(z) = \phi_i(z),
\]
so the gradient term reduces to the derivative of the activation function,
and
\[
\lambda_i(\mu_{z,i})
=
\sigma_{z,i}^2 \bigl(\phi_i'(\mu_{z,i})\bigr)^2.
\]

\paragraph{Total predictive variance.}
Combining the epistemic variance from the local inducing-point posterior,
the NIGP correction due to input uncertainty, and optional observation noise
\(\sigma_{y,i}^2\), the total predictive variance for neuron \(i\) is
\begin{equation}
\label{eq:stacked_total_var}
\operatorname{Var}[y_i]
=
\underbrace{
\sigma^2_{\mathrm{epi},i}(\mu_{z,i})
}_{\text{activation-space epistemic}}
+
\underbrace{
\sigma_{z,i}^2 \bigl(\phi_i'(\mu_{z,i})\bigr)^2
}_{\text{propagated input uncertainty}}
+
\sigma_{y,i}^2 .
\end{equation}

This expression shows that stacking GAPA layers preserves mean predictions
exactly while propagating uncertainty forward in closed form, with each layer
contributing additional epistemic variance and attenuating or amplifying
uncertainty according to the local slope of the activation function.

\section{ResNets Pretrained Neural Networks}
\label{sec:ResNets} 

We report supplementary experiments and visualizations omitted from the main text for space.

\begin{table*}[t]
  \centering
    \caption{
    GAPA and baseline results on CIFAR-10 with ResNet backbones.
    This table reports the full results (including standard deviations) corresponding to
    Tables~\ref{tab:resnet_20_32} and~\ref{tab:resnet_44_56}.
    Best results are shown in \textcolor{purple}{purple} and second-best in \textcolor{teal}{teal}.
    }
    \label{tab:resnet_full}

  \vspace{0.2em}

  \footnotesize
  \setlength{\tabcolsep}{1.8pt}
  \renewcommand{\arraystretch}{1.05}

  \begin{adjustbox}{width=\textwidth}
  \begin{tabular}{l|ccccc|ccccc|ccccc|ccccc}
    \toprule
    & \multicolumn{5}{c|}{\textbf{ResNet-20}}
    & \multicolumn{5}{c|}{\textbf{ResNet-32}}
    & \multicolumn{5}{c|}{\textbf{ResNet-44}}
    & \multicolumn{5}{c}{\textbf{ResNet-56}} \\
    \cmidrule(lr){2-6}\cmidrule(lr){7-11}\cmidrule(lr){12-16}\cmidrule(lr){17-21}
    & ACC & NLL & OOD & Train & Test
    & ACC & NLL & OOD & Train & Test
    & ACC & NLL & OOD & Train & Test
    & ACC & NLL & OOD & Train & Test \\
    \midrule

    MAP
      & 92.6 & 0.282 & 0.876 & -- & --
      & 93.5 & 0.292 & 0.909 & -- & --
      & 94.0 & 0.275 & 0.885 & -- & 0.761
      & 94.4 & 0.252 & 0.924 & -- & 0.949 \\

    MF-VI
      & \textcolor{teal}{92.7} & 0.231 & 0.865 & 0.74 & \textcolor{purple}{0.47}
      & \textcolor{teal}{93.5} & 0.222 & 0.885 & 1.19 & \textcolor{purple}{0.75}
      & \textcolor{teal}{93.9} & 0.206 & 0.890 & 1.63 & \textcolor{teal}{1.03}
      & \textcolor{teal}{94.4} & 0.188 & 0.929 & 1.97 & \textcolor{teal}{1.18} \\

    SNGP
      & 92.4 & 0.266 & 0.875 & 15.9 & 1.31
      & 93.2 & 0.256 & 0.890 & 25.5 & 2.10
      & 93.8 & 0.242 & 0.901 & 35.0 & 2.89
      & 93.8 & 0.229 & \textcolor{teal}{0.940} & 43.5 & 3.01 \\

    GP (subset)
      & 92.6 & 0.555 & 0.870 & 3.75K & 162
      & 93.4 & 0.462 & 0.885 & 6.00K & 260
      & 93.6 & 0.424 & 0.897 & 8.25K & 357
      & 94.4 & 0.403 & 0.936 & 8.42K & 382 \\

    LLA Diag
      & 92.6 & 0.260 & 0.866 & 18.4 & \textcolor{teal}{0.43}
      & 93.5 & 0.242 & 0.882 & 29.4 & \textcolor{teal}{0.69}
      & 94.0 & 0.218 & 0.860 & 40.4 & \textcolor{purple}{0.947}
      & 94.3 & 0.195 & 0.923 & 40.67 & \textcolor{purple}{1.12} \\

    LLA KFAC
      & 92.6 & 0.241 & 0.877 & 28.7 & 1.65
      & 93.5 & 0.229 & 0.903 & 45.9 & 2.63
      & 94.0 & 0.213 & 0.855 & 63.1 & 3.62
      & 94.4 & 0.193 & 0.917 & 71.3 & 4.13 \\

    Sampled LLA
      & 92.5 & \textcolor{purple}{0.231} & 0.885 & 5.00K & 1.14K
      & 93.5 & \textcolor{teal}{0.217} & 0.905 & 8.00K & 1.83K
      & 94.0 & \textcolor{purple}{0.200} & 0.899 & 11.0K & 2.51K
      & 94.4 & \textcolor{purple}{0.185} & \textcolor{teal}{0.944} & 14.6K & 2.84K \\

    VaLLA
      & 92.4 & \textcolor{teal}{0.231} & \textcolor{purple}{0.940} & 7.59K & 124
      & 93.2 & \textcolor{purple}{0.212} & \textcolor{teal}{0.933} & 12.2K & 199
      & 93.8 & \textcolor{teal}{0.201} & \textcolor{teal}{0.928} & 16.7K & 272.9
      & 94.2 & \textcolor{teal}{0.188} & \textcolor{purple}{0.960} & 26.3K & 363.8 \\

    \textbf{GAPA (ours)}
      & 92.6 & 0.258 & \textcolor{teal}{0.907} & \textcolor{purple}{3.65} & 1.30
      & 93.5 & 0.259 & \textcolor{purple}{0.926} & \textcolor{purple}{5.84} & 2.07
      & 94.0 & 0.230 & \textcolor{purple}{0.931} & \textcolor{purple}{8.03} & 2.85
      & 94.4 & 0.230 & \textcolor{teal}{0.953} & \textcolor{purple}{10.29} & 3.30 \\

    \bottomrule
  \end{tabular}
  \end{adjustbox}
\end{table*}


\paragraph{ResNet backbones and computational trade-offs.}
Table~\ref{tab:resnet_full} reports full CIFAR-10 results for pretrained ResNet backbones,
including in-distribution performance (ACC/NLL), OOD detection (AUROC), and train/test runtime.
Across all depths, GAPA achieves competitive in-distribution accuracy and NLL while providing strong OOD detection.
Notably, methods with the strongest OOD scores (e.g., VaLLA and sampled/GP-subset baselines) often incur
substantially higher computational costs, both at test time and train time.

Figure~\ref{fig:ood_time_44_56} visualises the robustness--efficiency trade-off by plotting OOD AUROC against
test-time inference cost (log scale).
Dashed lines indicate Pareto frontiers (higher OOD, lower cost), highlighting methods that offer the best compromise
between reliability and efficiency.
Across backbones, GAPA lies on the Pareto frontier, achieving strong OOD AUROC while avoiding the
\(10^2\)--\(10^3\)s inference costs characteristic of more computationally intensive Bayesian baselines.

\section{Image Segmentation}
\label{sec:segmentation} 
\begin{figure}ht
  \centering
  \includegraphics[width=0.9\linewidth]{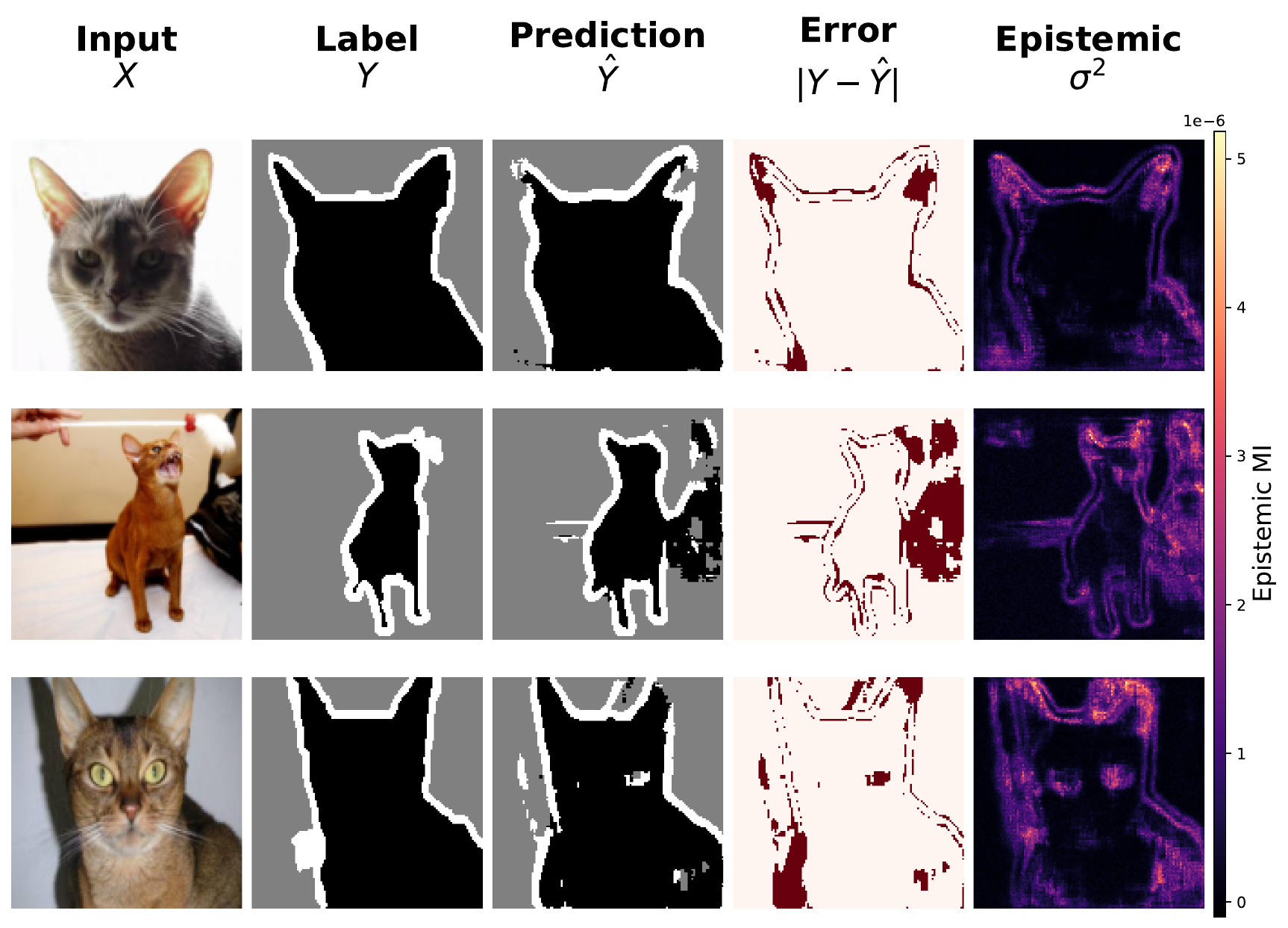}
  \caption{%
    Qualitative segmentation results with pixel-wise error and epistemic uncertainty.
    \textbf{Columns}: (1) Input image \(X\), (2) Ground-truth mask \(Y\), (3) Predicted mask \(\hat Y\),
    (4) Error map \(\lvert Y-\hat Y\rvert\), (5) Epistemic uncertainty (mutual information).
    \textbf{Rows}: three representative validation examples.}
  \label{fig:segmentation-uncertainty}
\end{figure}

As a proof of concept for high-dimensional outputs, we apply GAPA to a U-Net model \citep{ronneberger2015u} pre-trained on the Oxford-IIIT Pet dataset \citep{parkhi2012cats} for a 3-class segmentation task (background, pet, outline) with input images resized to \(128 \times 128\). The U-Net architecture features an encoder path with two downsampling stages (32 and 64 channels, using double convolutions and max pooling), leading to a bottleneck with 128 channels. From this bottleneck, an embedding head comprising adaptive average pooling and a linear layer projects the features to a \(d=64\) dimensional embedding vector. Standard skip connections are used in the decoder path.

For these experiments, GAPA was applied to this \(d=64\) dimensional embedding vector at the U-Net bottleneck. This vector represents the most compressed representation in the network, and its 1D nature (after pooling and flattening). The GAPA-processed embedding (mean preserved, variance added) is then reshaped and fed into the decoder to produce the final segmentation map.

The dimensionality of the full segmentation output space (e.g., \( 128 \times 128 \times 3\) or \( \sim\!224\times224 \) per image if referring to original dataset paper's output size before your resize) renders methods like full Laplace approximation computationally infeasible due to memory and time constraints (e.g., matrix inversions on \(\mathcal{O}(10^5)\) outputs or more). In contrast, applying GAPA at the compressed embedding stage scales efficiently. Figure~\ref{fig:segmentation-uncertainty} demonstrates that this approach not only produces accurate segmentation masks but also generates spatially localized epistemic uncertainty maps that precisely highlight regions where prediction errors occur.

\begin{figure*}[t]
    \centering
    \includegraphics[width=0.24\textwidth]{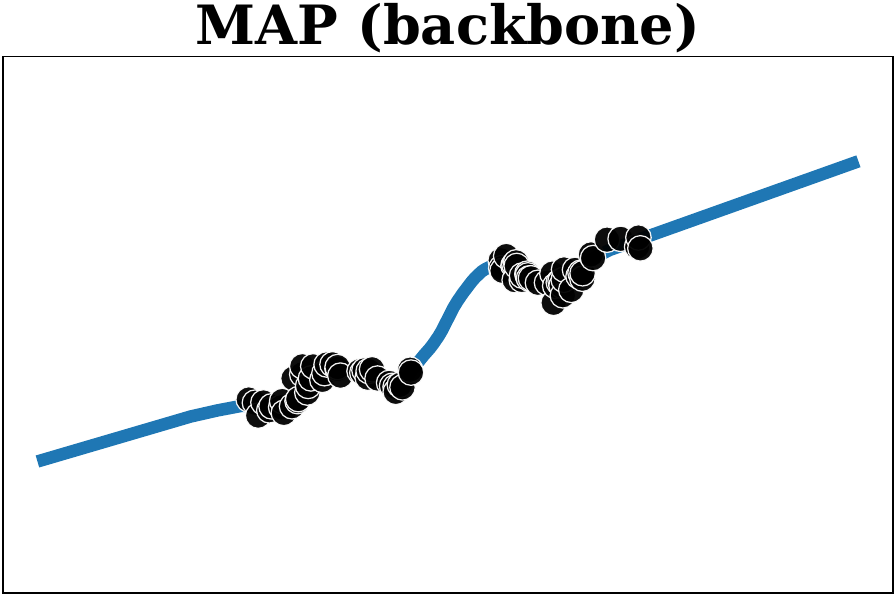}\hfill
    \includegraphics[width=0.24\textwidth]{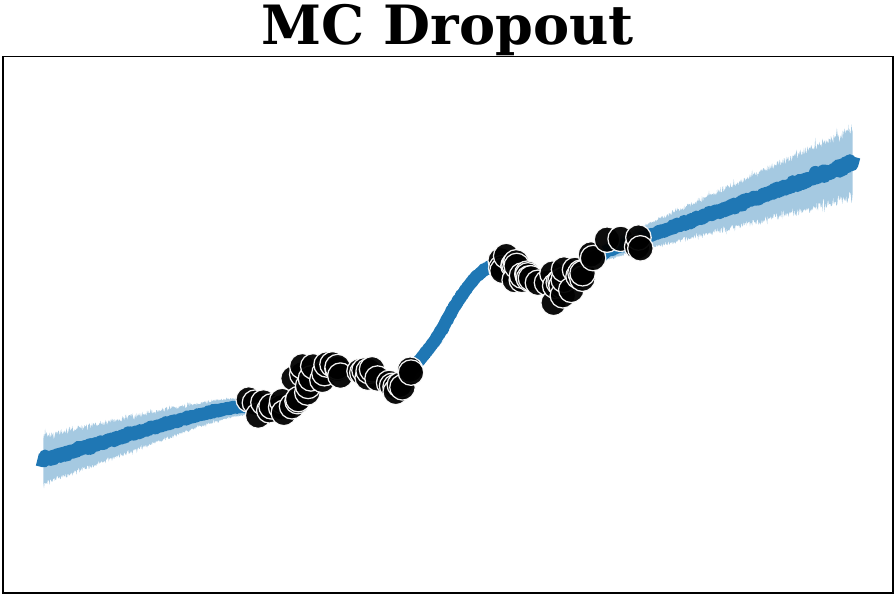}\hfill
    \includegraphics[width=0.24\textwidth]{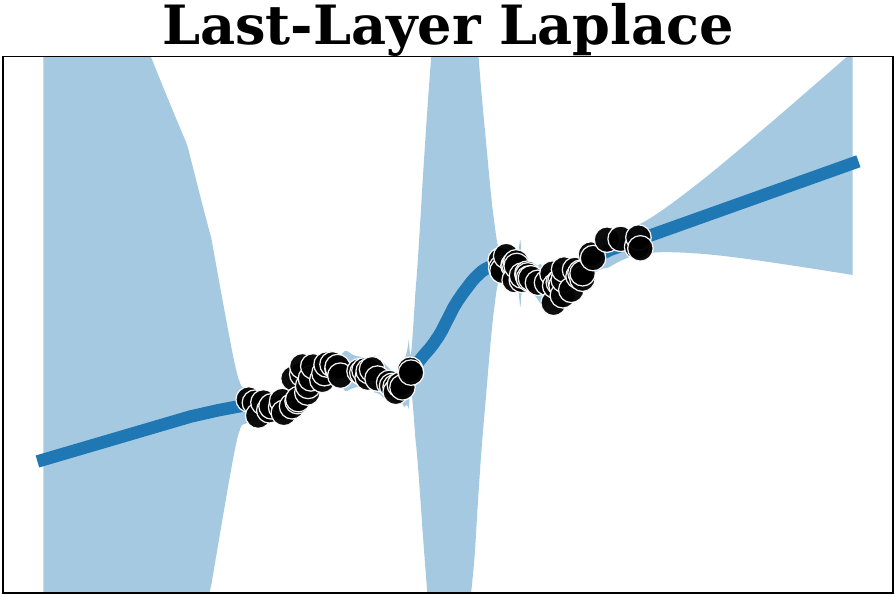}\hfill
    \includegraphics[width=0.24\textwidth]{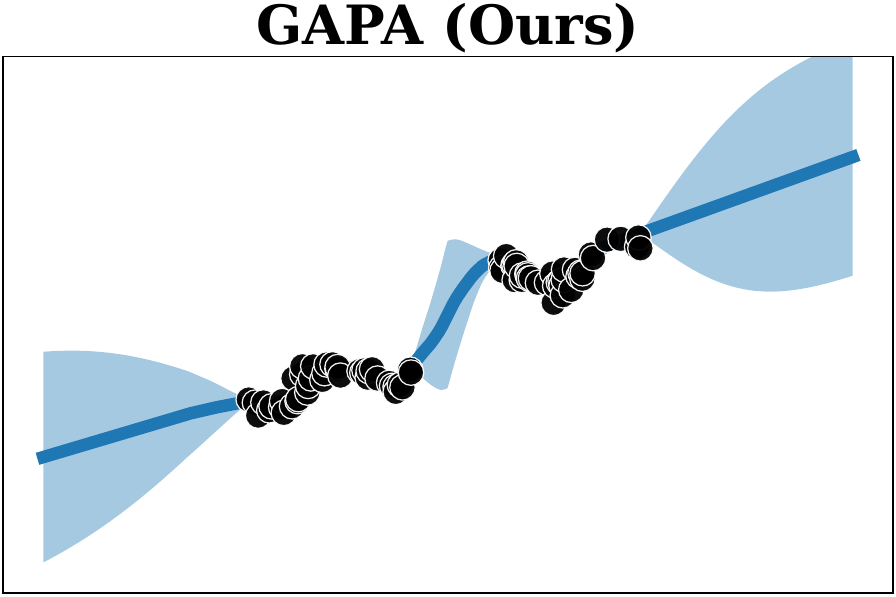}

    \caption{Regression predictions and uncertainty: (a) MAP backbone, (b) MC Dropout, (c) last-layer Laplace, (d) GAPA (ours).}
    \label{fig:example_regression}
\end{figure*}

\clearpage
\section{LLaMA-3.2 Additional Results}\label{app:llama}
In Figure \ref{fig:llama_random_ood} we show an additional ablation study where we compare inducing point selection using \texttt{Kmeans} or random choice. We find that random underperformed compared to \texttt{KMeans}. 
\begin{figure*}[h!]
\centering
    \includegraphics[width=0.45\textwidth]{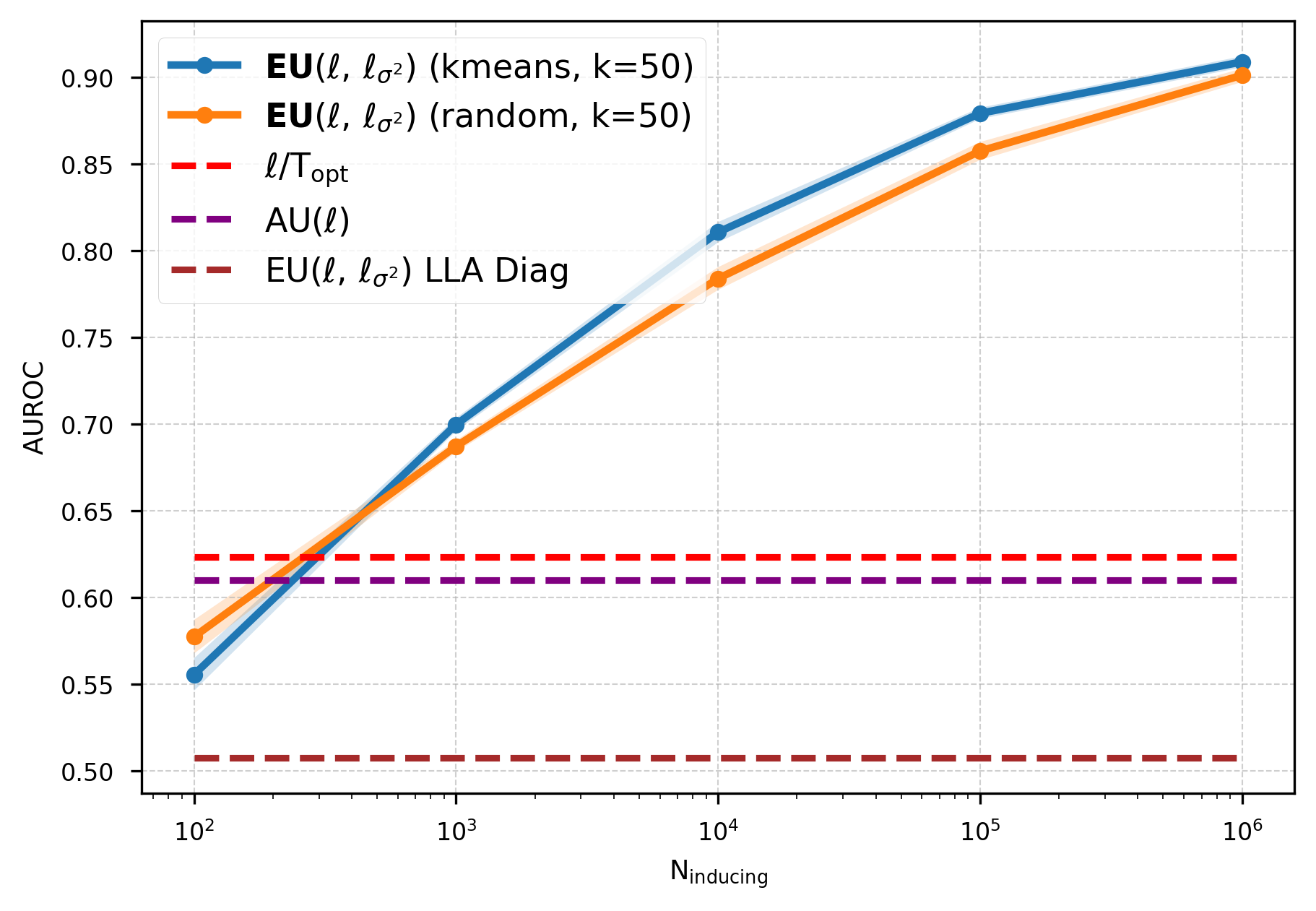}
  \caption{Effect of the number of inducing points \(N_{\text{inducing}}\) and preprocessing strategy on OOD detection task. We plot the AUC using EU (blue) with GAPA at layer [27]. Results are averaged over $5$ runs with $512$ sequences each.  In both panels we also show the \(\ell/T_{\mathrm{opt}}\) bound (green) as an upper threshold of what can be achieved by global logits scaling.}
    \label{fig:llama_random_ood}

  \end{figure*}

\section{GAPA Hyperparameters}
This section details hyperparameters, inference procedures, and architectural propagation rules.

\subsection{GAPA Empirical Hyperparameters}\label{Appendix:empirical_kernel}
\label{sec:gapa_hyp}

For the \textbf{GAPA}, we deliberately avoid any gradient–based
hyper-parameter optimisation.  
Instead, the RBF-kernel length scale~$\ell_k$, signal amplitude~$\sigma_k$,  
and the set of pseudo-inputs~$\mathcal Z$ are fixed once from simple
empirical statistics of the training data.

\paragraph{Length scale \boldmath$\ell_k$.}
We set every neuron’s length scale to the empirical median of all pairwise
Euclidean distances between training inputs:
\[
d_{ij} = \lVert x_i - x_j\rVert_2,\quad
\ell_k = \mathrm{Median}\bigl(\{d_{ij}\}\bigr).
\]
In our implementation we approximate this by sampling $10^6$ random pairs.

\paragraph{Signal variance \boldmath$\sigma_k^2$.}
For each hidden neuron we compute the sample standard deviation of its
pre-activations over the training set:
\[
\sigma_k = \operatorname{Std}\bigl\{h_k(x_i)\bigr\}_{i=1}^N.
\]
Clamped to a minimum of $10^{-6}$ to ensure numerical stability.

\paragraph{Pseudo-inputs \boldmath$\mathcal Z$.}
With a budget of $M$ inducing points we perform a greedy farthest-first
traversal over the training inputs:

\begin{enumerate}[leftmargin=*,nosep]
  \item Select an arbitrary $z_1$ from the training set.
  \item For $m=2,\dots,M$, choose $z_m$ as the training input whose
        minimum Euclidean distance to $\{z_1,\dots,z_{m-1}\}$ is maximal.
\end{enumerate}

\paragraph{KMeans pseudo-inputs.}
As an alternative to farthest-first traversal, we also provide a KMeans-based
strategy for selecting inducing points.  
In this variant, the pseudo-input set $\mathcal Z$ consists of the $M$ cluster
centroids obtained by running KMeans on the training activations.

We initialise the clustering using the standard \textbf{KMeans++} seeding
procedure: the first centre is chosen uniformly at random, and each subsequent
centre is selected with probability proportional to its squared distance from
the closest existing centre.  
This produces well-separated initial centroids and improves stability and
convergence compared to random initialisation.

KMeans provides a simple, task-agnostic alternative to
farthest-first traversal, and can be used interchangeably within GAPA for
constructing $\mathcal Z$.

\subsection{Regression Training Details}
\label{Appendix:gapa_train}

For regression, we parameterize the aleatoric variance using a small MLP head $s_\psi$ that takes hidden representations as input:
\[
\sigma^2_{\text{ale}}(\mathbf{x}) = \mathrm{softplus}(s_\psi(\mathbf{x})) + \varepsilon
\]
where $\varepsilon = 10^{-6}$ is a variance floor preventing numerical instability. The total predictive variance combines epistemic (from GAPA) and aleatoric components:
\[
\sigma^2_{\text{tot}}(\mathbf{x}) = \sigma^2_{\text{epi}}(\mathbf{x}) + \sigma^2_{\text{ale}}(\mathbf{x})
\]
We train only the parameters $\psi$ by minimizing:
\[
\mathcal{L}_{\text{reg}} = \frac{1}{N}\sum_{n=1}^N \left[\frac{(y_n - \mu_n)^2}{2\sigma^2_{\text{tot}}(\mathbf{x}_n)} + \frac{1}{2}\log(2\pi\sigma^2_{\text{tot}}(\mathbf{x}_n))\right]
\]
where $\mu_n$ is the fixed mean prediction from the frozen backbone. This preserves exact mean predictions while learning data-dependent noise.

\section{Nearest–Neighbour Retrieval with \texttt{Faiss}}
\label{sec:faiss}

GAPA requires fast retrieval of the $K$ nearest inducing activations in
activation space. Given a set of inducing inputs
\(
\mathcal Z = \{z_m\}_{m=1}^{M} \subset \mathbb R^{d}
\),
for each query activation $z^\ast$ we compute
\[
\mathcal N_K(z^\ast)
=
\arg\min_{\substack{S \subset \{1,\dots,M\}\\ |S|=K}}
\sum_{m \in S} \|z^\ast - z_m\|_2 .
\]
A brute-force search costs $\mathcal O(Md)$ per query.
Instead, we use \texttt{Faiss} to support efficient approximate
nearest-neighbour retrieval with sublinear complexity.

\subsection{Index construction}

\begin{enumerate}[leftmargin=*,nosep]
\item \textbf{Choice of index.}
For small to moderate inducing sets we use \texttt{IndexFlatL2} (exact search).
For large $M$, we use \texttt{IndexIVFPQ}, which partitions the space via
coarse $k$-means clustering and stores PQ-compressed residuals
\cite{douze2025faiss}.
\item \textbf{Training (optional).}
Quantization-based indices (IVF, PQ) require a one-time offline training
step on a representative subset of inducing activations.
\item \textbf{Adding vectors.}
All inducing activations $z_m$ are added once to the index; their identifiers
link back to the cached kernel statistics required for GP inference.
\end{enumerate}

The resulting index requires $\mathcal O(M)$ memory and supports
$K$-NN queries in $\mathcal O(\log M)$ expected time for IVF-based indices.

\subsection{Query procedure}

For each test-time activation $z^\ast$:

\begin{enumerate}[leftmargin=*,nosep]
\item Query the Faiss index to retrieve the $K$ nearest inducing inputs:
\[
(\mathbf d, \mathcal N_K)
\leftarrow
\texttt{index.search}(z^\ast,\,K).
\]
\item Form the local inducing set
\(
\mathbf Z_K(z^\ast) = \{z_m : m \in \mathcal N_K\}.
\)
\item Compute the neuron-wise posterior variance using the standard
inducing-point conditional restricted to this local set:
\[
\operatorname{Var}[f(z^\ast)]
=
k(z^\ast,z^\ast)
-
\mathbf k^\top
\bigl(
\mathbf K_K + \sigma_n^2 I
\bigr)^{-1}
\mathbf k,
\]
where $\mathbf K_K$ is the $K \times K$ kernel matrix on $\mathbf Z_K(z^\ast)$
and $[\mathbf k]_m = k(z^\ast, z_m)$.
\end{enumerate}

By construction, the posterior mean remains equal to the original
deterministic activation, so nearest-neighbour retrieval affects only the
epistemic variance.

\subsection{Complexity}

\begin{itemize}[leftmargin=*,nosep]
\item \emph{Index construction:} one-off $\mathcal O(Md)$ time and
$\mathcal O(M)$ memory.
\item \emph{Query:} $\mathcal O(\log M)$ approximate neighbour search plus
$\mathcal O(K^3)$ local linear algebra, with $K \ll M$ fixed.
\end{itemize}

This design keeps test-time GP inference independent of the total number of
cached activations while preserving a principled, distance-aware posterior
variance.

\section{Laplace-Bridge Approximation for Classification}
\label{app:laplace_bridge}

Given mean logits $\boldsymbol{\mu}\in\mathbb{R}^C$ and per-class variances $\mathbf{v}\in\mathbb{R}^C$ from GAPA propagation, we compute predictive probabilities using:
\begin{equation}
p(y=c \mid \mathbf{x}) \approx \frac{\exp\!\Big(\mu_c \big/ \sqrt{1 + (\pi/8)v_c}\Big)}{\sum_{c'=1}^{C} \exp\!\Big(\mu_{c'} \big/ \sqrt{1 + (\pi/8)v_{c'}}\Big)}
\end{equation}
The division and square root are applied element-wise to each logit before the softmax. This approximation integrates Gaussian logit uncertainty into categorical predictions without sampling, derived from the probit approximation $\Phi(x) \approx \sigma(x\sqrt{\pi/8})$ where $\Phi$ is the Gaussian CDF and $\sigma$ is the sigmoid function.

\section{Metrics}
\subsection{Regression Metrics}
\label{app:regression_metrics}

For evaluating performance on regression tasks (Section~\ref{sec:results_regression}), we use several key metrics.
First, the \textbf{Negative Log-Likelihood (NLL)} measures the quality of the predictive probability distribution. Assuming a Gaussian predictive distribution \(p(y|x) = \mathcal{N}(y; \mu(x), \sigma^2(x))\), where \(\mu(x)\) is the predicted mean and \(\sigma^2(x)\) is the predicted variance, the NLL for a true target value \(y_{\text{true}}\) is \( \frac{1}{2} \log(2\pi\sigma^2(x)) + \frac{(y_{\text{true}} - \mu(x))^2}{2\sigma^2(x)} \). Lower NLL values are better, indicating that the predictive distribution is both accurate and appropriately confident.
Second, the \textbf{Continuous Ranked Probability Score (CRPS)} \citep{gneiting2007strictly} generalizes the Mean Absolute Error (MAE) to probabilistic forecasts. For a predictive cumulative distribution function (CDF) \(F\) and a true outcome \(y_{\text{true}}\), it is defined as \(\text{CRPS}(F, y_{\text{true}}) = \int_{-\infty}^{\infty} (F(y) - \mathbf{1}\{y \ge y_{\text{true}}\})^2 dy\), where \(\mathbf{1}\{\cdot\}\) is the indicator function. For a Gaussian predictive distribution \(\mathcal{N}(\mu, \sigma^2)\), a closed-form expression exists. Lower CRPS values are better, indicating a sharper and more calibrated predictive distribution.
Finally, the \textbf{Centered Quantile Metric (CQM)}, as proposed by \citet{ortega2023variational}, evaluates the calibration of specific quantiles of the predictive distribution. It typically focuses on how well the predicted quantiles (e.g., the 5th and 95th percentiles) align with the empirical frequency of observations falling below these quantiles. A common formulation might assess the average miscalibration across symmetric quantiles, where lower CQM values generally indicate better quantile calibration.

\subsection{Classification Metrics}
\label{app:classification_metrics} 

For evaluating performance on classification tasks (Section~\ref{sec:classification_results}), we use several key metrics.
\textbf{Accuracy (ACC)} is the overall proportion of correctly classified samples; we note that GAPA, by design, preserves the mean predictions of the backbone network, so its ACC should match that of the original pre-trained model unless other methods being compared modify these predictions.
The \textbf{Negative Log‐Likelihood (NLL)}, in classification, is equivalent to the cross-entropy loss and measures the quality of the predictive probability distribution. For a given sample with true class label \(y_{\text{true}}\) (out of \(C\) classes) and where the model predicts a probability distribution \(p(y|x)\) over the classes, the NLL for that sample is specifically \(-\log p(y_{\text{true}}|x)\), which is the negative logarithm of the probability assigned by the model to the correct class; lower values indicate better performance.
\textbf{Expected Calibration Error (ECE)} measures the discrepancy between-+ a model's predicted confidences and its empirical accuracies. Predictions are typically binned by their confidence scores. For each bin \(B_m\), the accuracy \(\mathrm{acc}(B_m)\) and average confidence \(\mathrm{conf}(B_m)\) are computed. ECE is then a weighted average of the absolute difference: \(\sum_{m=1}^{M} \frac{|B_m|}{N} |\mathrm{acc}(B_m) - \mathrm{conf}(B_m)|\), where \(N\) is the total number of samples; lower values indicate better calibration.
For \textbf{Out-of-Distribution (OOD) Detection}, we report the Area Under the ROC curve (AUC). This evaluates the model's ability to distinguish between in-distribution (ID) and out-of-distribution (OOD) samples based on an uncertainty score. We primarily use the predictive entropy of the softmax distribution as the uncertainty score (denoted \textbf{OOD-Entropy} or \textbf{OOD-AUC}); higher AUC values (closer to 1) indicate better OOD detection performance.
We also evaluate \textbf{OOD Detection AUC with BALD (OOD-BALD)}, which is similar to the above, but the uncertainty score used for OOD detection is the Bayesian Active Learning by Disagreement (BALD) score \citep{houlsby2011Bayesian}. BALD measures the mutual information between the model's predictions and its parameters, often providing a better measure of epistemic uncertainty; a higher AUC indicates better OOD detection using BALD.

\section{Variance Propagation in Transformer Architectures}\label{app:propagation}
To implement variance propagation in transformers, in addition to the classical linear layers or activation, we need three additional propagation rules: \texttt{RMSNorm}, \texttt{CausalSelfAttention} and \texttt{Softmax}.

\subsection{Attention}\label{app:attention}
Here, we present two variants to propagate the variance through a self-attention layer.

Given an input vector $\mathbf x\in\mathbb{R}^{d}$ with per–feature variances
$\mathrm{Var}(x_j)=v_j$, we first form the standard query/key/value projections
\[
q = W^Q x,\quad
k = W^K x,\quad
v = W^V x,
\]
with
\begin{equation*}
\mathrm{Var}(q_i)
  = \sum_{j=1}^d (W^Q_{ij})^2\,v_j,
\quad
\mathrm{Var}(k_i)
  = \sum_{j=1}^d (W^K_{ij})^2\,v_j,
\quad
\mathrm{Var}(v_i)
  = \sum_{j=1}^d (W^V_{ij})^2\,v_j.
\end{equation*}

\textbf{Variant A.}\;
We treat the attention weights $a_{ts}$ as deterministic, and propagate akin to a linear layer propagation: 
\begin{equation*}
\operatorname{Var}(y_{t,i})
  \;=\;
  \sum_{s} a_{ts}^2\,\operatorname{Var}(v_{s,i}).
\end{equation*}

\medskip
\textbf{Variant B.}\;
Let $d_k$ be the head dimension and define the scaled logits
\(
e_{ts}=d_k^{-1/2}\,q_t^\top k_s.
\)
Under the delta method
\begin{equation*}
\operatorname{Var}(a_{ts})
  \;=\;
  \frac{1}{d_k}\sum_{h=1}^{d_k}\!\Bigl(
       q_{t,h}^2\,\operatorname{Var}(k_{s,h})
     + k_{s,h}^2\,\operatorname{Var}(q_{t,h})
     + \operatorname{Var}(q_{t,h})\,\operatorname{Var}(k_{s,h})
  \Bigr).
\end{equation*}
After masking and applying the soft-max propagation rule of Appendix \ref{app:softmax} we obtain
$\operatorname{Var}(a_{ts})$.  
The variance of the head output is then
\begin{equation*}
\operatorname{Var}(y_{t,i})
  \;=\;
  \sum_{s}\Bigl[
      \operatorname{Var}(a_{ts})\,v_{s,i}^{2}
    + a_{ts}^{2}\,\operatorname{Var}(v_{s,i})
    + \operatorname{Var}(a_{ts})\,\operatorname{Var}(v_{s,i})
  \Bigr].
\end{equation*}

While the second method is arguably modeling the overall variance propagation in a more sophisticated way, from a design choice perspective, the decision is not obvious: the first propagation scheme is much faster. Although we weren't directly able to use flash attention, in theory a \texttt{FlashAttention} kernel could be modded to calculate the squared attention operation on-the-fly at no additional cost. Secondly we found that the variances can grow quickly the more layer the transformer model has because of the compounding, multiplicative  effect of the variance over both the attention scores and the query, key and values. This compounding effect could be addressed with  full covariance propagation, however, for transformers the embedding space  is large (e.g. for LLaMA-3.2, 3B it is $4,096$), making a full covariance treatment computationally intractable (and low-rank approximations either still exhibit compounding effect (if too low-rank) or again intractable). Based on these reasons in the paper use Variant A:  we will assume the model knows where to look (deterministic attention weights), but is uncertain about what it sees there (value variance).

\subsection{RMSNorm}
Let $\mathbf{x} \in \mathbb{R}^d$ with per-feature variances $\mathrm{Var}(x_j) = v_j$.
RMSNorm computes the root mean square
\begin{equation*}
\text{RMS}(\mathbf{x}) = \sqrt{\frac{1}{d}\sum_{j=1}^d x_j^2 + \varepsilon},
\end{equation*}
and applies the transformation
\begin{equation*}
y_i = \gamma_i \cdot \frac{x_i}{\text{RMS}(\mathbf{x})},
\end{equation*}
where $\gamma_i$ are learned scale parameters and $\varepsilon > 0$ is a small constant for numerical stability.

As a first-order approximation  we define the expected RMS squared as
\begin{equation*}
s^2 = \mathbb{E}\Bigl[\frac{1}{d}\sum_{j=1}^d x_j^2\Bigr] + \varepsilon.
\end{equation*}
Using the identity $\mathbb{E}[x_j^2] = \mathrm{Var}(x_j) + \mathbb{E}[x_j]^2$, we can rewrite this as
\begin{align*}
s^2 &= \frac{1}{d}\sum_{j=1}^d \mathbb{E}[x_j^2] + \varepsilon \\
    &= \frac{1}{d}\sum_{j=1}^d \bigl(v_j + \mathbb{E}[x_j]^2\bigr) + \varepsilon.
\end{align*}

Under this approximation, we treat $s^2$ as deterministic and propagate variance as
\begin{equation*}
\mathrm{Var}(y_i) \approx \frac{\gamma_i^2}{s^2} \, v_i.
\end{equation*}
\medskip
\noindent
The following PyTorch implementation realizes the simplified scheme:

{\renewcommand{\baselinestretch}{1.2}\selectfont
\begin{lstlisting}[style=arxivpython]
class RMSNormVar(torch.nn.Module):
    def __init__(self, dim: int, eps: float = 1e-6):
        super().__init__()
        self.eps = eps
        self.weight = nn.Parameter(torch.ones(dim))

    def _norm(self, x):
        return x * torch.rsqrt(x.pow(2).mean(-1, keepdim=True) + self.eps)

    def forward(self, input_mean, input_var):
        """
        Args:
            input_mean (torch.Tensor): Input means, shape [batch_size, ..., feature_dim]
            input_var (torch.Tensor): Input variances, shape same as input_mean

        Returns:
            tuple: (output_mean, output_var)
        """

        output_mean = self._norm(input_mean) * self.weight

        # Compute expected rms squared
        expected_rms_squared = (
            input_mean.pow(2).mean(dim=-1, keepdim=True)
            + input_var.mean(dim=-1, keepdim=True)
            + self.eps
        )

        # Compute output variances
        output_var = input_var / expected_rms_squared * self.weight.pow(2)

        return output_mean, output_var
\end{lstlisting}
}
\subsection{Softmax}\label{app:softmax}
For softmax we  follow the Delta method approach. We note that this method is only used for the second variant of \texttt{SelfAttention}, whereas in this paper we use the first variant.

Let $\mathbf x\in\mathbb R^{K}$ with per–feature variances
$\mathrm{Var}(x_i)=v_i$.  The softmax output is
\begin{equation*}
s_k = \frac{e^{x_k}}{\sum_{j=1}^K e^{x_j}}.
\end{equation*}

The Jacobian of the softmax for fixed~$k$ is
\begin{equation*}
\frac{\partial s_k}{\partial x_i}
  = s_k\,(\delta_{ik} - s_i).
\end{equation*}

Applying the Delta method with 
$\Sigma_x=\operatorname{diag}(v_1,\dots,v_K)$ gives
\begin{equation*}
\operatorname{Var}(s_k)
  \;=\;
  \sum_{i=1}^{K}
    \Bigl(s_k(\delta_{ik}-s_i)\Bigr)^{2}\,v_i.
\end{equation*}

If we split out the $i=k$ term and the $i\neq k$ terms, this expands to

\begin{align*}
\operatorname{Var}(s_k)
  &= s_k^{2}\,(1 - s_k)^{2}\,v_k
  \;+\;
  \sum_{i \neq k} s_k^{2}\,s_i^{2}\,v_i \\
  &= s_k^{2}\Bigl[(1 - s_k)^{2}\,v_k + \sum_{i \neq k} s_i^{2}\,v_i\Bigr].
\end{align*}

\begin{lstlisting}[style=arxivpython]
def softmax_var(y_mean, x_var, axis=-1):
    y = y_mean.transpose(axis, -1)
    v = x_var.transpose(axis, -1)
    W = y.pow(2) * v
    S = W.sum(dim=-1, keepdim=True)
    sum_excluding_k = S - W
    diag_term = (1 - y).pow(2) * v
    var_last = y.pow(2) * (diag_term + sum_excluding_k)
    return var_last.transpose(-1, axis)
\end{lstlisting}

\newpage

\newpage

\clearpage


\section{Tables with Standard Deviations}

\subsection{Regression}
\begin{table*}[t]
    \centering
    \caption{Results on regression datasets with standard deviations (in $\times 10^{-3}$ units).
    Best values are in \textcolor{purple}{purple}, and
    second-best in \textcolor{teal}{teal}.
    An asterisk (*) indicates a last-layer LLA variant.
    Results are averages over 5 random seeds.
    This is the full version of Table~\ref{tab:results} with stds included.}
    \label{tab:results_std}
    \vspace{0.5em}
    \resizebox{\textwidth}{!}{%
    \begin{tabular}{l|ccc|ccc|ccc}
    \toprule
    \multirow{2}{*}{\textbf{Model}} & \multicolumn{3}{c|}{\textbf{Airline}} & \multicolumn{3}{c|}{\textbf{Year}} & \multicolumn{3}{c}{\textbf{Taxi}} \\
    \cmidrule(lr){2-4} \cmidrule(lr){5-7} \cmidrule(l){8-10}
     & \textbf{NLL} & \textbf{CRPS} & \textbf{CQM} & \textbf{NLL} & \textbf{CRPS} & \textbf{CQM} & \textbf{NLL} & \textbf{CRPS} & \textbf{CQM} \\
    \midrule
    MAP (backbone) & 5.121 ($\pm$0.5) & 18.695 ($\pm$0.6) & 0.148 ($\pm$0.4)
                   & 3.673 ($\pm$0.4) & 5.023 ($\pm$0.5) & 0.134 ($\pm$0.3)
                   & 3.775 ($\pm$0.5) & \textcolor{teal}{3.755} ($\pm$0.4) & 0.211 ($\pm$0.4) \\
    LLA Diag       & 5.125 ($\pm$0.4) & 18.648 ($\pm$0.5) & 0.143 ($\pm$0.3)
                   & 3.647 ($\pm$0.3) & 4.917 ($\pm$0.4) & 0.088 ($\pm$0.2)
                   & 3.722 ($\pm$0.4) & 3.990 ($\pm$0.5) & 0.257 ($\pm$0.3) \\
    LLA KFAC       & 5.127 ($\pm$0.3) & 18.631 ($\pm$0.4) & 0.142 ($\pm$0.3)
                   & 3.648 ($\pm$0.3) & 4.915 ($\pm$0.4) & 0.086 ($\pm$0.2)
                   & 3.706 ($\pm$0.3) & 3.986 ($\pm$0.4) & 0.256 ($\pm$0.3) \\
    LLA*           & 5.127 ($\pm$0.4) & \textcolor{teal}{18.631} ($\pm$0.5) & 0.141 ($\pm$0.3)
                   & 3.648 ($\pm$0.3) & {4.915} ($\pm$0.4) & 0.086 ($\pm$0.2)
                   & 3.726 ($\pm$0.4) & 3.985 ($\pm$0.5) & 0.256 ($\pm$0.3) \\
    LLA* KFAC      & 5.127 ($\pm$0.3) & {18.631} ($\pm$0.4) & 0.141 ($\pm$0.3)
                   & 3.648 ($\pm$0.3) & \textcolor{teal}{4.914} ($\pm$0.4) & 0.086 ($\pm$0.2)
                   & 3.726 ($\pm$0.4) & 3.985 ($\pm$0.4) & 0.256 ($\pm$0.3) \\
    ELLA           & 5.388 ($\pm$0.6) & 21.671 ($\pm$0.7) & 0.413 ($\pm$0.5)
                   & 4.020 ($\pm$0.5) & 6.049 ($\pm$0.6) & 0.424 ($\pm$0.4)
                   & 3.885 ($\pm$0.5) & \textcolor{purple}{3.680} ($\pm$0.4) & 0.219 ($\pm$0.4) \\
    VaLLA 100      & \textcolor{teal}{4.963} ($\pm$0.3) & 18.814 ($\pm$0.5) & \textcolor{teal}{0.099} ($\pm$0.2)
                   & {3.515} ($\pm$0.3) & 5.004 ($\pm$0.5) & {0.047} ($\pm$0.2)
                   & {3.235} ($\pm$0.3) & 3.999 ($\pm$0.4) & {0.149} ($\pm$0.2) \\
    VaLLA 200      & {4.965} ($\pm$0.3) & 18.788 ($\pm$0.4) & \textcolor{purple}{0.098} ($\pm$0.2)
                   & \textcolor{teal}{3.485} ($\pm$0.3) & 4.970 ($\pm$0.4) & \textcolor{teal}{0.041} ($\pm$0.2)
                   & \textcolor{teal}{3.232} ($\pm$0.3) & {3.979} ($\pm$0.4) & \textcolor{teal}{0.142} ($\pm$0.2) \\
    Dropout        & 5.102 ($\pm$0.5) & 19.066 ($\pm$0.6) & 0.938 ($\pm$0.5)
                   & 3.689 ($\pm$0.5) & 5.128 ($\pm$0.5) & 0.939 ($\pm$0.4)
                   & 3.849 ($\pm$0.6) & 4.592 ($\pm$0.6) & 0.951 ($\pm$0.5) \\
    Ensemble       & 5.053 ($\pm$0.4) & 18.205 ($\pm$0.5) & 0.933 ($\pm$0.4)
                   & 3.639 ($\pm$0.4) & 4.833 ($\pm$0.5) & 0.938 ($\pm$0.4)
                   & 3.631 ($\pm$0.5) & 3.384 ($\pm$0.5) & 0.961 ($\pm$0.4) \\
    \textbf{GAPA (ours)}
                   & \textcolor{purple}{4.946} ($\pm$0.3) & \textcolor{purple}{18.068} ($\pm$0.4) & {0.103} ($\pm$0.3)
                   & \textcolor{purple}{3.470} ($\pm$0.3) & \textcolor{purple}{4.663} ($\pm$0.4) & \textcolor{purple}{0.014} ($\pm$0.2)
                   & \textcolor{purple}{3.112} ($\pm$0.3) & 4.035 ($\pm$0.4) & \textcolor{purple}{0.104} ($\pm$0.2) \\
    \bottomrule
    \end{tabular}
    }
    \vspace{0.5em}
\end{table*}

\subsection{Feedforward Neural Network Classification}
\begin{table*}[t]
  \centering
  \caption{Results on classification datasets with standard deviations (in $\times 10^{-3}$ units).
  Best values are in \textcolor{purple}{purple}, second-best in \textcolor{teal}{teal}.
  Values are averages over 5 random seeds; consistent with $<10^{-3}$ in all cases.}
  \label{tab:results_classif_std}
  \vspace{0.5em}
  \resizebox{\textwidth}{!}{%
    \begin{tabular}{l|ccccc|ccccc}
      \toprule
      \multirow{2}{*}{\textbf{Model}}
        & \multicolumn{5}{c|}{\textbf{MNIST}}
        & \multicolumn{5}{c}{\textbf{FMNIST}} \\
      \cmidrule(lr){2-6} \cmidrule(lr){7-11}
        & \textbf{ACC} & \textbf{NLL} & \textbf{ECE} & \textbf{OOD} & \textbf{BALD}
        & \textbf{ACC} & \textbf{NLL} & \textbf{ECE} & \textbf{OOD} & \textbf{BALD} \\
      \midrule
      MAP
        & \textcolor{teal}{0.978} ($\pm$0.4) & \textcolor{purple}{0.068} ($\pm$0.2) & \textcolor{purple}{0.005} ($\pm$0.3) & 0.919 ($\pm$0.5) & 0.919 ($\pm$0.4)
        & 0.859 ($\pm$0.3) & 0.392 ($\pm$0.6) & \textcolor{purple}{0.007} ($\pm$0.3) & 0.846 ($\pm$0.5) & 0.821 ($\pm$0.5) \\
      LLA Diag
        & 0.976 ($\pm$0.5) & 0.177 ($\pm$0.5) & 0.105 ($\pm$0.6) & 0.932 ($\pm$0.6) & 0.941 ($\pm$0.5)
        & 0.856 ($\pm$0.4) & 0.421 ($\pm$0.5) & 0.057 ($\pm$0.4) & 0.872 ($\pm$0.5) & 0.873 ($\pm$0.6) \\
      LLA KFAC
        & \textcolor{teal}{0.978} ($\pm$0.4) & 0.102 ($\pm$0.4) & 0.042 ($\pm$0.4) & \textcolor{purple}{0.971} ($\pm$0.3) & \textcolor{teal}{0.971} ($\pm$0.4)
        & 0.858 ($\pm$0.4) & 0.395 ($\pm$0.5) & 0.020 ($\pm$0.3) & 0.909 ($\pm$0.4) & \textcolor{teal}{0.970} ($\pm$0.5) \\
      LLA*
        & \textcolor{teal}{0.978} ($\pm$0.4) & \textcolor{teal}{0.070} ($\pm$0.3) & \textcolor{teal}{0.009} ($\pm$0.3) & 0.924 ($\pm$0.5) & 0.924 ($\pm$0.5)
        & 0.859 ($\pm$0.4) & 0.395 ($\pm$0.5) & 0.019 ($\pm$0.3) & 0.850 ($\pm$0.5) & 0.716 ($\pm$0.5) \\
      LLA* KFAC
        & \textcolor{purple}{0.979} ($\pm$0.3) & \textcolor{teal}{0.070} ($\pm$0.3) & \textcolor{teal}{0.009} ($\pm$0.2) & 0.923 ($\pm$0.4) & 0.928 ($\pm$0.5)
        & 0.859 ($\pm$0.4) & 0.394 ($\pm$0.5) & 0.017 ($\pm$0.3) & 0.849 ($\pm$0.4) & 0.717 ($\pm$0.6) \\
      ELLA
        & \textcolor{teal}{0.978} ($\pm$0.4) & \textcolor{purple}{0.068} ($\pm$0.3) & \textcolor{purple}{0.005} ($\pm$0.2) & 0.919 ($\pm$0.4) & 0.912 ($\pm$0.5)
        & 0.859 ($\pm$0.4) & 0.392 ($\pm$0.5) & \textcolor{purple}{0.007} ($\pm$0.3) & 0.846 ($\pm$0.4) & 0.765 ($\pm$0.6) \\
      VaLLA 100
        & \textcolor{teal}{0.978} ($\pm$0.3) & \textcolor{purple}{0.068} ($\pm$0.3) & \textcolor{purple}{0.005} ($\pm$0.2) & 0.919 ($\pm$0.4) & 0.934 ($\pm$0.4)
        & \textcolor{teal}{0.865} ($\pm$0.3) & \textcolor{teal}{0.382} ($\pm$0.4) & 0.019 ($\pm$0.3) & 0.925 ($\pm$0.4) & 0.963 ($\pm$0.5) \\
      VaLLA 200
        & \textcolor{teal}{0.978} ($\pm$0.4) & \textcolor{purple}{0.068} ($\pm$0.3) & \textcolor{purple}{0.005} ($\pm$0.2) & 0.919 ($\pm$0.4) & 0.934 ($\pm$0.4)
        & \textcolor{purple}{0.867} ($\pm$0.3) & \textcolor{purple}{0.378} ($\pm$0.4) & 0.020 ($\pm$0.3) & \textcolor{teal}{0.937} ($\pm$0.4) & \textcolor{teal}{0.970} ($\pm$0.5) \\
      Linear Probing
        & 0.977 ($\pm$0.4) & 0.117 ($\pm$0.4) & 0.015 ($\pm$0.4) & 0.884 ($\pm$0.5) & 0.883 ($\pm$0.5)
        & 0.858 ($\pm$0.4) & 0.395 ($\pm$0.5) & 0.048 ($\pm$0.5) & 0.785 ($\pm$0.5) & 0.776 ($\pm$0.5) \\
      GPP
        & \textcolor{teal}{0.978} ($\pm$0.3) & 1.648 ($\pm$0.5) & 0.784 ($\pm$0.5) & 0.934 ($\pm$0.5) & 0.904 ($\pm$0.5)
        & 0.857 ($\pm$0.4) & 1.716 ($\pm$0.5) & 0.692 ($\pm$0.6) & 0.867 ($\pm$0.5) & 0.962 ($\pm$0.5) \\
      Dropout
        & \textcolor{teal}{0.978} ($\pm$0.4) & 0.072 ($\pm$0.3) & 0.009 ($\pm$0.3) & 0.923 ($\pm$0.4) & 0.944 ($\pm$0.4)
        & 0.858 ($\pm$0.4) & 0.393 ($\pm$0.5) & 0.009 ($\pm$0.3) & 0.850 ($\pm$0.4) & 0.911 ($\pm$0.4) \\
      Ensemble
        & \textcolor{purple}{0.979} ($\pm$0.3) & 0.069 ($\pm$0.3) & 0.038 ($\pm$0.5) & 0.936 ($\pm$0.5) & 0.962 ($\pm$0.4)
        & 0.839 ($\pm$0.5) & 0.473 ($\pm$0.6) & 0.041 ($\pm$0.4) & 0.876 ($\pm$0.5) & 0.983 ($\pm$0.5) \\
      \textbf{GAPA (ours)}
        & \textcolor{teal}{0.978} ($\pm$0.3) & 0.109 ($\pm$0.4) & 0.049 ($\pm$0.4) & \textcolor{teal}{0.960} ($\pm$0.4) & \textcolor{purple}{0.972} ($\pm$0.4)
        & 0.859 ($\pm$0.4) & 0.389 ($\pm$0.5) & \textcolor{teal}{0.013} ($\pm$0.3) & \textcolor{purple}{0.973} ($\pm$0.4) & \textcolor{purple}{0.993} ($\pm$0.3) \\
      \bottomrule
    \end{tabular}%
  }
  \vspace{0.5em}
\end{table*}

\subsection{ResNet}

\begin{table}[t]
  \centering
  \caption{GAPA and baselines on CIFAR-10 with ResNet-20 and ResNet-32.
  Best in \textcolor{purple}{purple}, second-best in \textcolor{teal}{teal}.
  Results shown as mean ($\pm$ std) over 5 runs.}
  \label{tab:resnet_20_32}

  \footnotesize
  \setlength{\tabcolsep}{2.2pt}
  \renewcommand{\arraystretch}{1.05}

  \begin{adjustbox}{width=\columnwidth}
  \begin{tabular}{l|ccccc|ccccc}
    \toprule
    & \multicolumn{5}{c|}{\textbf{ResNet-20}}
    & \multicolumn{5}{c}{\textbf{ResNet-32}} \\
    \cmidrule(lr){2-6}\cmidrule(lr){7-11}
    & ACC & NLL & OOD & Train & Test
    & ACC & NLL & OOD & Train & Test \\
    \midrule

    MAP
    & 92.6 ($\pm$0.06) & 0.282 ($\pm$0.10) & 0.876 ($\pm$0.10) & -- & --
    & 93.5 ($\pm$0.06) & 0.292 ($\pm$0.10) & 0.909 ($\pm$0.10) & -- & -- \\

    MF-VI
    & \textcolor{teal}{92.7 ($\pm$0.12)} & 0.231 ($\pm$0.15) & 0.865 ($\pm$0.12) & 0.74 ($\pm$0.04) & \textcolor{purple}{0.47 ($\pm$0.02)}
    & \textcolor{teal}{93.5 ($\pm$0.11)} & 0.222 ($\pm$0.15) & 0.885 ($\pm$0.12) & 1.19 ($\pm$0.06) & \textcolor{purple}{0.75 ($\pm$0.04)} \\

    SNGP
    & 92.4 ($\pm$0.08) & 0.266 ($\pm$0.12) & 0.875 ($\pm$0.10) & 15.9 ($\pm$0.8) & 1.31 ($\pm$0.07)
    & 93.2 ($\pm$0.08) & 0.256 ($\pm$0.12) & 0.890 ($\pm$0.10) & 25.5 ($\pm$1.3) & 2.10 ($\pm$0.11) \\

    LLA Diag
    & 92.6 ($\pm$0.08) & 0.260 ($\pm$0.12) & 0.866 ($\pm$0.10) & 18.4 ($\pm$0.9) & \textcolor{teal}{0.43 ($\pm$0.02)}
    & 93.5 ($\pm$0.07) & 0.242 ($\pm$0.12) & 0.882 ($\pm$0.10) & 29.4 ($\pm$1.5) & \textcolor{teal}{0.69 ($\pm$0.03)} \\

    Sampled LLA
    & 92.5 ($\pm$0.09) & \textcolor{purple}{0.231 ($\pm$0.14)} & 0.885 ($\pm$0.12) & 5.00K ($\pm$0.25K) & 1.14K ($\pm$0.06K)
    & 93.5 ($\pm$0.08) & \textcolor{teal}{0.217 ($\pm$0.14)} & 0.905 ($\pm$0.12) & 8.00K ($\pm$0.40K) & 1.83K ($\pm$0.09K) \\

    VaLLA
    & 92.4 ($\pm$0.10) & \textcolor{teal}{0.231 ($\pm$0.15)} & \textcolor{purple}{0.940 ($\pm$0.12)} & 7.59K ($\pm$0.38K) & 124 ($\pm$6)
    & 93.2 ($\pm$0.09) & \textcolor{purple}{0.212 ($\pm$0.15)} & \textcolor{teal}{0.933 ($\pm$0.12)} & 12.2K ($\pm$0.61K) & 199 ($\pm$10) \\

    \textbf{GAPA (ours)}
    & 92.6 ($\pm$0.07) & 0.258 ($\pm$0.12) & \textcolor{teal}{0.907 ($\pm$0.10)} & \textcolor{purple}{3.65 ($\pm$0.18)} & 1.30 ($\pm$0.07)
    & 93.5 ($\pm$0.07) & 0.259 ($\pm$0.12) & \textcolor{purple}{0.926 ($\pm$0.10)} & \textcolor{purple}{5.84 ($\pm$0.29)} & 2.07 ($\pm$0.10) \\

    \bottomrule
  \end{tabular}
  \end{adjustbox}
\end{table}

\begin{table}[t]
  \centering
  \caption{GAPA and baselines on CIFAR-10 with ResNet-44 and ResNet-56.
  Best in \textcolor{purple}{purple}, second-best in \textcolor{teal}{teal}.
  Results shown as mean ($\pm$ std) over 5 runs.}
  \label{tab:resnet_44_56}

  \footnotesize
  \setlength{\tabcolsep}{2.2pt}
  \renewcommand{\arraystretch}{1.05}

  \begin{adjustbox}{width=\columnwidth}
  \begin{tabular}{l|ccccc|ccccc}
    \toprule
    & \multicolumn{5}{c|}{\textbf{ResNet-44}}
    & \multicolumn{5}{c}{\textbf{ResNet-56}} \\
    \cmidrule(lr){2-6}\cmidrule(lr){7-11}
    & ACC & NLL & OOD & Train & Test
    & ACC & NLL & OOD & Train & Test \\
    \midrule

    MAP
    & 94.0 ($\pm$0.05) & 0.275 ($\pm$0.10) & 0.885 ($\pm$0.10) & -- & 0.761 ($\pm$0.04)
    & 94.4 ($\pm$0.05) & 0.252 ($\pm$0.10) & 0.924 ($\pm$0.10) & -- & 0.949 ($\pm$0.05) \\

    MF-VI
    & \textcolor{teal}{93.9 ($\pm$0.10)} & 0.206 ($\pm$0.14) & 0.890 ($\pm$0.12) & 1.63 ($\pm$0.08) & \textcolor{teal}{1.03 ($\pm$0.05)}
    & \textcolor{teal}{94.4 ($\pm$0.10)} & 0.188 ($\pm$0.14) & 0.929 ($\pm$0.12) & 1.97 ($\pm$0.10) & \textcolor{teal}{1.18 ($\pm$0.06)} \\

    SNGP
    & 93.8 ($\pm$0.07) & 0.242 ($\pm$0.12) & 0.901 ($\pm$0.10) & 35.0 ($\pm$1.8) & 2.89 ($\pm$0.14)
    & 93.8 ($\pm$0.07) & 0.229 ($\pm$0.12) & \textcolor{teal}{0.940 ($\pm$0.10)} & 43.5 ($\pm$2.2) & 3.01 ($\pm$0.15) \\

    LLA Diag
    & 94.0 ($\pm$0.07) & 0.218 ($\pm$0.12) & 0.860 ($\pm$0.10) & 40.4 ($\pm$2.0) & \textcolor{purple}{0.947 ($\pm$0.05)}
    & 94.3 ($\pm$0.06) & 0.195 ($\pm$0.12) & 0.923 ($\pm$0.10) & 40.7 ($\pm$2.0) & \textcolor{purple}{1.12 ($\pm$0.06)} \\

    Sampled LLA
    & 94.0 ($\pm$0.08) & \textcolor{purple}{0.200 ($\pm$0.13)} & 0.899 ($\pm$0.12) & 11.0K ($\pm$0.55K) & 2.51K ($\pm$0.13K)
    & 94.4 ($\pm$0.07) & \textcolor{purple}{0.185 ($\pm$0.13)} & \textcolor{teal}{0.944 ($\pm$0.12)} & 14.6K ($\pm$0.73K) & 2.84K ($\pm$0.14K) \\

    VaLLA
    & 93.8 ($\pm$0.09) & \textcolor{teal}{0.201 ($\pm$0.14)} & \textcolor{teal}{0.928 ($\pm$0.12)} & 16.7K ($\pm$0.84K) & 272.9 ($\pm$14)
    & 94.2 ($\pm$0.08) & \textcolor{teal}{0.188 ($\pm$0.14)} & \textcolor{purple}{0.960 ($\pm$0.12)} & 26.3K ($\pm$1.32K) & 363.8 ($\pm$18) \\

    \textbf{GAPA (ours)}
    & 94.0 ($\pm$0.06) & 0.230 ($\pm$0.12) & \textcolor{purple}{0.931 ($\pm$0.10)} & \textcolor{purple}{8.03 ($\pm$0.40)} & 2.85 ($\pm$0.14)
    & 94.4 ($\pm$0.06) & 0.230 ($\pm$0.12) & \textcolor{teal}{0.953 ($\pm$0.10)} & \textcolor{purple}{10.29 ($\pm$0.51)} & 3.30 ($\pm$0.17) \\

    \bottomrule
  \end{tabular}
  \end{adjustbox}
\end{table}


\section{Ablation Studies}
\label{sec:ablation}
We investigate three key design choices in GAPA: layer placement, number of inducing points, and sampling strategy.

\begin{figure}[htbp]
    \centering 
    \includegraphics[width=0.49\textwidth]{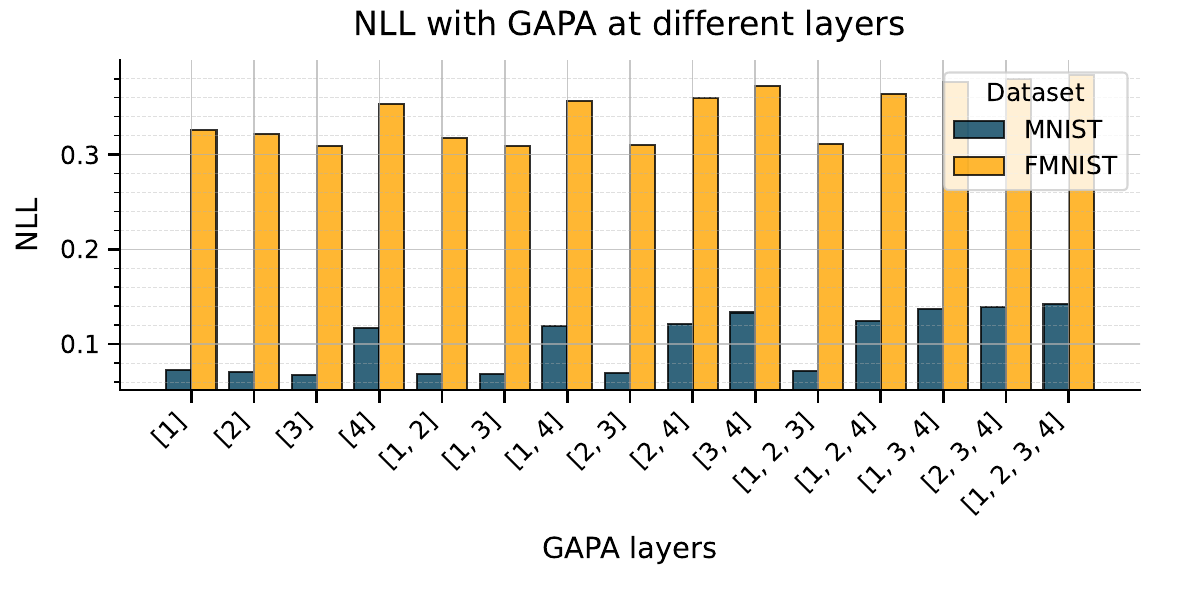}
    \includegraphics[width=0.49\textwidth]{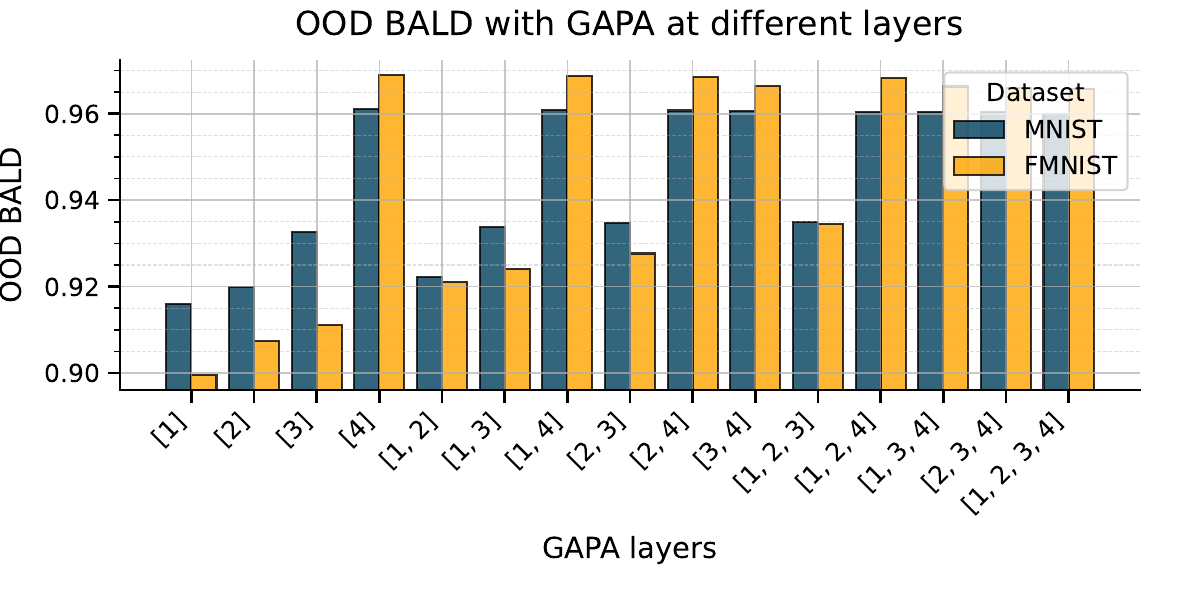}

    \caption{Comparison of metrics at different GAPA layer placements (\(M=55{,}000\))}
    \label{fig:example_M_points}
\end{figure}

\subsection{Where to put GAPA}
\begin{table*}[t]
\centering
\caption{Comparison of metrics at different GAPA layer placements (\(M=55{,}000\)).
Best values are \textbf{bold}. Lower is better (↓) for NLL; higher is better (↑) for OOD-AUC/BALD.}
\label{tab:gapa_layer_ablation}
\vspace{0.3em}
\setlength{\tabcolsep}{6pt}
\renewcommand{\arraystretch}{1.05}
\resizebox{\textwidth}{!}{%
\begin{tabular}{l|ccc|ccc}
\toprule
\multirow{2}{*}{\textbf{GAPA layers}}
  & \multicolumn{3}{c|}{\textbf{MNIST}} 
  & \multicolumn{3}{c}{\textbf{FMNIST}} \\
\cmidrule(lr){2-4}\cmidrule(l){5-7}
  & \textbf{NLL} $\downarrow$ & \textbf{OOD-AUC} $\uparrow$ & \textbf{OOD BALD} $\uparrow$
  & \textbf{NLL} $\downarrow$ & \textbf{OOD-AUC} $\uparrow$ & \textbf{OOD BALD} $\uparrow$ \\
\midrule
{[1]}         & 0.072 & 0.915 & 0.916 & 0.326 & 0.870 & 0.900 \\
{[2]}         & 0.070 & 0.921 & 0.920 & 0.321 & 0.884 & 0.907 \\
{[3]}         & \textbf{0.068} & 0.933 & 0.933 & \textbf{0.309} & 0.901 & 0.911 \\
{[4]}         & 0.117 & 0.951 & 0.957 & 0.353 & \textbf{0.973} & \textbf{0.969} \\
{[1, 2]}      & 0.069 & 0.923 & 0.922 & 0.318 & 0.901 & 0.921 \\
{[1, 3]}      & 0.069 & 0.934 & 0.934 & \textbf{0.309} & 0.912 & 0.924 \\
{[1, 4]}      & 0.120 & \textbf{0.953} & \textbf{0.961} & 0.357 & \textbf{0.973} & \textbf{0.969} \\
{[2, 3]}      & 0.070 & 0.935 & 0.935 & 0.310 & 0.917 & 0.928 \\
{[2, 4]}      & 0.122 & \textbf{0.953} & \textbf{0.961} & 0.360 & \textbf{0.973} & 0.968 \\
{[3, 4]}      & 0.134 & \textbf{0.953} & \textbf{0.961} & 0.372 & \textbf{0.973} & 0.966 \\
{[1, 2, 3]}   & 0.072 & 0.936 & 0.935 & 0.312 & 0.924 & 0.934 \\
{[1, 2, 4]}   & 0.125 & \textbf{0.953} & 0.960 & 0.364 & \textbf{0.973} & 0.968 \\
{[1, 3, 4]}   & 0.137 & \textbf{0.953} & 0.960 & 0.376 & \textbf{0.973} & 0.966 \\
{[2, 3, 4]}   & 0.139 & \textbf{0.953} & 0.960 & 0.380 & \textbf{0.973} & 0.966 \\
{[1, 2, 3, 4]}& 0.142 & \textbf{0.953} & 0.960 & 0.384 & \textbf{0.974} & 0.966 \\
\bottomrule
\end{tabular}
}
\vspace{0.3em}
\end{table*}

Table~\ref{tab:gapa_layer_ablation} (and Figure \ref{fig:example_M_points}) examines GAPA placement across our 4-layer network. For MNIST, placing GAPA at layer 3 achieves the best NLL (0.068), while layer 4 or any combination including layer 4 maximizes OOD detection (0.953 AUC, 0.961 BALD). For FMNIST, similar patterns emerge: layer 3 minimizes NLL (0.309), while layer 4 dominates OOD metrics (0.973 AUC, 0.969 BALD). Interestingly, adding more GAPA layers generally degrades NLL while maintaining strong OOD performance, suggesting a trade-off between calibration and uncertainty awareness. The final layer (closest to output) appears most critical for OOD detection, while intermediate layers better preserve calibration.

\subsection{Number of inducing inputs}
\label{Appedix:n_inducing_points}

\begin{table}[t]
  \centering
  \caption{Metrics across different $M$ values for MNIST and FMNIST, GAPA at the 4th layer.}
  \label{tab:m_values_mnist_fmnist}

  \footnotesize
  \setlength{\tabcolsep}{2.2pt}
  \renewcommand{\arraystretch}{1.05}

  \begin{adjustbox}{width=\columnwidth}
  \begin{tabular}{c|rrrrr|rrrrr}
    \toprule
     & \multicolumn{5}{c|}{MNIST} & \multicolumn{5}{c}{FMNIST} \\
    $M$
    & NLL $\downarrow$ & OOD $\uparrow$ & BALD $\uparrow$ & set up/s $\downarrow$ & inference/s $\downarrow$
    & NLL $\downarrow$ & OOD $\uparrow$ & BALD $\uparrow$ & set up/s $\downarrow$ & inference/s $\downarrow$ \\
    \midrule
    10    & 0.248 & 0.897 & 0.919 & \textbf{2.733}   & \textbf{7.517}  & 0.489 & 0.957 & 0.936 & \textbf{0.257}   & \textbf{7.584} \\
    100   & 0.248 & 0.897 & 0.919 & 185.477          & 7.478           & 0.489 & 0.957 & 0.936 & 181.340          & 7.625 \\
    1000  & 0.246 & 0.898 & 0.920 & 184.787          & 7.674           & 0.486 & 0.957 & 0.937 & 183.503          & 7.763 \\
    5000  & 0.219 & 0.913 & 0.934 & 195.889          & 8.663           & 0.470 & 0.960 & 0.943 & 194.468          & 8.702 \\
    10000 & 0.181 & 0.933 & 0.950 & 212.990          & 10.119          & 0.442 & 0.964 & 0.952 & 211.333          & 9.873 \\
    20000 & 0.139 & 0.947 & 0.958 & 247.684          & 12.498          & 0.390 & 0.970 & 0.964 & 241.000          & 12.164 \\
    40000 & 0.119 & \textbf{0.953} & \textbf{0.961} & 301.511 & 16.926 & 0.355 & 0.972 & 0.968 & 301.086 & 16.826 \\
    55000 & \textbf{0.117} & \textbf{0.953} & \textbf{0.961} & 455.735 & 20.445
          & \textbf{0.353} & \textbf{0.973} & \textbf{0.969} & 384.825 & 20.527 \\
    \bottomrule
  \end{tabular}
  \end{adjustbox}
\end{table}

Table~\ref{tab:gapa_layer_ablation} shows performance as $M$ increases from 10 to 55,000. Both datasets exhibit clear saturation: MNIST plateaus around $M=40,000$ (NLL: 0.119→0.117, OOD: 0.953), while FMNIST shows similar convergence. Computational costs scale sub-linearly due to FAISS indexing—setup time increases from 2.7s to 455s for MNIST, while inference remains tractable (7.5s→20s). This demonstrates GAPA's efficiency: near-optimal uncertainty quantification is achievable with moderate $M$ values, making the method practical for larger models.

\subsection{Inducing point selection: KMeans vs.\ farthest-point sampling}

We compare two strategies for selecting inducing points: the farthest-point
sampling (FPS) method used in the main paper, and the KMeans-based option
introduced in Appendix~\ref{sec:gapa_hyp}.  
Figures~\ref{fig:kmeans_fmnist_nll}--\ref{fig:kmeans_mnist_time} report
results for MNIST and FMNIST across a range of inducing-point budgets $M$.

Overall, both methods exhibit similar behaviour: performance improves
monotonically with $M$ and saturates once a sufficient coverage of the
activation space is achieved.  
KMeans, however, provides a more efficient trade-off between coverage and
inducing-point count, reaching its plateau at substantially smaller $M$
values than FPS.  
This makes KMeans a practical alternative when memory, storage, or index
construction time is a constraint.

\begin{figure*}[t]
    \centering
    \includegraphics[width=0.49\textwidth]{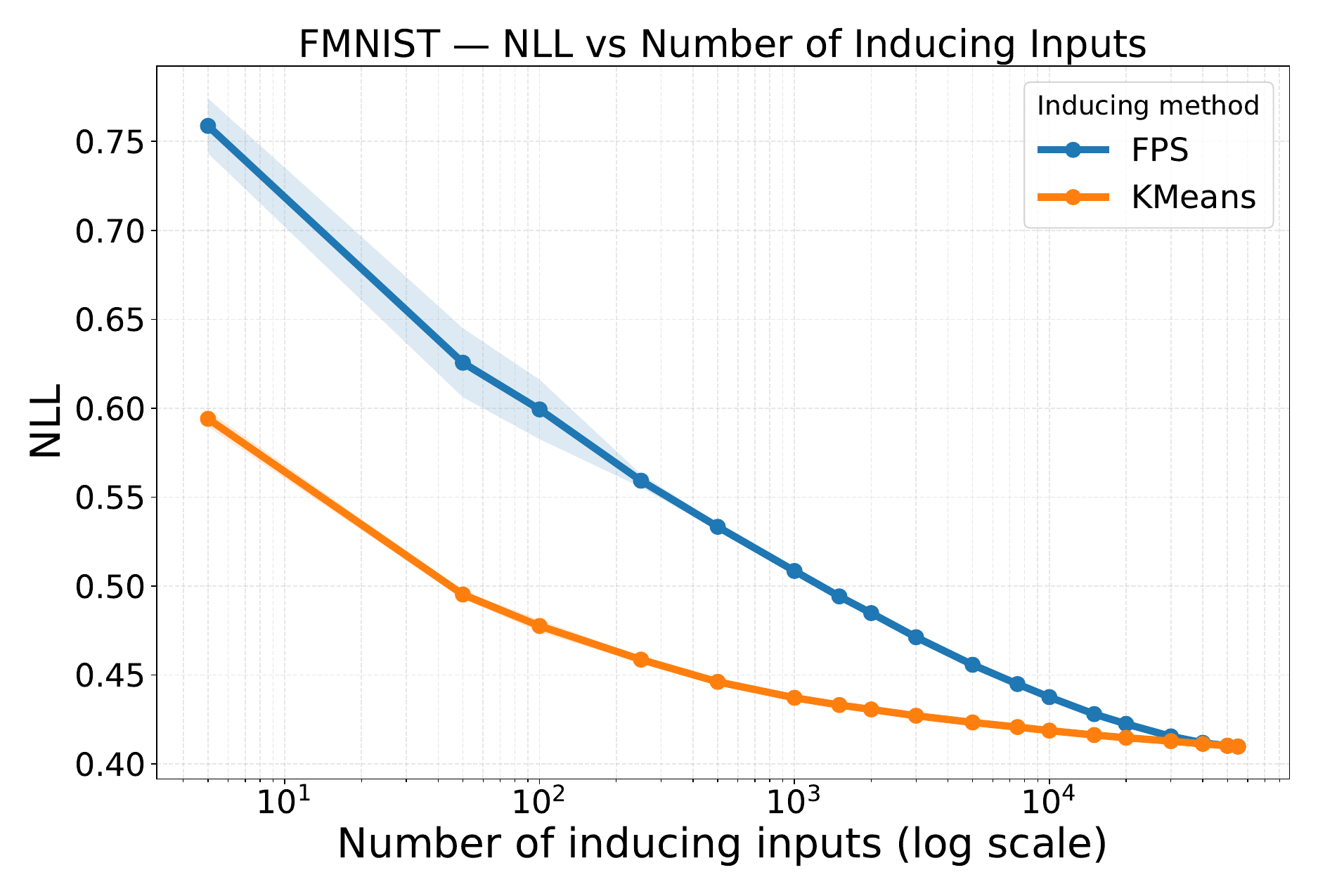}
    \includegraphics[width=0.49\textwidth]{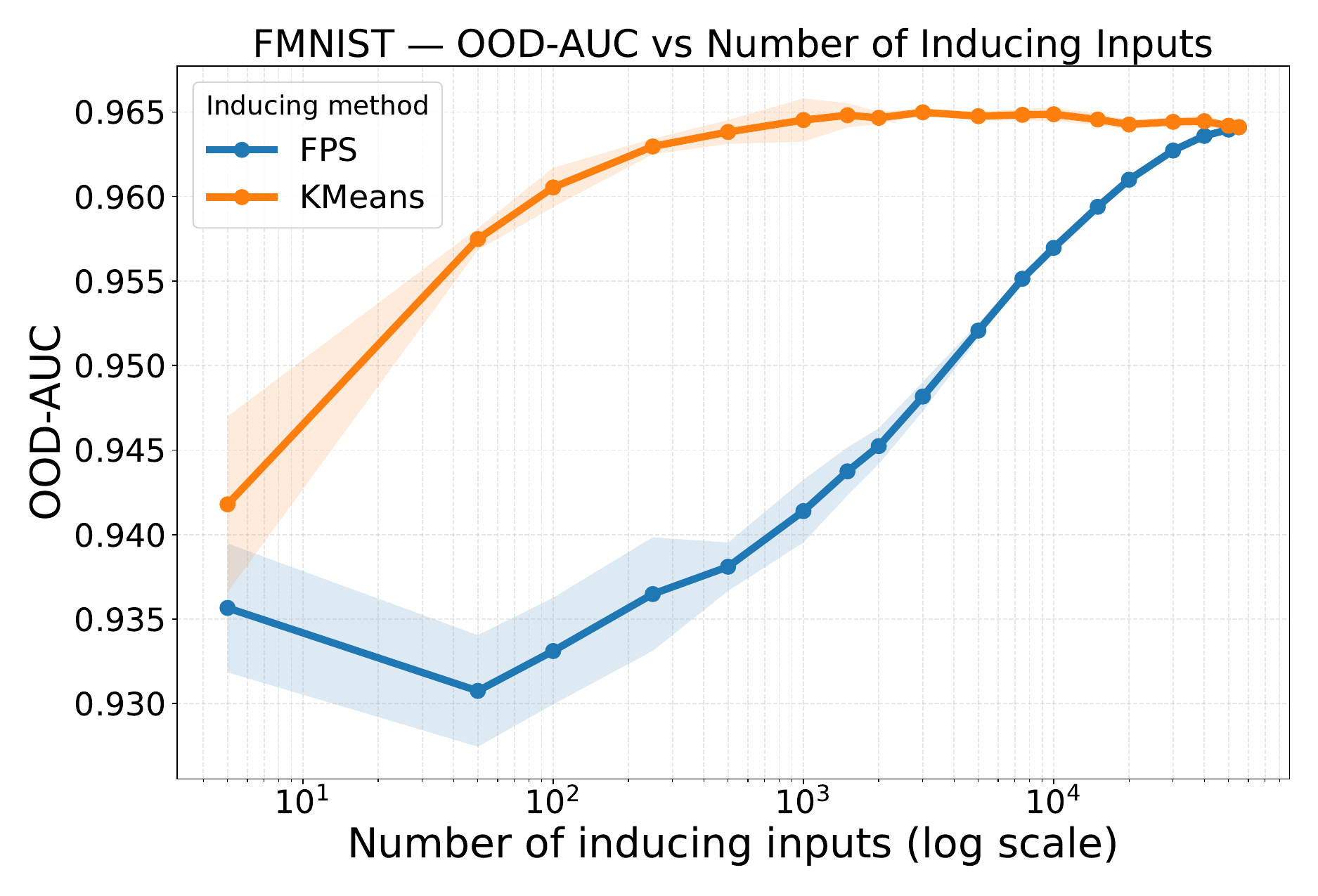}
    \caption{FMNIST: NLL (left) and OOD-AUC (right) for KMeans vs.\ FPS across $M$.}
    \label{fig:kmeans_fmnist_nll}
\end{figure*}

\begin{figure*}[t]
    \centering
    \includegraphics[width=0.49\textwidth]{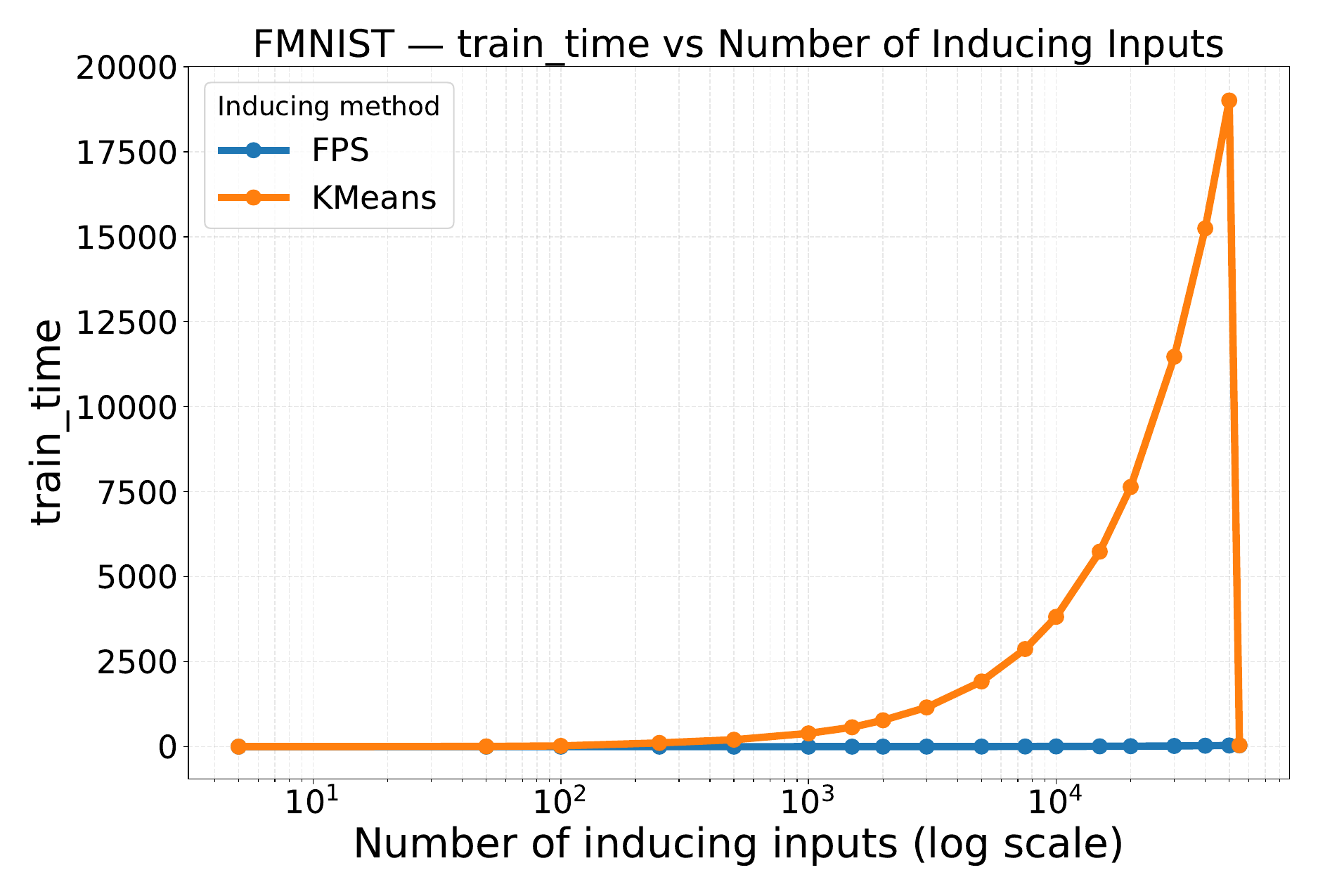}
    \includegraphics[width=0.49\textwidth]{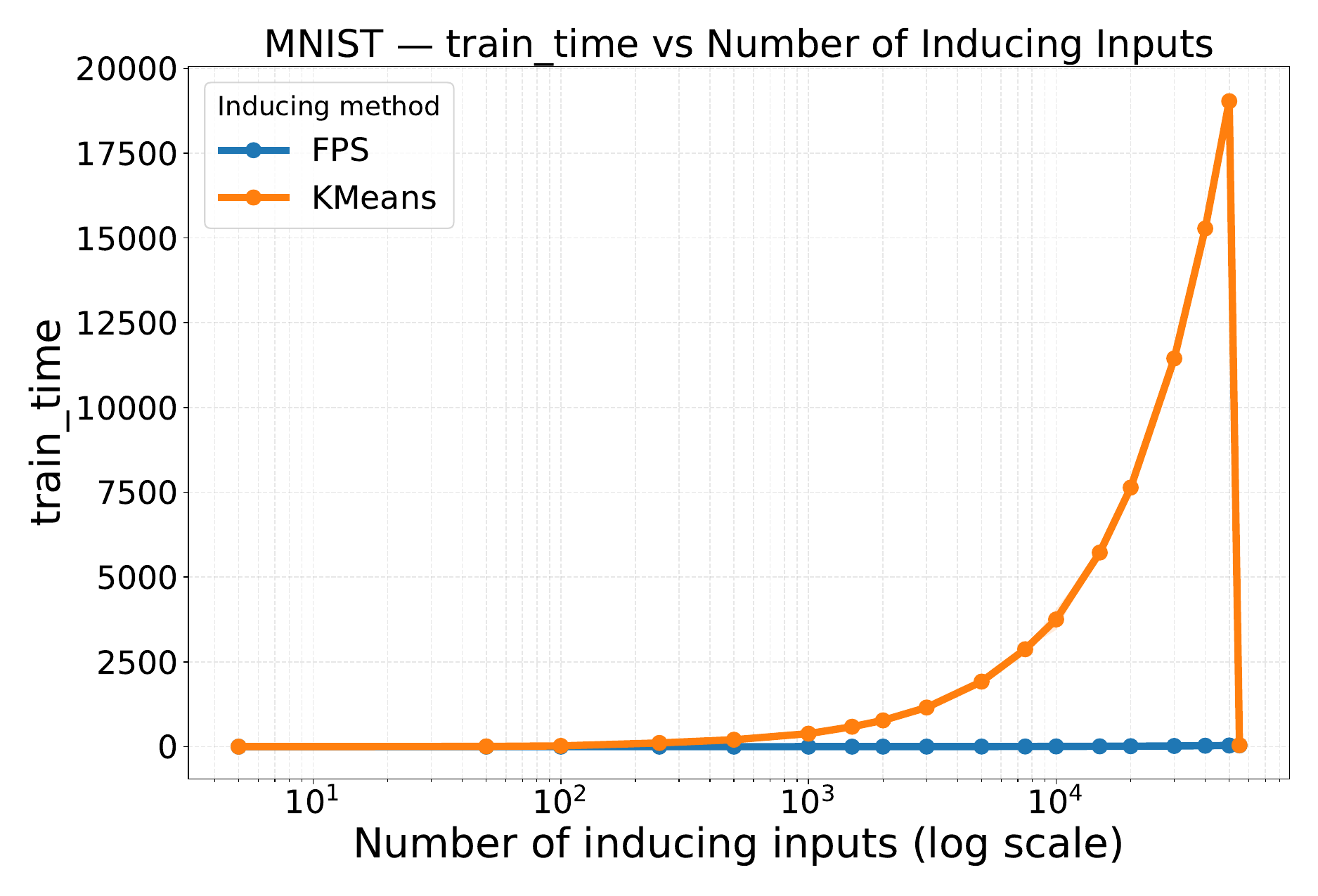}
    \caption{Setup time (FAISS indexing) for KMeans vs.\ FPS on FMNIST (left) and MNIST (right).}
    \label{fig:kmeans_mnist_time}
\end{figure*}

\begin{figure*}[t]
    \centering
    \includegraphics[width=0.49\textwidth]{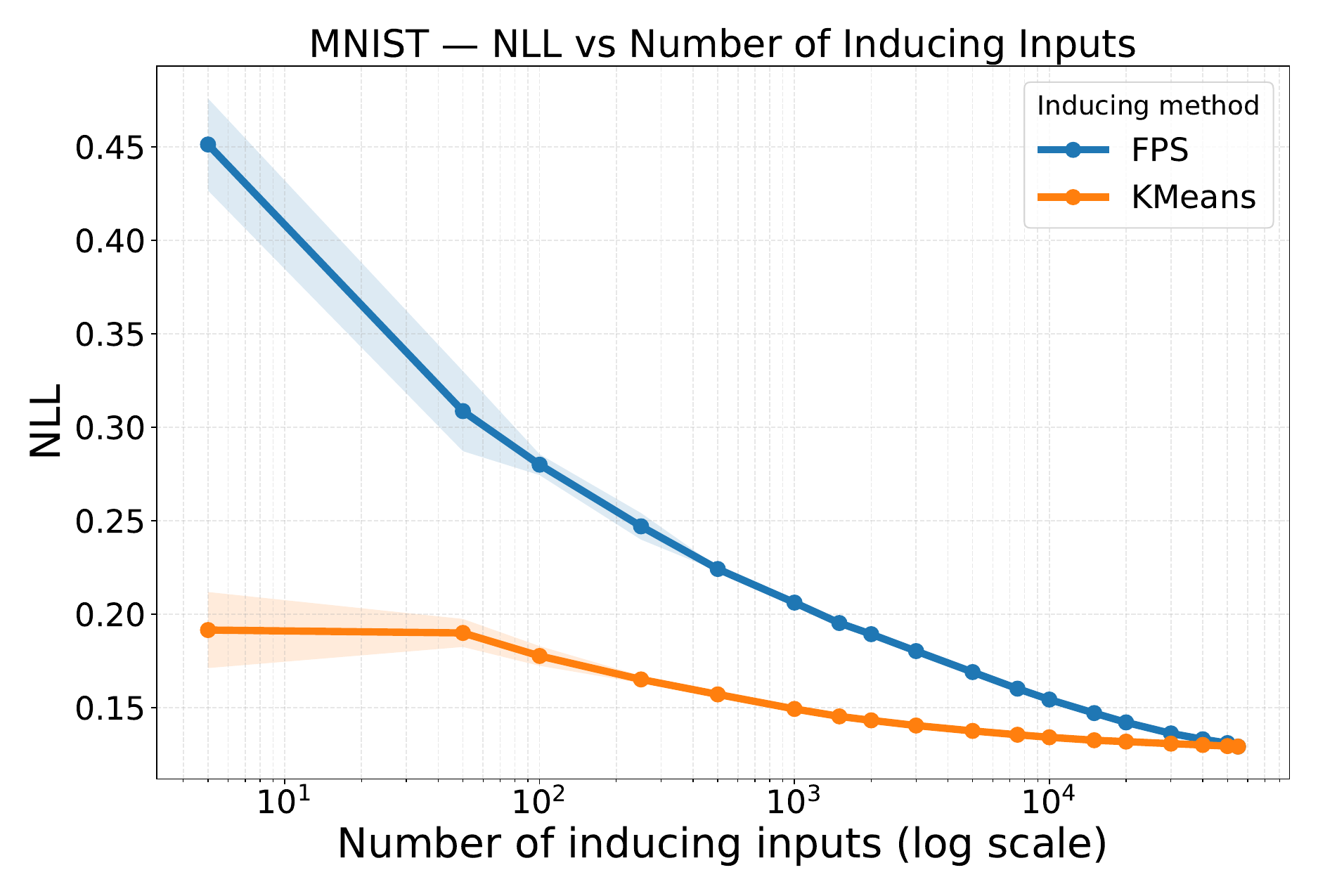}
    \includegraphics[width=0.49\textwidth]{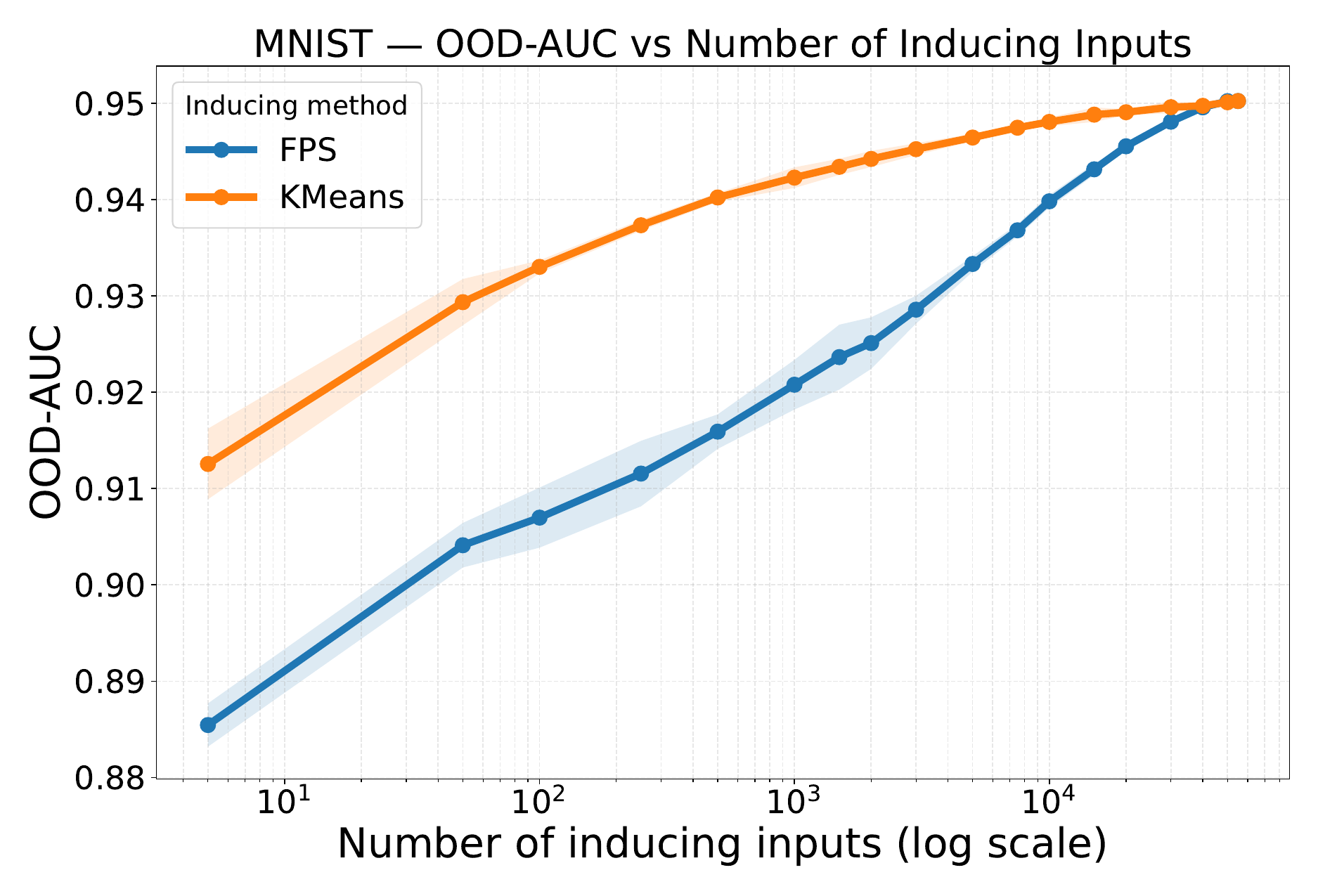}
    \caption{MNIST: NLL (left) and OOD-AUC (right) for KMeans vs.\ FPS across $M$.}
    \label{fig:kmeans_mnist_nll}
\end{figure*}

\begin{figure*}[t]
    \centering
    \includegraphics[width=0.49\textwidth]{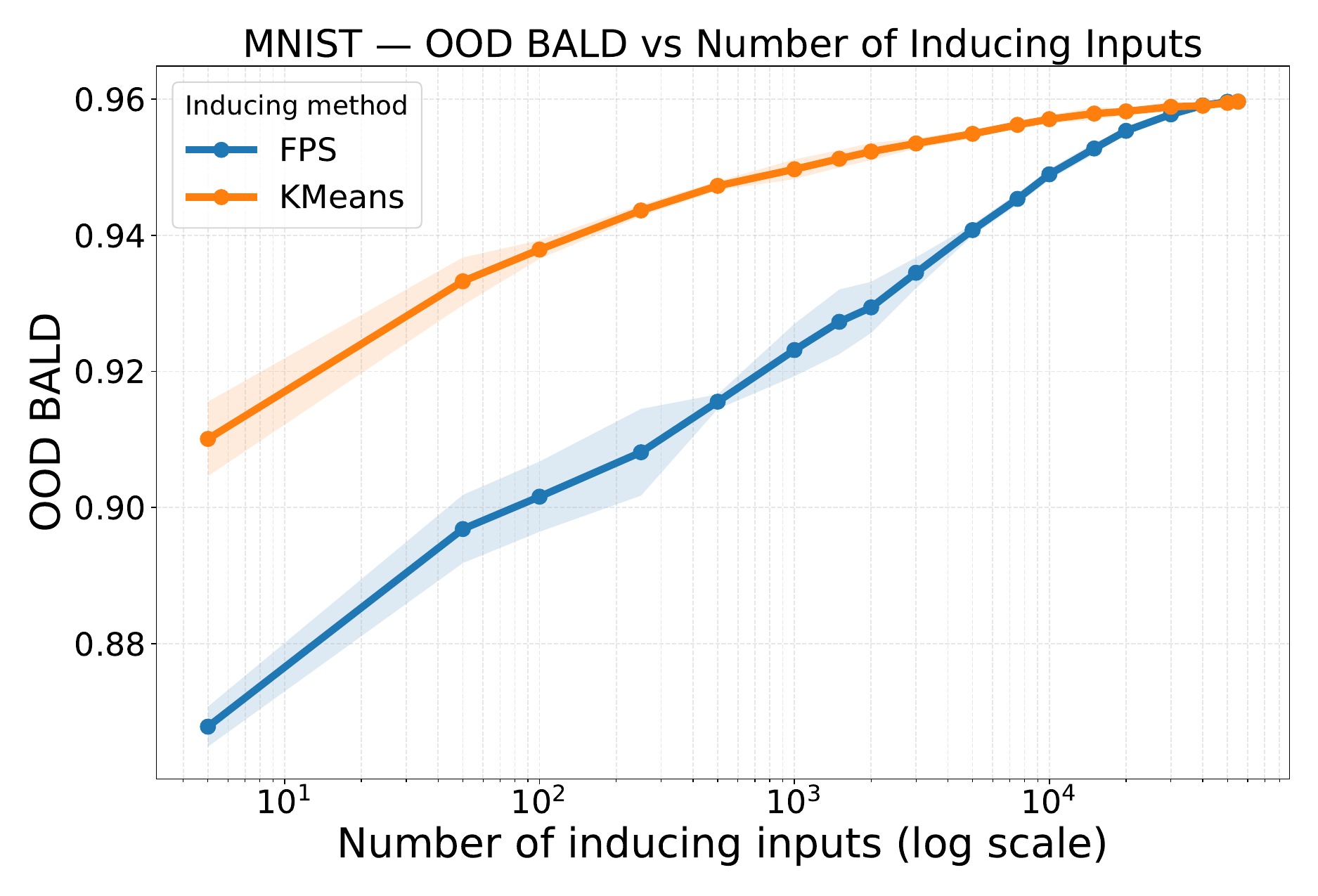}
    \includegraphics[width=0.49\textwidth]{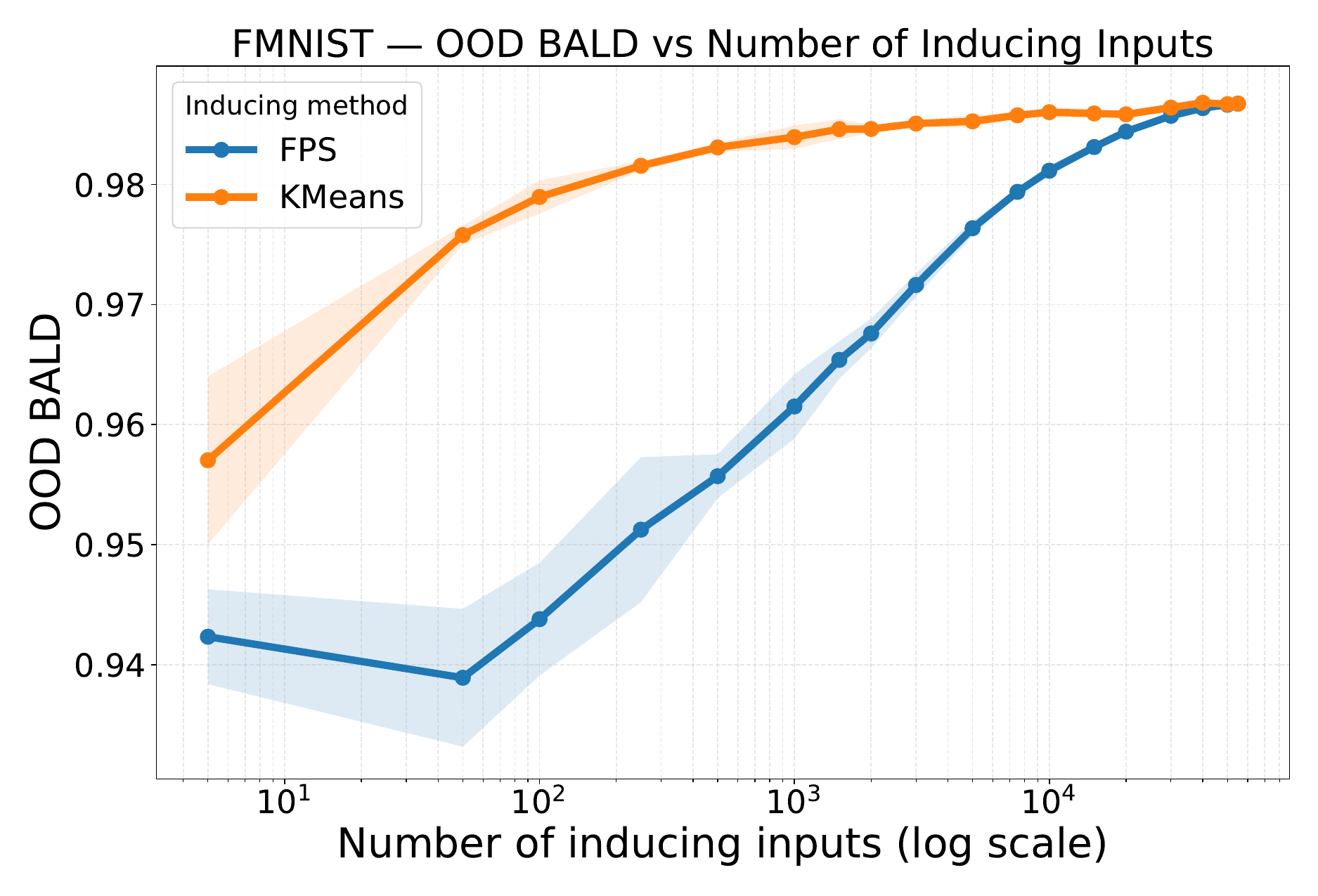}
    \caption{BALD-based OOD detection for MNIST (left) and FMNIST (right).}
    \label{fig:kmeans_mnist_bald}
\end{figure*}

\subsection{Random vs Furthers Point Sampling }

\begin{table*}[t]
\centering
\caption{Comparison of NLL and OOD BALD for FPS and three random baselines (FMNIST, gapa\_index= last layer).}
\label{tab:fps_vs_random}
\resizebox{\textwidth}{!}{%
\begin{tabular}{c|cc|cc|cc|cc}
\toprule
$M$ & FPS NLL↓ & FPS OOD↑ & Rand1 NLL↓ & Rand1 OOD↑ & Rand2 NLL↓ & Rand2 OOD↑ & Rand3 NLL↓ & Rand3 OOD↑ \\
\midrule
5000  & 0.470 & 0.943 & 0.394 & 0.957 & 0.394 & 0.957 & 0.394 & 0.957 \\
10000 & 0.442 & 0.952 & 0.380 & 0.960 & 0.380 & 0.960 & 0.380 & 0.960 \\
20000 & 0.390 & 0.964 & 0.369 & 0.964 & 0.369 & 0.964 & 0.369 & 0.964 \\
40000 & \textbf{0.355} & \textbf{0.968} & \textbf{0.359} & \textbf{0.967} & \textbf{0.359} & \textbf{0.967} & \textbf{0.359} & \textbf{0.967} \\
\bottomrule
\end{tabular}%
}
\end{table*}

Table~\ref{tab:gapa_layer_ablation} reveals that furthest point sampling (FPS) and random sampling exhibit different strengths. At smaller $M$ (5K-10K), random sampling achieves better NLL and OOD detection, likely because FPS's greedy selection may overfit to specific activation patterns. However, as $M$ increases to 40K, FPS shows marginal improvements, suggesting its structured coverage becomes beneficial with sufficient inducing points. The convergence of both methods at large $M$ indicates that with enough inducing points, the activation space is well-covered regardless of sampling strategy. 
\subsection{KNN Sweep: $K = 1$ to $500$}
\label{app:knn_sweep}

To evaluate the robustness of the KNN GAPA approximation used in GAPA, we performed a
comprehensive KNN sweep over $K = \{1,2,3,5,10,20,50,100,150,200,300,400,500\}$ on both
MNIST and FMNIST. For each $K$, we recomputed the GP posterior variance using the $K$
nearest cached activations and measured all uncertainty metrics (NLL, ECE, OOD-AUC,
OOD-BALD) as well as test-time inference cost.

Across all metrics and datasets, the results reveal a strikingly consistent pattern:
\textbf{all curves improve smoothly and monotonically with $K$}, and we observed
no instability—even at $K=1$.

\vspace{1em}
\paragraph{Negative Log-Likelihood (NLL).}
NLL decreases continuously as $K$ increases for both datasets.
MNIST improves from $\approx 0.092$ at $K{=}1$ to $\approx 0.081$ at $K{=}500$.
FMNIST improves from $\approx 0.408$ at $K{=}1$ to $\approx 0.390$ at $K{=}500$.

\begin{figure*}[t]
    \centering
    \begin{minipage}{0.45\textwidth}
        \centering
        \includegraphics[width=\textwidth]{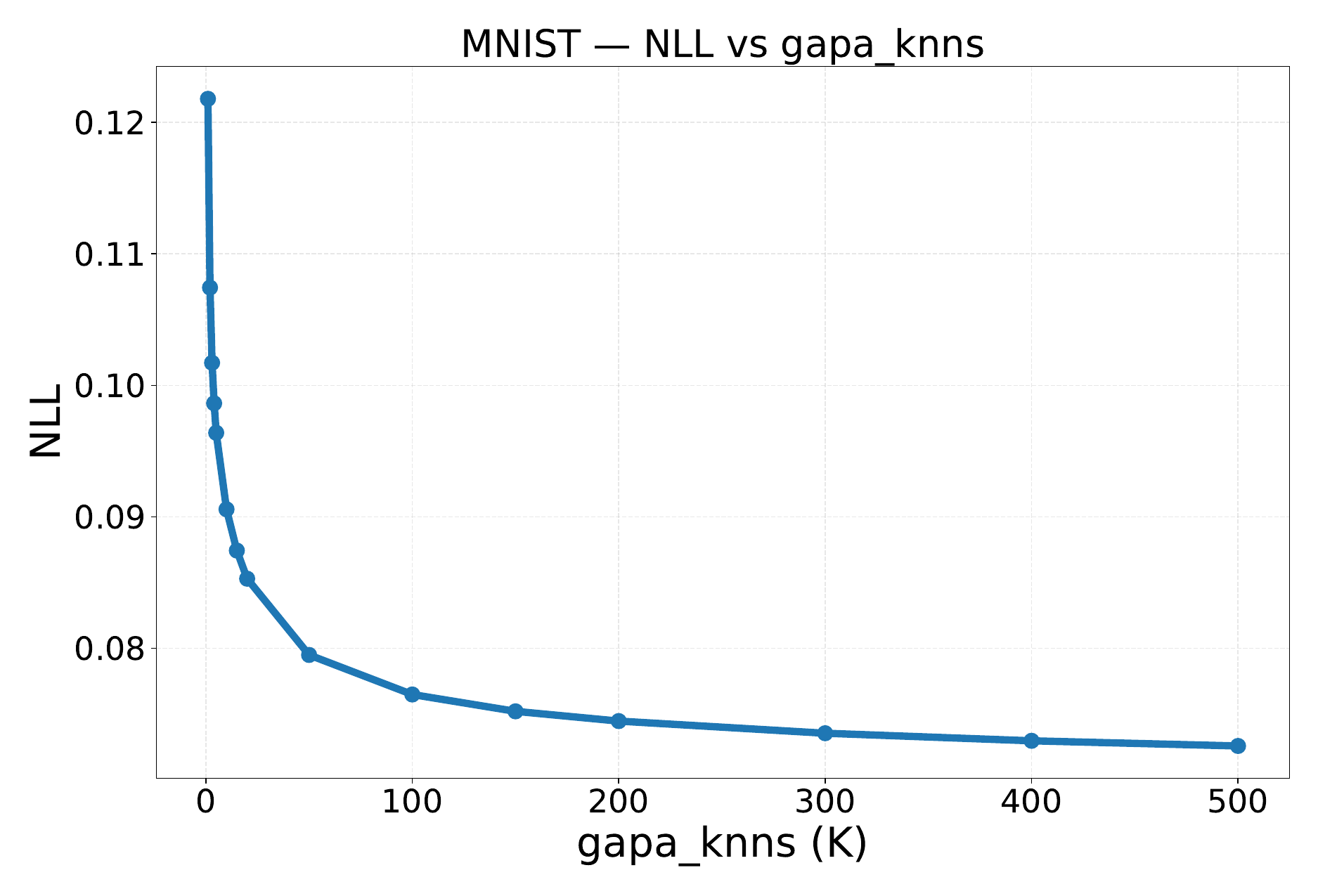}
        \caption{MNIST NLL vs.\ $K$.}
    \end{minipage}
    \hfill
    \begin{minipage}{0.45\textwidth}
        \centering
        \includegraphics[width=\textwidth]{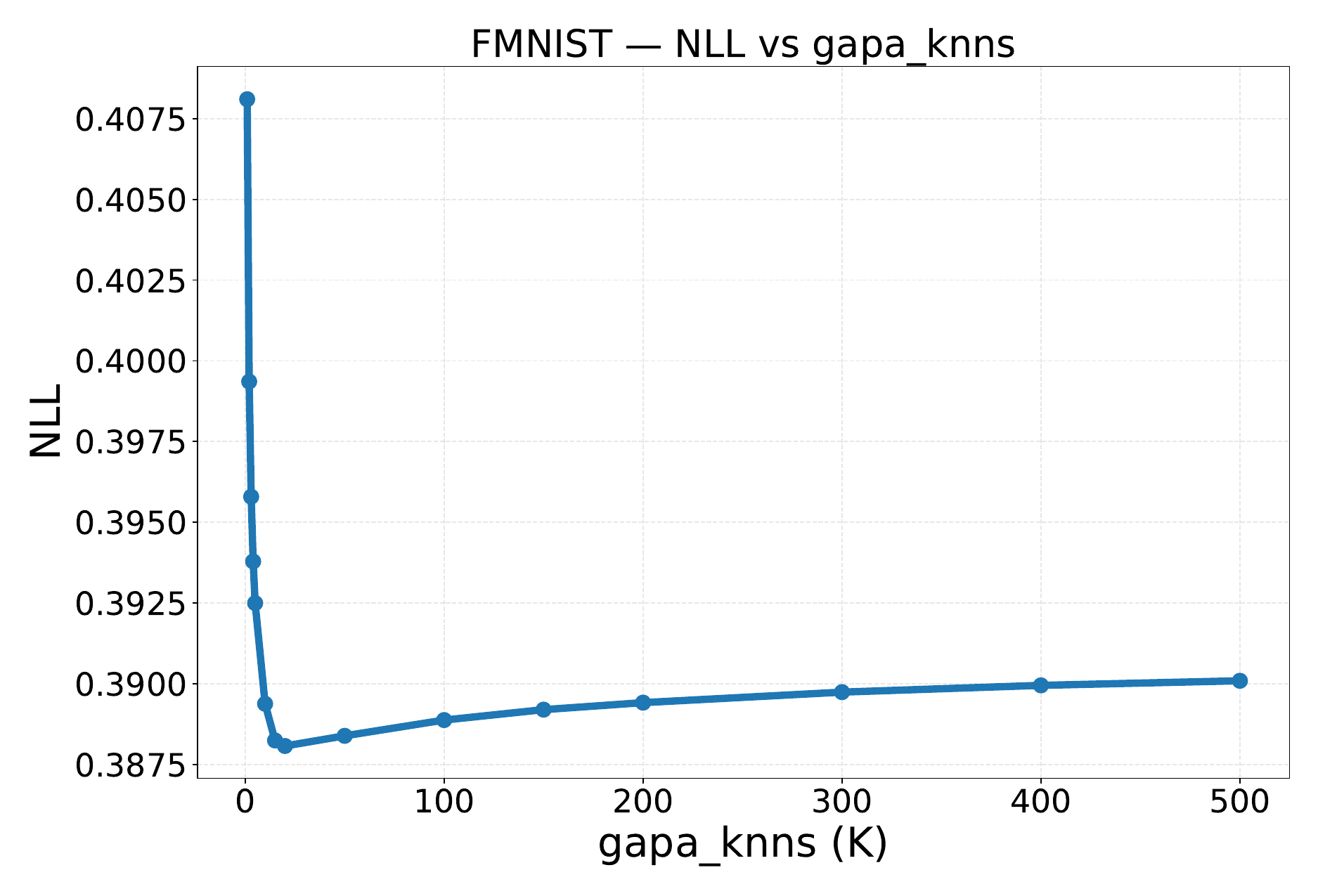}
        \caption{FMNIST NLL vs.\ $K$.}
        \label{fig:knn_fmnist_nll}
    \end{minipage}
\end{figure*}

\vspace{1em}
\paragraph{Expected Calibration Error (ECE).}
ECE improves monotonically for both datasets.
MNIST decreases from $\approx 0.062$ to $\approx 0.015$.
FMNIST shows a similar smooth trend.

\begin{figure*}[t]
    \centering
    \begin{minipage}{0.45\textwidth}
        \centering
        \includegraphics[width=\textwidth]{ICML2026/images/kmean_k1to500/MNIST_vs_K_NLL_k1.0.pdf}
        \caption{MNIST NLL vs.\ $K$.}
    \end{minipage}
    \hfill
    \begin{minipage}{0.45\textwidth}
        \centering
        \includegraphics[width=\textwidth]{ICML2026/images/kmean_k1to500/FMNIST_vs_K_NLL_k1.0.pdf}
        \caption{FMNIST NLL vs.\ $K$.}
    \end{minipage}
\end{figure*}

\vspace{1em}
\paragraph{Expected Calibration Error (ECE).}
ECE improves monotonically for both datasets.
MNIST decreases from $\approx 0.062$ to $\approx 0.015$.
FMNIST shows a similar smooth trend.

\begin{figure*}[t]
    \centering
    \begin{minipage}{0.45\textwidth}
        \centering
        \includegraphics[width=\textwidth]{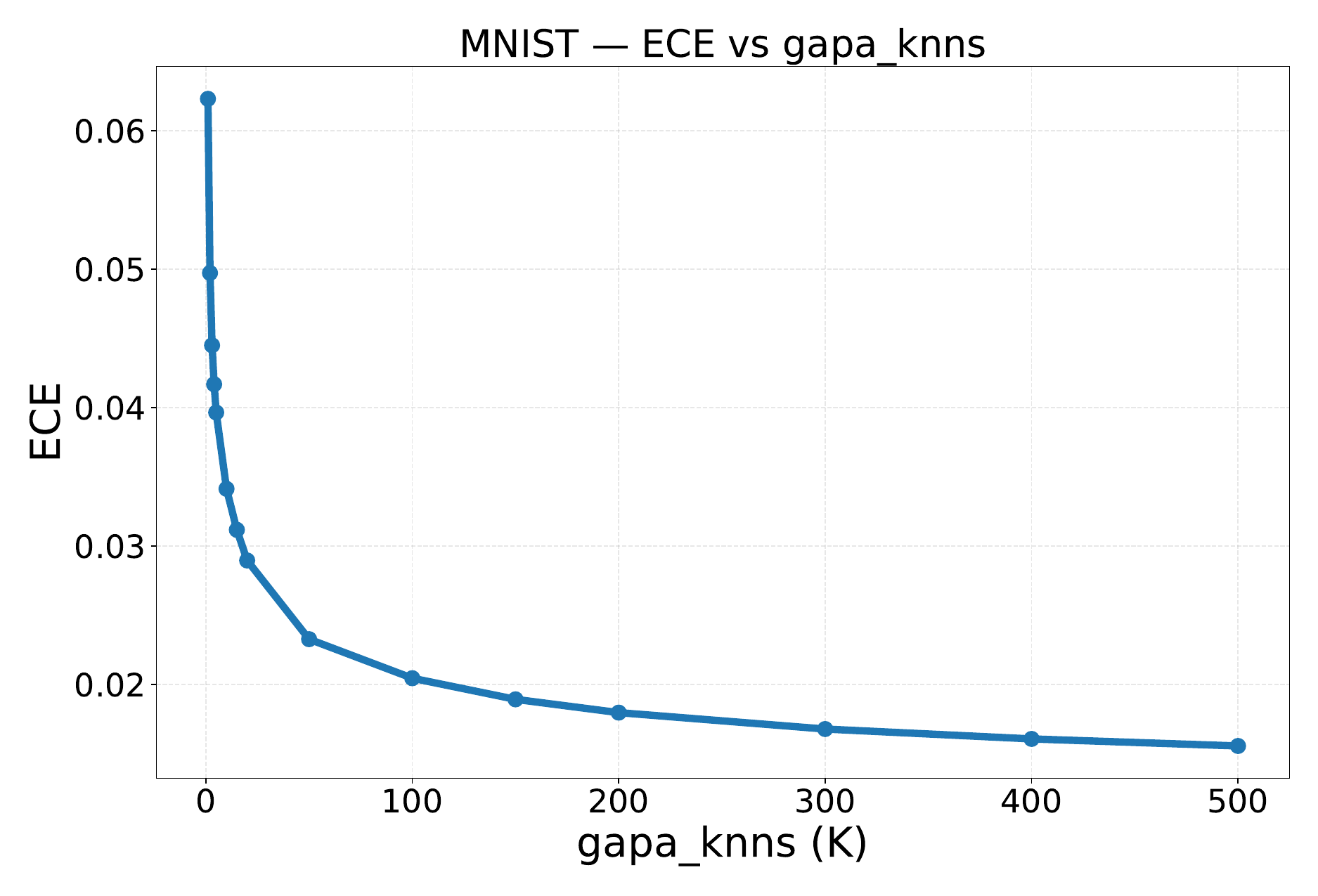}
        \caption{MNIST ECE vs.\ $K$.}
        \label{fig:knn_mnist_ece}
    \end{minipage}
    \hfill
    \begin{minipage}{0.45\textwidth}
        \centering
        \includegraphics[width=\textwidth]{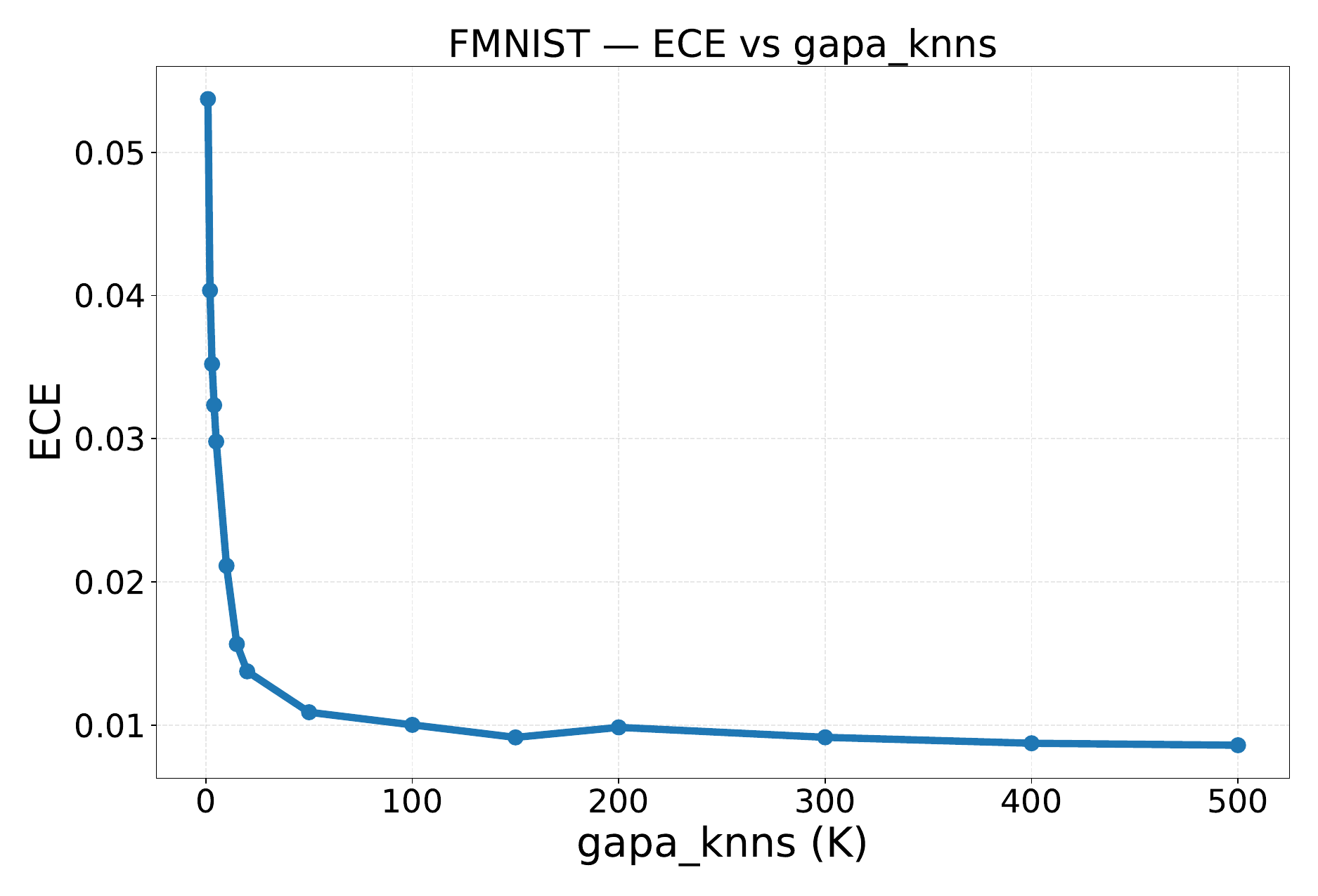}
        \caption{FMNIST ECE vs.\ $K$.}
        \label{fig:knn_fmnist_ece}
    \end{minipage}
\end{figure*}

\vspace{1em}
\paragraph{OOD-AUC.}
OOD detection improves slightly with $K$.
MNIST increases from $0.950$ (K{=}1) to $0.963$ (K{=}500).
FMNIST improves up to $K{\approx}50$, then plateaus or slightly degrades for
very large $K$ due to over-smoothing.

\begin{figure*}[t]
    \centering
    \begin{minipage}{0.45\textwidth}
        \centering
        \includegraphics[width=\textwidth]{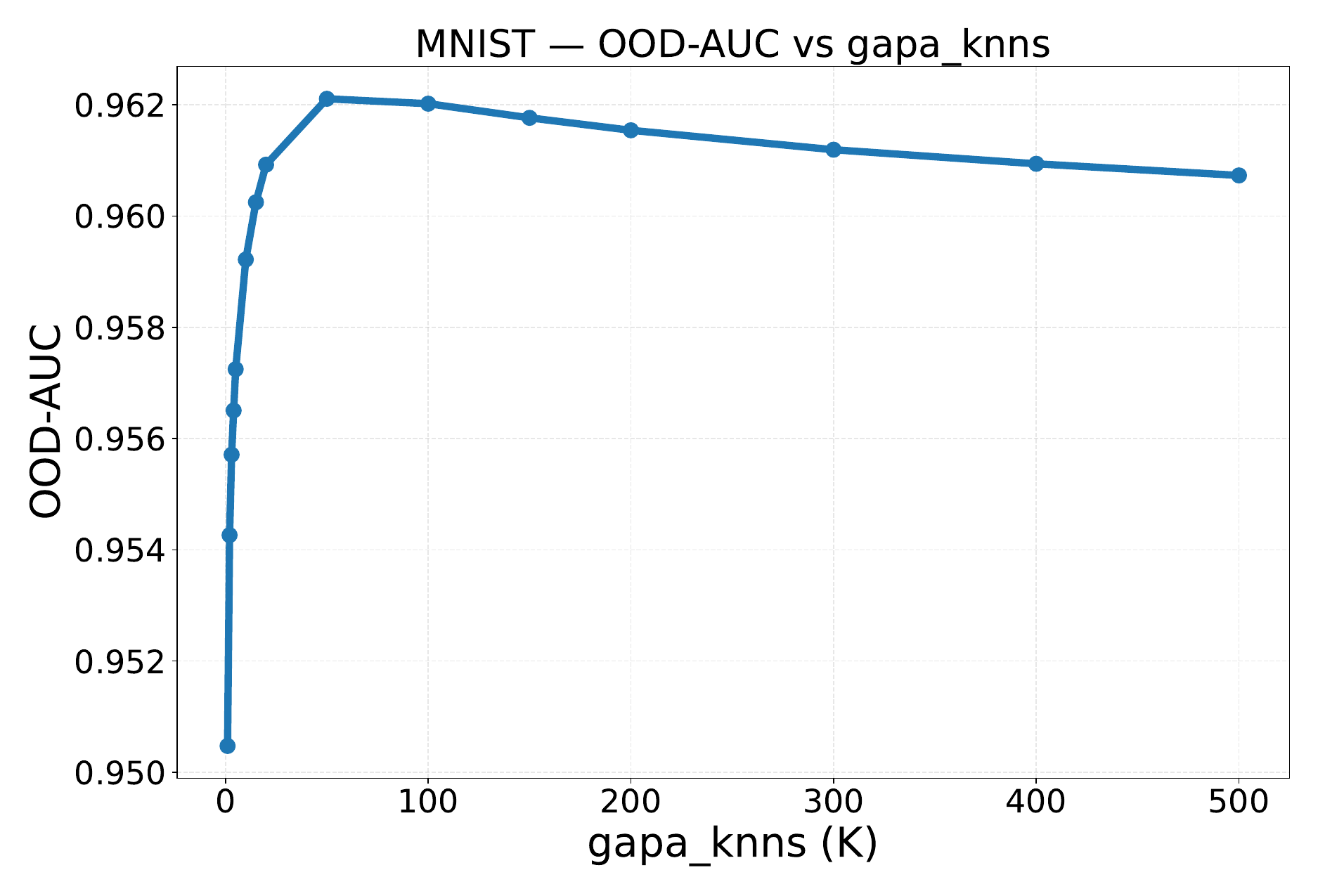}
        \caption{MNIST OOD-AUC vs.\ $K$.}
        \label{fig:knn_mnist_oodauc}
    \end{minipage}
    \hfill
    \begin{minipage}{0.45\textwidth}
        \centering
        \includegraphics[width=\textwidth]{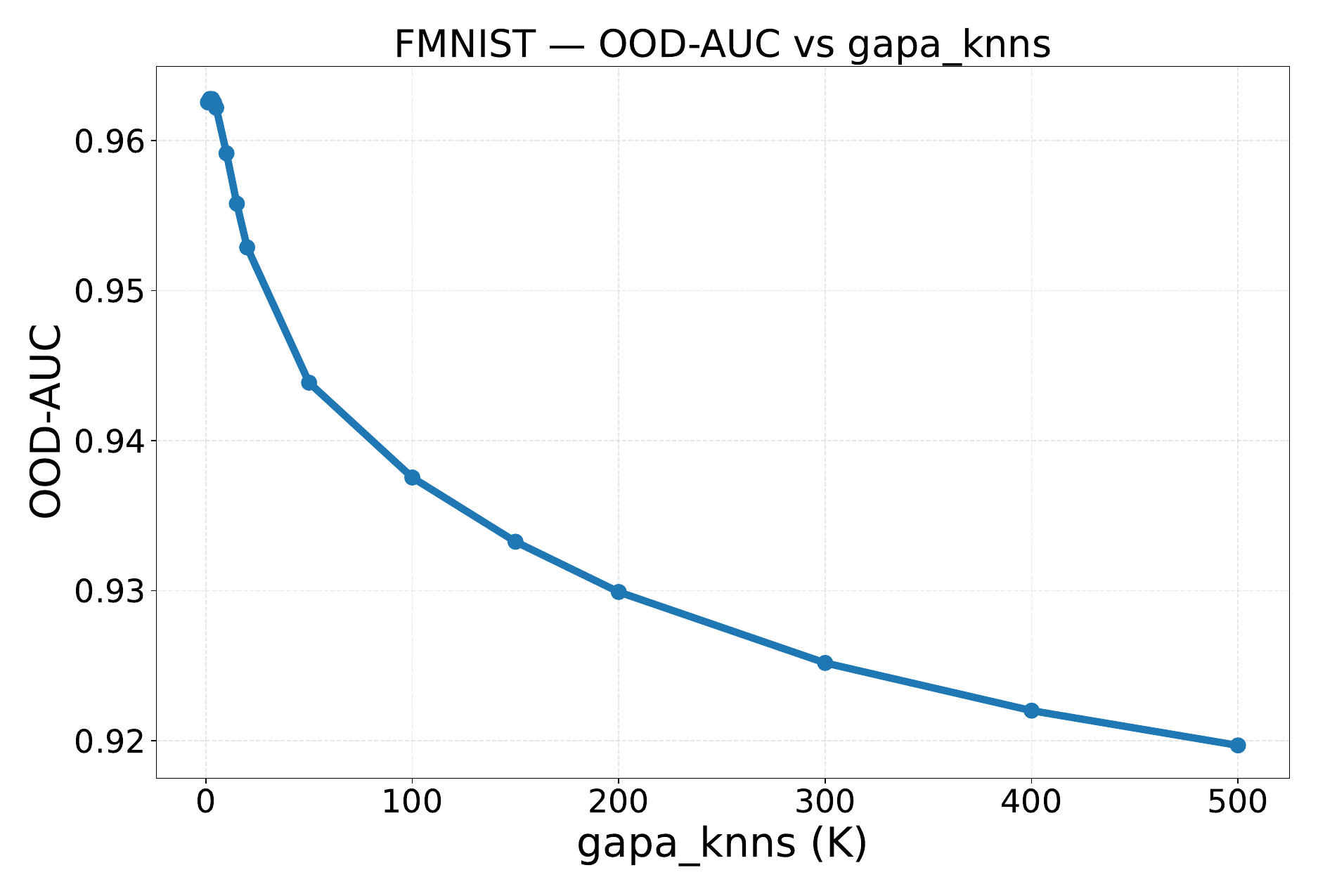}
        \caption{FMNIST OOD-AUC vs.\ $K$.}
        \label{fig:knn_fmnist_oodauc}
    \end{minipage}
\end{figure*}

\vspace{1em}
\paragraph{OOD BALD.}
Epistemic sensitivity improves steadily for both datasets, with consistent behaviour
across the entire sweep.

\begin{figure*}[t]
    \centering
    \begin{minipage}{0.45\textwidth}
        \centering
        \includegraphics[width=\textwidth]{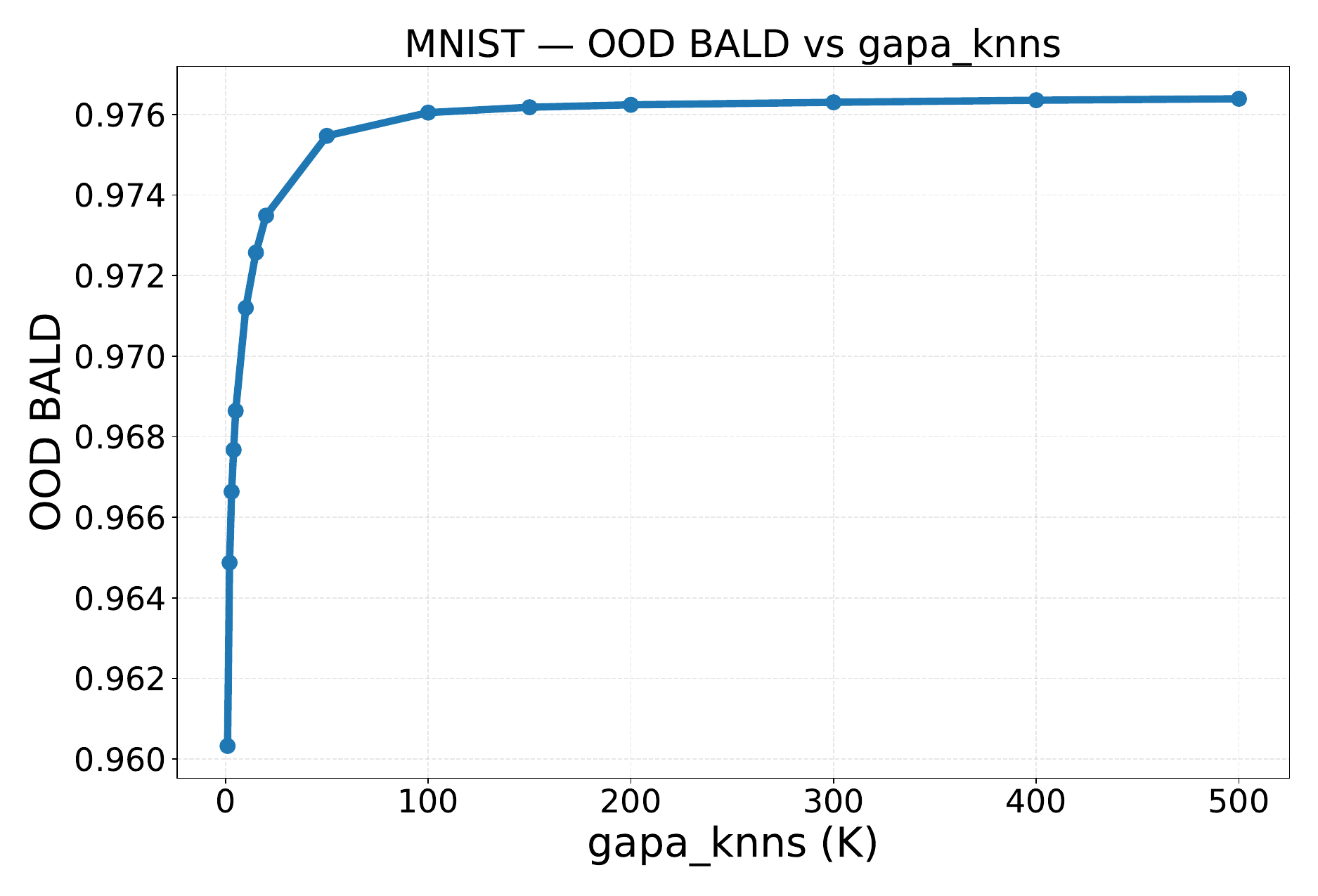}
        \caption{MNIST OOD-BALD vs.\ $K$.}
        \label{fig:knn_mnist_bald}
    \end{minipage}
    \hfill
    \begin{minipage}{0.45\textwidth}
        \centering
        \includegraphics[width=\textwidth]{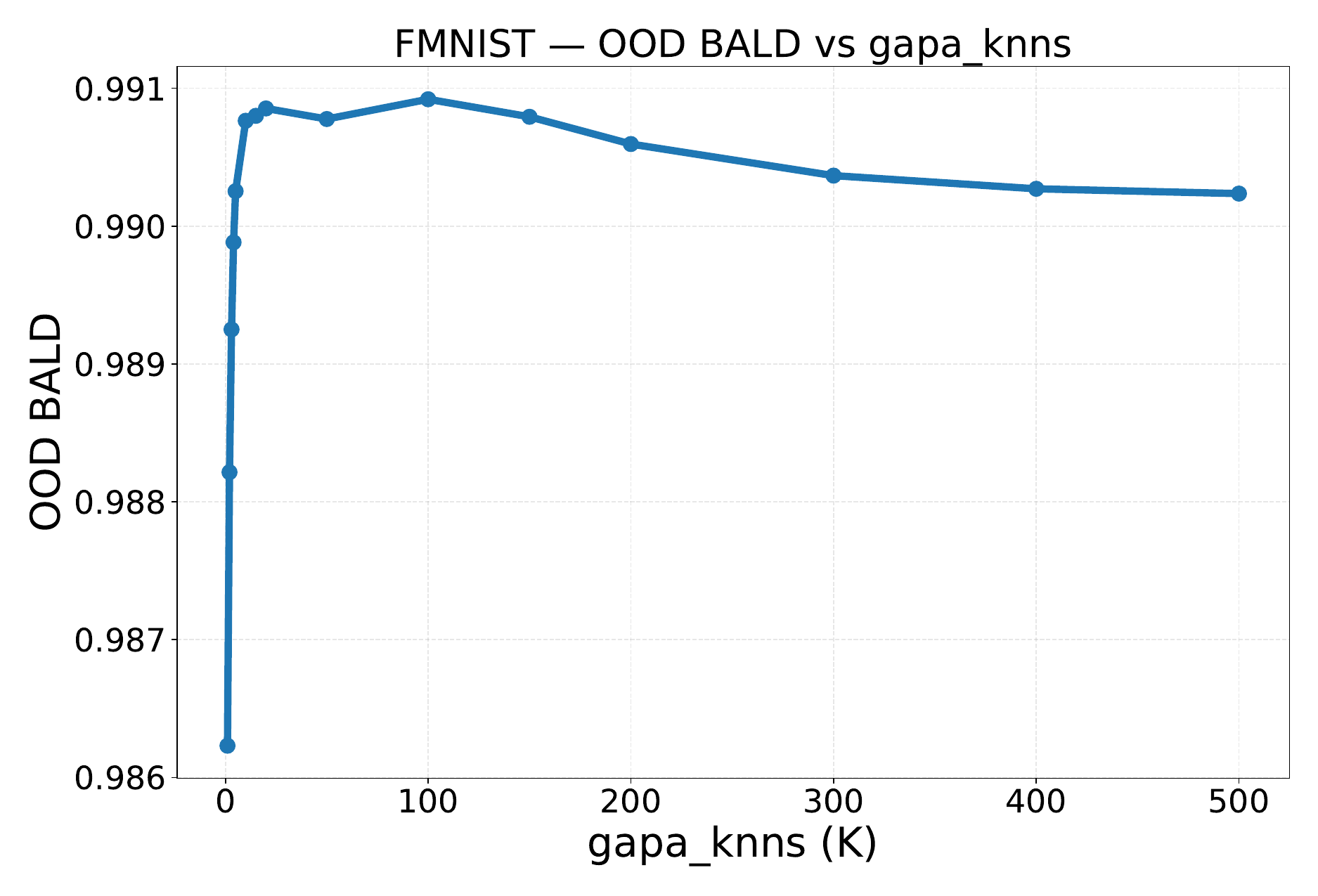}
        \caption{FMNIST OOD-BALD vs.\ $K$.}
        \label{fig:knn_fmnist_bald}
    \end{minipage}
\end{figure*}

\vspace{1em}
\paragraph{Test-time cost.}
Test-time increases roughly linearly with $K$ for both datasets.
For MNIST, inference grows from $\approx 2.1$ms to $\approx 16$ms.
FMNIST follows the same scaling pattern.

\begin{figure*}[t]
    \centering
    \begin{minipage}{0.45\textwidth}
        \centering
        \includegraphics[width=\textwidth]{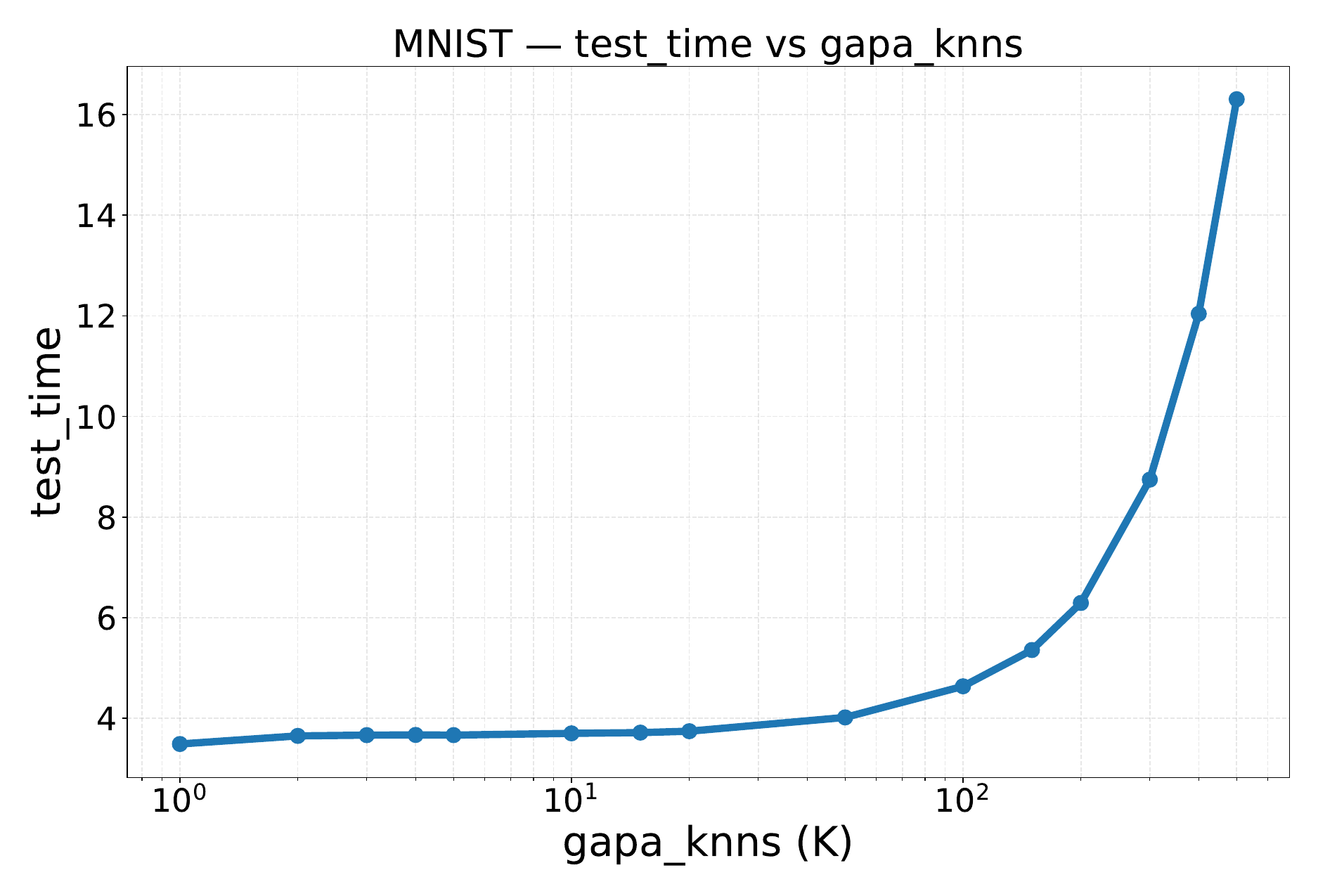}
        \caption{MNIST test time vs.\ $K$.}
        \label{fig:knn_mnist_testtime}
    \end{minipage}
    \hfill
    \begin{minipage}{0.45\textwidth}
        \centering
        \includegraphics[width=\textwidth]{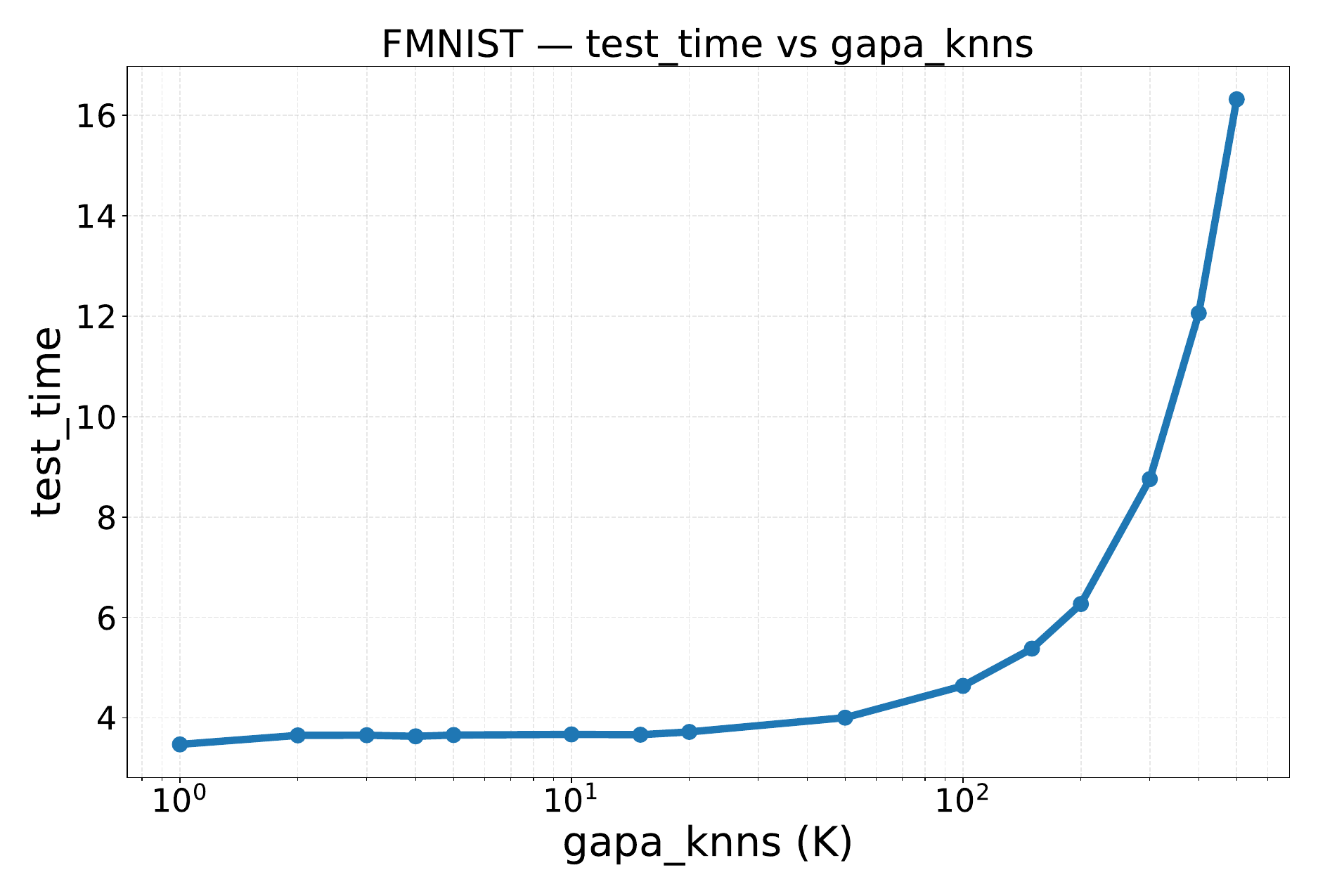}
        \caption{FMNIST test time vs.\ $K$.}
        \label{fig:knn_fmnist_testtime}
    \end{minipage}
\end{figure*}

\vspace{1em}
\paragraph{Takeaway.}
These experiments show:
\begin{itemize}
    \item \textbf{1-NN is already stable and competitive}, especially for OOD detection.
    \item Increasing $K$ to 20–50 provides clear gains in calibration and NLL.
    \item Very large $K$ has diminishing returns and incurs high compute cost.
\end{itemize}

Overall, the full sweep confirms that the KNN GAPA approximation is
\textbf{robust, stable, and effective}, and that GAPA behaves predictably across the
entire KNN range.

\section{Extended Related Work}
\label{sec:related_extended}

Uncertainty quantification (UQ) in deep learning differs primarily in \emph{where} uncertainty is placed
(e.g., weights, outputs, or representations) and in the resulting training and test-time costs.
We focus on the \emph{post-hoc} regime for \emph{frozen pretrained backbones}, where retraining,
multiple test-time samples, or full-network second-order computation can be infeasible.

\paragraph{Weight-space Bayesianization and Laplace.}
Weight-space Bayesian methods model uncertainty directly over parameters, typically requiring either
approximate inference during training or posterior sampling at test time.
Laplace approximations instead fit a local Gaussian posterior around a trained solution using curvature
information, and recent work has revisited Laplace as a competitive and practical Bayesian deep learning
baseline, including analyses of when structured approximations are necessary for scalability and fidelity
\citep{blundell2015weight, mackay1992practical, daxberger2021laplace, ortega2023variational, deng2022accelerated}. Closely related, stochastic weight-space approximations such as SWAG provide inexpensive posterior samples from SGD trajectories and often serve as strong uncertainty baselines without changing
the underlying architecture \citep{maddox2019simple}.

\paragraph{Last-layer and single-pass post-hoc methods.}
A popular compromise Bayesianizes only the final layer(s) while keeping the feature extractor frozen,
yielding post-hoc uncertainty at substantially reduced cost.
Beyond Laplace-style last-layer treatments, deterministic variational formulations such as variational
Bayesian last layers (VBLL) aim to deliver \emph{single-pass} predictive uncertainty for frozen-backbone
models \citep{harrison2024variational}. These approaches are strong deployment-friendly baselines, but they concentrate Bayesian
modeling at the head and do not in general propagate mean-preserving uncertainty through intermediate
computations of the frozen network.

\paragraph{Sampling-based baselines.}
Deep ensembles approximate epistemic uncertainty through multiple independently trained models~\citep{lakshminarayanan2017simple},
while MC Dropout relies on multiple stochastic forward passes at inference~\citep{gal2016dropout}. These are
effective but often incompatible with strict single-pass deployment constraints.

\paragraph{Representation-based and calibration baselines.}
Distance-/density-aware approaches estimate uncertainty from representations, e.g., spectral-normalized GP-style
heads (SNGP)~\citep{liu2020simple} or density-based uncertainty on deep features (DDU)~\citep{mukhoti2023deep}.
Calibration-only post-processing such as temperature scaling can improve confidence calibration but does not model
epistemic uncertainty~\citep{guo2017calibration}.

\paragraph{Gaussian processes and function-space views.}
Gaussian processes (GPs) provide a classical function-space approach to uncertainty and connect naturally to
infinite-width neural networks: fully-connected nets converge to an NNGP prior~\citep{lee2017deep}, and their
infinite-width training dynamics are characterized by the NTK~\citep{jacot2018neural}. For scalability, GP/kernel
Inference is commonly accelerated via low-rank approximations such as inducing points~\citep{titsias2009variational}
and Nyström methods~\citep{williams2000using}, while local GP approximations and noisy-input corrections (e.g., NIGP)
provide efficient variance adjustments under locality or input uncertainty~\citep{gramacy2015local, mchutchon2011Gaussian}.
GAPA builds on these function-space ideas but applies them \emph{inside} frozen networks by placing uncertainty in activation space, preserving the pretrained mean and enabling deterministic single-pass inference.

\paragraph{Summary.}
Overall, existing approaches trade off between (i) retraining or multi-sample inference, (ii) post-hoc last-layer
approximations that concentrate uncertainty at the head, or (iii) representation-based proxies. These gaps motivate
mean-preserving, post-hoc activation-space uncertainty with deterministic single-pass inference.

\end{document}